  \pgfplotsset{compat=newest}
  \newcommand{\eg}{\textit{e.g.}}
  \newcommand{\ie}{\textit{i.e.}}
  \newcommand{\cf}{\textit{cf.}}
  \newcommand{\T}{^\mathsf{T}}
  \newcommand{\tr}{\mathrm{Tr}}
  \newcommand{\R}{\mathbb{R}}
  \providecommand{\norm}[1]{\lVert#1\rVert}
  \providecommand{\innerp}[1]{\langle#1\rangle}
  \providecommand{\op}[1]{\mathcal{#1}}
  \providecommand{\Fourier}{\mathscr{F}}
  \newcommand{\N}{\mathcal{N}}
  \newcommand{\GP}{\mathcal{GP}}
  \newcommand{\dd}{\,\mathrm{d}}
  \renewcommand{\O}{\mathcal{O}}
  \providecommand{\imag}{\mathrm{i}}
  \newcommand{\mbf}[1]{\mathbf{#1}}
  \newcommand{\vect}[1]{\mbf{#1}}
  \newcommand{\vectb}[1]{\mathbf{#1}}
  \newtheorem{mytheorem}{Theorem}
  \newtheorem{mylemma}[mytheorem]{Lemma}
  \newtheorem{mycorollary}[mytheorem]{Corollary}
  \newtheorem{myremark}[mytheorem]{Remark}
\begin{document}

\title{Hilbert Space Methods for Reduced-Rank \\ Gaussian Process Regression}

\author{Arno Solin         \and
        Simo S\"arkk\"a
}

\institute{A.\ Solin \at
              Department of Computer Science  \\
              Aalto University \\
              P.O.\ Box 15400,  
              FI-00076 Aalto, Finland \\
              Tel.: +358-40-5776226\\
              \email{arno.solin@aalto.fi} 
           \and
           S.\ S\"arkk\"a \at
              Department of Electrical Engineering and Automation \\
              Aalto University
}

\date{~}

\maketitle

\begin{abstract}%
This paper proposes a novel scheme for reduced-rank Gaussian process regression. The method is based on an approximate series expansion of the covariance function in terms of an eigenfunction expansion of the Laplace operator in a compact subset of $\R^d$. On this  approximate eigenbasis the eigenvalues of the covariance function can be expressed as simple functions of the spectral density of the Gaussian process, which allows the GP inference to be solved under a computational cost scaling as $\O(nm^2)$ (initial) and $\O(m^3)$ (hyperparameter learning) with $m$ basis functions and $n$ data points. Furthermore, the basis functions are independent of the parameters of the covariance function, which allows for very fast hyperparameter learning. The approach also allows for rigorous error analysis with Hilbert space theory, and we show that the approximation becomes exact when the size of the compact subset and the number of eigenfunctions go to infinity. We also show that the convergence rate of the truncation error is independent of the input dimensionality provided that the differentiability order of the covariance function is increases appropriately, and for the squared exponential covariance function it is always bounded by ${\sim}1/m$ regardless of the input dimensionality. The expansion generalizes to Hilbert spaces with an inner product which is defined as an integral over a specified input density. The method is compared to previously proposed methods theoretically and through empirical tests with simulated and real data. \sloppy
\keywords{  Gaussian process regression \and
  Laplace operator \and
  eigenfunction expansion \and
  pseudo-differential operator \and
  reduced-rank approximation}
\end{abstract}

\section{Introduction}
Gaussian processes \citep[GPs,][]{Rasmussen+Williams:2006} are powerful tools for non-parametric Bayesian inference and learning. In GP regression the model functions $f(\vect{x})$ are assumed to be realizations from a Gaussian random process prior with a given covariance function $k(\vect{x},\vect{x}')$, and learning amounts to solving the posterior process given a set of noisy measurements $y_1,y_2,\dots,y_n$ at some given test inputs. This model is often written in the form \sloppy
\begin{equation}
\begin{split}
    f &\sim \GP(0,k(\vect{x},\vect{x}')), \\
  y_i &= f(\vect{x}_i) + \varepsilon_i,
\end{split}
\end{equation}
where $\varepsilon_i \sim \N(0,\sigma_\mathrm{n}^2)$, for $i=1,2,\ldots,n$.
One of the main limitations of GPs in machine learning is the computational and memory requirements that scale as $\O(n^3)$ and $\O(n^2)$ in a direct implementation. This limits the applicability of GPs when the number of training samples $n$ grows large. The computational requirements arise because in solving the GP regression problem we need to invert the $n \times n$ Gram matrix $\vect{K} + \sigma_\mathrm{n}^2\vect{I}$, where $\vect{K}_{ij} = k(\vect{x}_i,\vect{x}_j)$, which is an $\O(n^3)$ operation in general.

To overcome this problem, over the years, several schemes have been proposed. They typically reduce the storage requirements to $\O(nm)$ and complexity to $\O(nm^2)$, where $m < n$. Some early methods have been reviewed in \citet{Rasmussen+Williams:2006}, and \citet{Quinonero-Candela+Rasmussen:2005} provide a unifying view on several methods. From a spectral point of view, several of these methods (\eg, SOR, DTC, VAR, FIC) can be interpreted as modifications to the so-called \emph{Nystr{\"o}m method} \citep[see][]{Baker:1977,Williams+Seeger:2001}, a scheme for approximating the eigenspectrum. 

For stationary covariance functions the spectral density of the covariance function can be employed: in this context the spectral approach has mainly been considered in regular grids, as this allows for the use of FFT-based methods for fast solutions \mbox{\citep[see][]{Paciorek:2007, Fritz+Neuweiler+Nowak:2009}}, and more recently in terms of converting GPs to state space models \citep{Sarkka+Hartikainen:2012, Sarkka+Solin+Hartikainen:2013}. Recently, \citet{Lazaro-Gredilla+Quinonero-Candela+Rasmussen+Figueiras-Vidal:2010} proposed a sparse spectrum method where a randomly chosen set of spectral points span a trigonometric basis for the problem.

The methods proposed in this article fall into the class of methods called reduced-rank approximations \citep[see, \eg,][Ch.~8]{Rasmussen+Williams:2006} which are based on approximating the Gram matrix $\vect{K}$ with a matrix $\tilde{\vect{K}}$ with a smaller rank $m < n$. This allows for the use of matrix inversion lemma (Woodbury formula) to speed up the computations. It is well-known that the optimal reduced-rank approximation of the Gram (covariance) matrix $\vect{K}$ with respect to the Frobenius norm is $\tilde{\vect{K}} = \vectb{\Phi}\vectb{\Lambda}\vectb{\Phi}\T$, where $\vectb{\Lambda}$ is a diagonal matrix of the leading $m$ eigenvalues of $\vect{K}$ and $\vectb{\Phi}$ is the matrix of the corresponding orthonormal eigenvectors \citep[][Ch.~8]{Golub+VanLoan:1996, Rasmussen+Williams:2006}. Yet, as computing the eigendecomposition is an $\O(n^3)$ operation, this provides no remedy as such.

In this work we propose a novel method for obtaining approximate eigendecompositions of covariance functions in terms of an eigenfunction expansion of the Laplace operator in a compact subset of $\R^d$. The method is based on interpreting the covariance function as the kernel of a pseudo-differential operator \citep{Shubin:1987} and approximating it using Hilbert space methods \citep{Courant+Hilbert:2008,Showalter:2010}. This results in a reduced-rank approximation for the covariance function, where the basis functions are independent of the covariance functions and its parameters. We also show that the approximation converges to the exact solution in well-defined conditions, analyze its convergence rate, and provide theoretical and experimental comparisons to existing state-of-the-art methods. This path has not been explored in GP regression context before, although the approach is related to the Fourier feature methods \citep{Hensman+Durrande+Solin:2018} and stochastic partial differential equation based methods recently introduced to spatial statistics and GP regression \citep{Lindgren+Rue+Lindstrom:2011, Sarkka+Hartikainen:2012, Sarkka+Solin+Hartikainen:2013} as well as to classical works in the spectral representations of stochastic processes \citep{Loeve:1963,Van-Trees:1968,Adler:1981,Cramer+Leadbetter:1967} and spline interpolation \citep{Wahba:1978,Kimeldorf+Wahba:1970,Wahba:1990}. Recently, the scalable eigendecomposition approach has also been tackled by various structure exploiting methods \citep[building on the work by][]{Wilson+Nickisch:2015} and extended to methods exploiting GPU computations.

This paper is structured as follows: In Section~\ref{sec:approximating-the-covariance-function} we derive the approximative series expansion of the covariance functions. Section~\ref{sec:application-of-the-method-to-gp-regression} is dedicated to applying the approximation scheme to GP regression and providing details of the computational benefits. We provide a detailed analysis of the convergence of the method in Section~\ref{sec:convergence-analysis}. Section~\ref{sec:relationship-to-other-methods} and \ref{sec:experiments} provide comparisons to existing methods, the former from a more theoretical point of view, whereas the latter contains examples and comparative evaluation on several datasets. Finally the properties of the method are summarized and discussed in Section~\ref{sec:conclusion-and-discussion}.

\section{Approximating the Covariance Function}
\label{sec:approximating-the-covariance-function}
In this section, we start by stating the assumptions and properties of the class of covariance functions that we are considering, and show how a homogenous covariance function can be considered as a pseudo-differential operator constructed as a series of Laplace operators. Then we show how the pseudo-differential operators can be approximated with Hilbert space methods on compact subsets of $\R^d$ or via inner products with integrable weight functions, and discuss connections to Sturm--Liouville theory.

\subsection{Spectral Densities of Homogeneous and Isotropic Gaussian Processes}
In this work it is assumed that the covariance function is homogeneous (stationary), which means that the covariance function $k(\vect{x},\vect{x}')$ is actually a function of $\vect{r} = \vect{x} - \vect{x}'$ only. This means that the covariance structure of the model function $f(\vect{x})$ is the same regardless of the absolute position in the input space \citep[\cf][Ch.~4]{Rasmussen+Williams:2006}.
In this case the covariance function can be equivalently represented in terms of the spectral density. This results from the \emph{Bochner's theorem} \citep[see, \eg,][]{Akhiezer+Glazman:1993, DaPrato:1992} which states that a bounded continuous positive definite function $k(\vect{r})$ can be represented as
\begin{equation}
  k(\vect{r}) = {1  \over (2\pi)^{d}} \int 
       \exp\left(\imag \, \vectb{\omega} \T \vect{r}\right) \, \mu(\mathrm{d} \vectb{\omega}),
\end{equation}
where $\mu$ is a positive measure.

If the measure $\mu(\vectb{\omega})$ has a density, it is called the \emph{spectral density} $S(\vectb{\omega})$ corresponding to the covariance function $k(\vect{r})$. This gives rise to the Fourier duality of covariance and spectral density, which is known as the \emph{Wiener--Khintchin theorem} \citep[][Ch.~4]{Rasmussen+Williams:2006}, giving the identities
\begin{equation} \label{eq:duality}
\begin{split}
  k(\vect{r}) &= {1 \over (2\pi)^d} \int S(\vectb{\omega}) \,
       e^{ \imag \, \vectb{\omega} \T \vect{r}} \dd \vectb{\omega},
  \\
  S(\vectb{\omega}) &= \int k(\vect{r}) \,
       e^{-\imag \, \vectb{\omega} \T \vect{r}} \dd \vect{r}.
\end{split}
\end{equation}
From these identities it is easy to see that if the covariance function is \emph{isotropic}, that is, it only depends on the Euclidean norm $\norm{\vect{r}}$ such that $k(\vect{r}) \triangleq k(\norm{\vect{r}})$, then the spectral density will also only depend on the norm of $\vectb{\omega}$ such that we can write $S(\vectb{\omega}) \triangleq S(\norm{\vectb{\omega}})$. In the following we assume that the considered covariance functions are indeed isotropic, but the approach can be generalized to more general homogenous covariance functions.

\subsection{The Covariance Operator As a Pseudo-Differential Operator}
Associated to each covariance function $k(\vect{x},\vect{x}')$ we can
also define a covariance operator $\op{K}$ as follows:
\begin{equation} \label{eq:kop}
  \op{K} \, \phi
  = \int k(\cdot,\vect{x}') \, \phi(\vect{x}') \dd\vect{x}'.
\end{equation}
Note that because the covariance function is homogeneous, this can also be written as a convolution. As we show in the next section, this interpretation allows us to approximate the covariance operator using Hilbert space methods which are typically used for approximating differential and pseudo-differential operators in the context of partial differential equations \citep{Showalter:2010}. When the covariance function is homogenous, the corresponding operator will be translation invariant thus allowing for Fourier-representation as a transfer function. This transfer function is just the spectral density of the Gaussian process.

Consider an isotropic covariance function $k(\vect{x},\vect{x}') \triangleq k(\norm{\vect{r}})$ (recall that $\norm{\cdot}$ denotes the Euclidean norm). The spectral density of the Gaussian process and thus the transfer function corresponding to the covariance operator will now have the form $S(\norm{\vectb{\omega}})$. We can 
formally write it as a function of $\norm{\vectb{\omega}}^2$ such that 
\begin{equation}
  S(\norm{\vectb{\omega}}) = \psi(\norm{\vectb{\omega}}^2).
\end{equation}
Assume that the spectral density $S(\cdot)$ and hence $\psi(\cdot)$ have the following polynomial expansion:
\begin{equation}
  \psi(\norm{\vectb{\omega}}^2) = 
    a_0 + 
    a_1 \norm{\vectb{\omega}}^2 + 
    a_2 (\norm{\vectb{\omega}}^2)^2 + 
    a_3 (\norm{\vectb{\omega}}^2)^3 + \cdots.
\end{equation}
This can be ensured, for example, by requiring that $\psi(\cdot)$ is an analytic function. Thus we also have
\begin{equation} \label{eq:polynomial-expansion}
  S(\norm{\vectb{\omega}}) =
    a_0 + 
    a_1 \norm{\vectb{\omega}}^2 + 
    a_2 (\norm{\vectb{\omega}}^2)^2 + 
    a_3 (\norm{\vectb{\omega}}^2)^3 + \cdots.
\end{equation}
Recall that the transfer function corresponding to the Laplace operator
$\nabla^2$ is $-\norm{\vectb{\omega}}^2$ in the sense that for a regular enough function $f$ we have
\begin{equation} 
  \Fourier[\nabla^2 f](\vectb{\omega}) = -\norm{\vectb{\omega}}^2
  \Fourier[f](\vectb{\omega}),
\end{equation} 
where $\Fourier[\cdot]$ denotes the Fourier transform of its argument. If we take the inverse Fourier transform of \eqref{eq:polynomial-expansion}, we get the following representation for the covariance operator $\op{K}$, which defines a pseudo-differential operator \citep{Shubin:1987} as a formal series of Laplace operators:
\begin{equation} \label{eq:psi_series}
  \op{K} = 
    a_0 + 
    a_1 (-\nabla^2) + 
    a_2 (-\nabla^2)^2 + 
    a_3 (-\nabla^2)^3 + \cdots.
\end{equation}
In the next section we will use this representation to form a series expansion
approximation for the covariance function.

\subsection{Hilbert-Space Approximation of the Covariance Operator}
We will now form a Hilbert-space approximation for the pseudo-differential operator defined by \eqref{eq:psi_series}.  Let $\Omega \subset \R^d$ be a compact set, and consider the eigenvalue problem for the Laplace operators with Dirichlet boundary conditions (we could use other boundary conditions as well):
\begin{equation}
\begin{cases}
  -\nabla^2 \phi_j(\vect{x}) = \lambda_j \, \phi_j(\vect{x}), \quad%
  & \vect{x} \in \Omega, \\
  \phantom{-\nabla^2 } %
            \phi_j(\vect{x}) = 0, \quad %
  & \vect{x} \in \partial \Omega.
\end{cases}
\label{eq:eigenf_eqs}
\end{equation}
Let us now assume that we have selected $\partial \Omega$ to be sufficiently smooth, for example, a hypercube or hypersphere, so that the eigenfunctions and eigenvalues exist. Because $-\nabla^2$ is a positive definite Hermitian operator, the set of eigenfunctions $\phi_j(\cdot)$ is orthonormal with respect to the inner product
\begin{equation}
  \innerp{f,g} = \int_\Omega f(\vect{x})\,g(\vect{x})\dd \vect{x}
\end{equation}
that is,
\begin{equation}
  \int_\Omega \phi_i(\vect{x})\,\phi_j(\vect{x})\dd \vect{x} = \delta_{ij},
\end{equation}
and all the eigenvalues $\lambda_j$ are real and positive. The negative Laplace operator can then be assigned the formal kernel
\begin{equation} \label{eq:kernel}
  l(\vect{x},\vect{x}') = \sum_j \lambda_j \, \phi_j(\vect{x}) \, \phi_j(\vect{x}')
\end{equation}
in the sense that
\begin{equation} 
  -\nabla^2 f(\vect{x}) = \int l(\vect{x},\vect{x}') \, f(\vect{x}') \dd \vect{x}',
\end{equation}
for sufficiently (weakly) differentiable functions $f$ in the domain $\Omega$ assuming Dirichlet boundary conditions.

If we consider the formal powers of this representation, due to orthonormality of the basis, we can write the arbitrary operator power $s = 1,2,\ldots$ of the kernel as 
\begin{equation} \label{eq:lpower}
  l^s(\vect{x},\vect{x}') = \sum_j \lambda_j^s\,\phi_j(\vect{x})\,\phi_j(\vect{x}').
\end{equation}
This is again to be interpreted to mean that
\begin{equation} 
  (-\nabla^2)^s f(\vect{x}) = \int l^s(\vect{x},\vect{x}') \, f(\vect{x}') \dd \vect{x}',
\end{equation}
for regular enough functions $f$ and in the current domain with the assumed boundary conditions.

This implies that on the domain $\Omega$, assuming the boundary conditions, we also have
\begin{multline}
  \left[ a_0 + 
    a_1 (-\nabla^2) + 
    a_2 (-\nabla^2)^2 + 
    \cdots \right] f(\vect{x}) \\
  = \int \left[a_0 + a_1 \, l^1(\vect{x},\vect{x}')
          + a_2 \, l^2(\vect{x},\vect{x}')
          + \cdots 
          \right] f(\vect{x}') \dd \vect{x}'.
\end{multline}
The left hand side is just $\op{K} \, f$ via \eqref{eq:psi_series}, on the domain with the boundary conditions, and thus by comparing to \eqref{eq:kop} and using \eqref{eq:lpower} we can conclude that
\begin{equation} \label{eq:kapp0}
\begin{split} 
  k(\vect{x},\vect{x}')
  &\approx a_0 +
  	  a_1 \, l^1(\vect{x},\vect{x}') +
          a_2 \, l^2(\vect{x},\vect{x}') +
          \cdots \\
  &= \sum_j \left[a_0 +
  	 a_1 \, \lambda_j^1 +
          a_2 \, \lambda_j^2 +
          \cdots \right] 
         \phi_j(\vect{x}) \, \phi_j(\vect{x}'),
\end{split} 
\end{equation} 
which is only an approximation to the covariance function due to restriction of the domain to $\Omega$ and the boundary conditions. By letting $\norm{\vectb{\omega}}^2 = \lambda_j$ in \eqref{eq:polynomial-expansion} we now obtain
\begin{equation} 
  S(\sqrt{\lambda_j}) =
    a_0 + 
    a_1 \lambda_j^1 + 
    a_2 \lambda_j^2 + 
    \cdots
\end{equation}
and substituting this into \eqref{eq:kapp0} then leads to the approximation
\begin{equation} \label{eq:approximation}
\begin{split} 
  \boxed{\quad \vphantom{\Bigg|} k(\vect{x},\vect{x}')
    \approx \sum_j S(\sqrt{\lambda_j}) \, \phi_j(\vect{x})\,\phi_j(\vect{x}'), \quad}
\end{split} 
\end{equation} 
where $S(\cdot)$ is the spectral density of the covariance function, $\lambda_j$ is the $j$th eigenvalue and $\phi_j(\cdot)$ the eigenfunction of the Laplace operator in a given domain. These expressions tend to be simple closed-form expressions.

The right hand side of \eqref{eq:approximation} is very easy to evaluate, because it corresponds to evaluating the spectral density at the square roots of the eigenvalues and multiplying them with the eigenfunctions of the Laplace operator. Because the eigenvalues of the Laplace operator are monotonically increasing with $j$ and for bounded covariance functions the spectral density goes to zero fast with higher frequencies, we can expect to obtain a good approximation of the right hand side by retaining only a finite number of terms in the series. However, even with an infinite number of terms this is only an approximation, because we assumed a compact domain with boundary conditions. The approximation can be, though, expected to be good at the input values which are not near the boundary of $\Omega$, where the Laplacian was taken to be zero.

\begin{figure*}[!t]

  \newlength{\figureheight}
  \newlength{\figurewidth}
  \setlength{\figureheight}{.20\textheight}
  \setlength{\figurewidth}{0.99\textwidth}

  \tikzsetnextfilename{tikz-covfun-approx}

  \centering \hspace{-3.0mm} 
  \sffamily \footnotesize
  \input{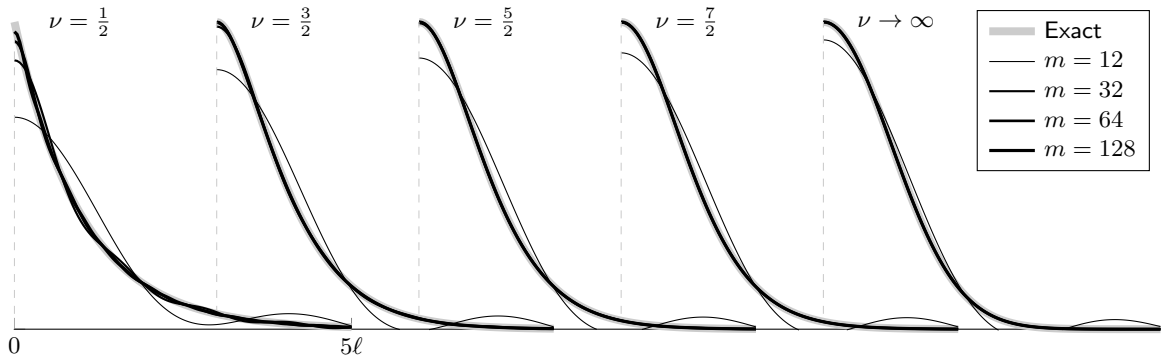}

  \caption{Approximations to covariance functions of the Mat{\'e}rn class
           of various degrees of smoothness; $\nu=1/2$ corresponds to the exponential 
           Ornstein--Uhlenbeck covariance function, and $\nu\to\infty$ to the
           squared exponential (exponentiated quadratic) covariance function.
           Approximations are shown for 12, 32, 64, and 128 eigenfunctions.}
  \label{fig:covfun_approx}
\end{figure*}

As an example, Figure~\ref{fig:covfun_approx} shows Mat{\'e}rn covariance functions of various degrees of smoothness $\nu$ \citep[see, \eg,][Ch.~4]{Rasmussen+Williams:2006} and approximations for different numbers of basis functions in the approximation.  The basis consists of the eigenfunctions of the Laplacian in \eqref{eq:eigenf_eqs} with $\Omega = [-L,L]$ which gives the eigenfunctions $\phi_j(x) = L^{-1/2} \sin(\pi j (x + L)/ (2L))$ and the eigenvalues $\lambda_j = (\pi \, j / (2L))^2$. In the figure we have set $L = 1$ and $\ell = 0.1$. For the squared exponential the approximation is indistinguishable from the exact curve already at $m=12$, whereas the less smooth functions require more terms.

\subsection{Inner Product Point of View} \label{sec:inner_product_point_of_view}
Instead of considering a compact bounded set $\Omega$, we can consider the same approximation in terms of an inner product defined by an input density \citep{Williams+Seeger:2000}.
Let the inner product be defined as
\begin{equation} 
  \innerp{f,g} = \int f(\vect{x})\,g(\vect{x})\,w(\vect{x}) \dd \vect{x},
\end{equation} 
where $w(\vect{x})$ is some positive weight function such that $\int w(\vect{x}) \dd \vect{x} < \infty$. In terms of this inner product, we define the operator 
\begin{equation} 
  \op{K}f = \int k(\cdot,\vect{x}) \, f(\vect{x}) \, w(\vect{x}) \dd \vect{x}.
\end{equation} 
This operator is self-adjoint with respect to the inner product, $\innerp{\op{K}f,g} = \innerp{f,\op{K}g}$, and according to the spectral theorem there exists an orthonormal set of basis functions and positive constants, $\{\varphi_j(\vect{x}), \gamma_j \mid j=1,2,\ldots\}$, that satisfies the eigenvalue equation
\begin{equation} \label{eq:eigd}
  (\op{K} \varphi_j)(\vect{x}) = \gamma_j \, \varphi_j(\vect{x}).
\end{equation}
Thus $k(\vect{x},\vect{x}')$ has the series expansion
\begin{equation} 
  k(\vect{x},\vect{x}') 
    = \sum_j \gamma_j \, \varphi_j(\vect{x}) \, \varphi_j(\vect{x}').
\end{equation} 
Similarly, we also have the \emph{Karhunen--Loeve expansion} for a sample function $f(\vect{x})$ with zero mean and the above covariance function:
\begin{equation} 
  f(\vect{x}) = \sum_j f_j \, \varphi_j(\vect{x}),
\end{equation} 
where $f_j$s are independent zero mean Gaussian random variables with variances $\gamma_j$ \citep[see, \eg,][]{Lenk:1991}.

For the negative Laplacian the corresponding definition is
\begin{equation} \label{eq:lapw}
  \op{D} f = -\nabla^2 [f \, w],
\end{equation} 
which implies
\begin{equation} 
  \innerp{\op{D} f,g}
    = -\int f(\vect{x}) \, w(\vect{x}) \nabla^2[g(\vect{x})\, w(\vect{x})] \dd \vect{x},
\end{equation} 
and the operator defined by \eqref{eq:lapw} can be seen to be self-adjoint. Again, there exists an orthonormal basis $\{ \phi_j(\vect{x}) \mid j=1,2,\ldots \}$ and positive eigenvalues $\lambda_j$ which satisfy the eigenvalue equation 
\begin{equation} 
  (\mathcal{D} \, \phi_j)(\vect{x}) = \lambda_j \, \phi_j(\vect{x}).
\end{equation} 
Thus the kernel of $\mathcal{D}$ has a series expansion similar to Equation~\eqref{eq:kernel} and thus an approximation can be given in the same form as in Equation~\eqref{eq:approximation}. In this case the approximation error comes from approximating the Laplace operator with the more smooth operator,
\begin{equation} 
  \nabla^2 f \approx \nabla^2 [f \, w],
\end{equation} 
which is closely related to assumption of an input density $w(\vect{x})$ for the
Gaussian process. However, when the weight function $w(\cdot)$ is close to constant in the area where the inputs points are located, the approximation is accurate.

\begin{figure*}
  \centering
  \begin{subfigure}[b]{0.3\textwidth}
    \centering
    \includegraphics[width=\textwidth,trim=3mm 3mm 3mm 3mm,clip]{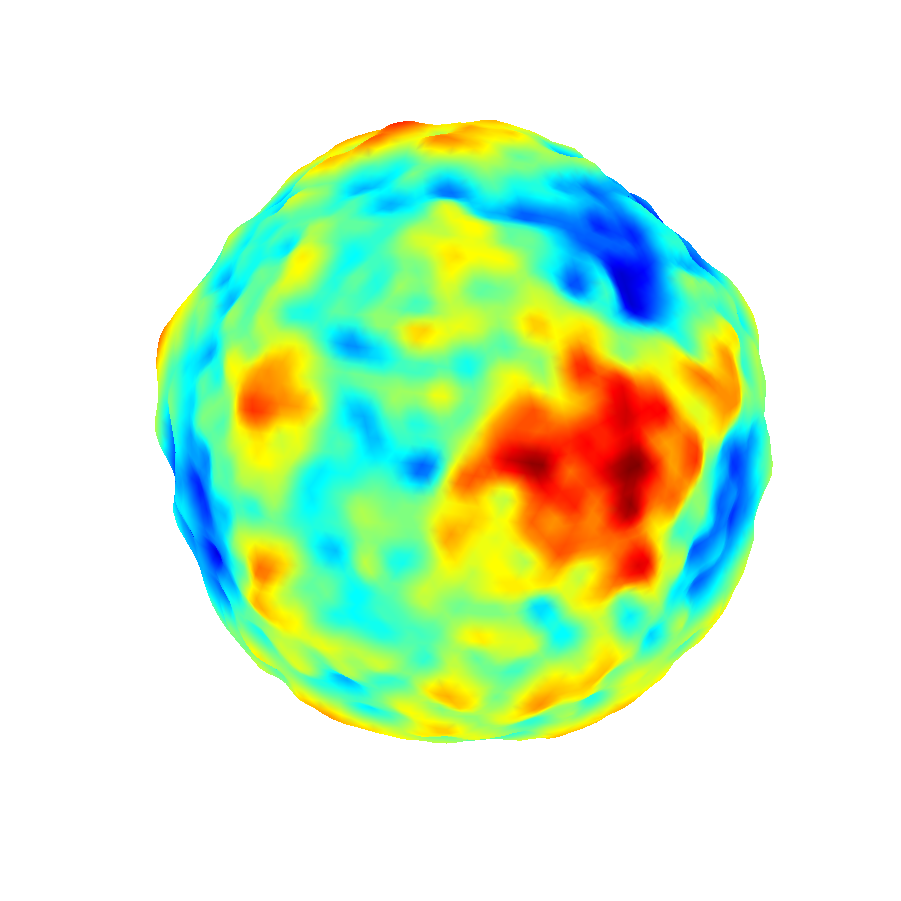}
    \caption{$\nu = \frac{1}{2}$ and $\ell = 0.5$}
  \end{subfigure}%
  ~ %
  \begin{subfigure}[b]{0.3\textwidth}
    \centering
    \includegraphics[width=\textwidth,trim=3mm 3mm 3mm 3mm,clip]{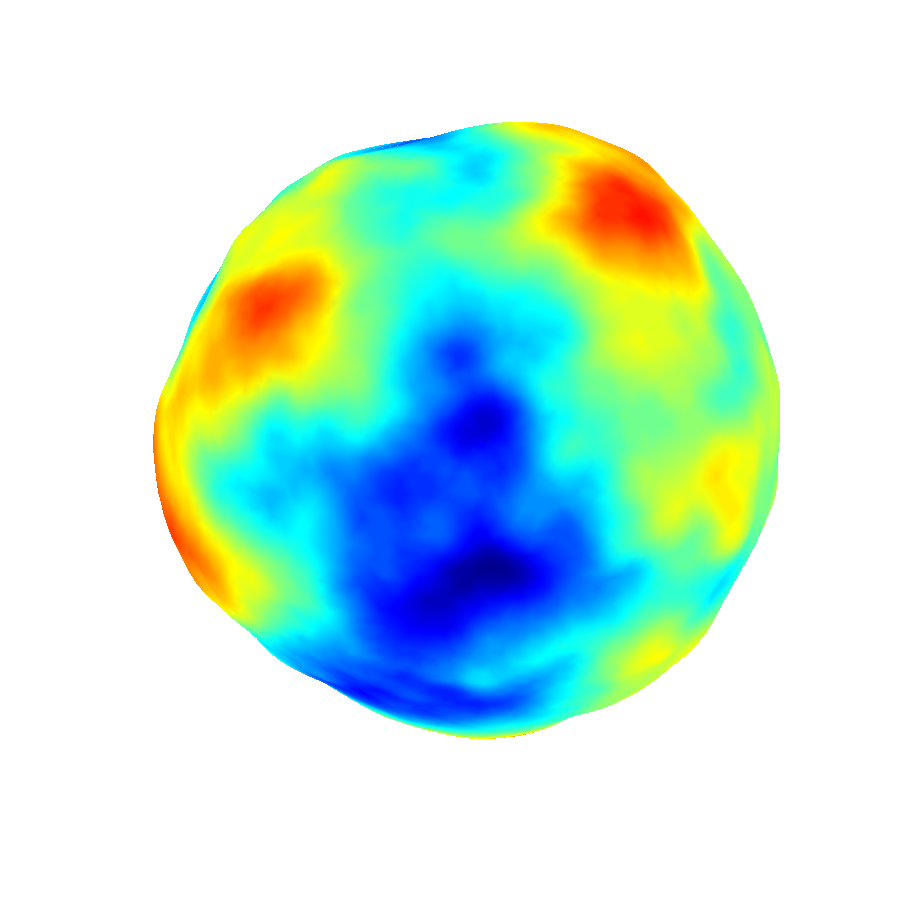}
    \caption{$\nu = \frac{3}{2}$ and $\ell = 0.5$}
  \end{subfigure}%
  ~ %
  \begin{subfigure}[b]{0.3\textwidth}
    \centering
    \includegraphics[width=\textwidth,trim=3mm 3mm 3mm 3mm,clip]{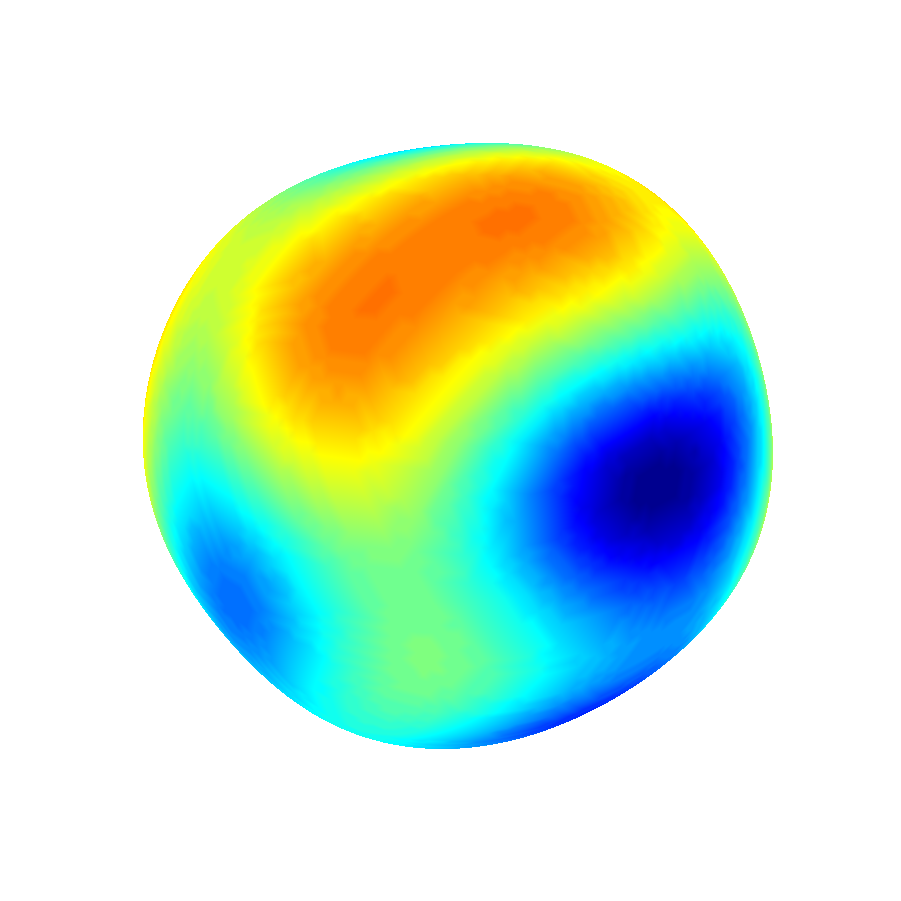}
    \caption{$\nu \to \infty$ and $\ell = 0.5$}
  \end{subfigure}%
  \caption{Approximate random draws of Gaussian processes with the 
           Mat{\'e}rn covariance function on the hull of a unit sphere. 
           The color scale and radius follow the process.}
  \label{fig:spheres}
\end{figure*}

\subsection{Connection to Sturm--Liouville Theory}
The presented methodology is also related to the Sturm--Liouville theory arising in the theory of partial differential equations \citep{Courant+Hilbert:2008}. When the input $x$ is scalar, the eigenvalue problem in Equation~\eqref{eq:eigd} can be written in Sturm--Liouville form as follows:
\begin{equation}
\begin{split}
  &-\frac{\dd}{\dd x} \left[w^2(x) \, \frac{\dd \phi_j(x)}{\dd x} \right] 
  - w(x) \, \frac{\dd^2 w(x)}{\dd x^2} \, \phi_j(x) \\
  &\qquad = \lambda_j \, w(x) \, \phi_j(x).
\end{split}
\end{equation}
The above equation can be solved for $\phi_j(x)$ and $\lambda_j$ using numerical methods for Sturm--Liouville equations. Also note that if we select $w(x) = 1$ in a finite set, we obtain the equation
  $-\dd^2/\dd x^2 \,\phi_j(x) = \lambda_j \, \phi_j(x)$
which is compatible with the results in the previous section.

We consider the case where $\vect{x} \in \R^d$ and $w(\vect{x})$ is symmetric around the origin and thus is only a function of the norm $r = \norm{\vect{x}}$ (\ie\ has the form $w(r)$). The Laplacian in spherical coordinates is
\begin{equation} \label{eq:spherical-laplacian}
  \nabla^2 f
  = \frac{1}{r^{d-1}} \frac{\partial}{\partial r} \left( r^{d-1} \,
     \frac{\partial f}{\partial r} \right)
  + \frac{1}{r^2} \, \Delta_{S^{d-1}} f,
\end{equation}
where $\Delta_{S^{d-1}}$ is the Laplace--Beltrami operator on $S^{d-1}$.  Let us assume that $\phi_j(r,\vectb{\xi}) = h_j(r) \, g(\vectb{\xi})$, where $\vectb{\xi}$ denotes the angular variables. After some algebra, writing the equations into Sturm--Liouville form yields for the radial part
\begin{multline}
  - \frac{\dd}{\dd r} \left( w^2(r) \, r \, \frac{\dd h_j(r)}{\dd r} \right) \\
  - \left( \frac{\dd w(r)}{\dd r} \, w(r) 
  + \frac{\dd^2 w(r)}{\dd r^2} \, w(r) \, r \right) \, h_j(r) \\
  = \lambda_j \, w(r) \, r \, h_j(r),
\end{multline}
and $\Delta_{S^{d-1}} g(\vectb{\xi}) = 0$ for the angular part. The solutions to the angular part are the Laplace's spherical harmonics. Note that if we assume that we have $w(r) = 1$ on some area of finite radius, the first equation becomes (when $d > 1$):
\begin{equation}
  r^2 \, \frac{\dd^2 h_j(r)}{\dd r^2} 
    + r \, \frac{\dd h_j(r)}{\dd r} 
    + r^2 \, \lambda_j \, h_j(r) = 0.
\end{equation}
Figure~\ref{fig:spheres} shows example Gaussian random field draws on a unit sphere, where the basis functions are the Laplace spherical harmonics and the covariance functions of the Mat{\'e}rn class with different degrees of smoothness $\nu$. Our approximation is straight-forward to apply in any domain, where the eigendecomposition of the Laplacian can be formed.

\section{Application of the Method to GP Regression}
\label{sec:application-of-the-method-to-gp-regression}
In this section we show how the approximation \eqref{eq:approximation} can be used in Gaussian process regression. We also write down the expressions needed for hyperparameter learning and discuss the computational requirements of the methods.

\subsection{Gaussian Process Regression}
GP regression is usually formulated as predicting an unknown scalar output $f(\vect{x}_*)$ associated with a known input $\vect{x}_* \in \R^d$, given a training data set $\mathcal{D} = \{(\vect{x}_i, y_i) \mid i=1,2,\ldots,n\}$. The model functions $f$ are assumed to be realizations of a Gaussian random process prior and the observations corrupted by Gaussian noise:
\begin{equation} \label{eq:GP-regression}
\begin{split}
  f &\sim \GP(0, k(\vect{x},\vect{x}')) \\
  y_i &= f(\vect{x}_i) + \varepsilon_i,
\end{split}
\end{equation}
where $\varepsilon_i \sim \N(0,\sigma_\mathrm{n}^2)$. For notational simplicity the functions in the above model are \emph{a~priori} zero mean and the measurement errors are independent Gaussian, but the results of this paper can be easily generalized to arbitrary mean functions and dependent Gaussian errors. The direct solution to the GP regression problem \eqref{eq:GP-regression} gives the predictions $p(f(\vect{x}_*) \mid \mathcal{D}) = \N(f(\vect{x}_*) \mid \mathbb{E}[f(\vect{x}_*)], \mathbb{V}[f(\vect{x}_*)])$. The conditional mean and variance can be computed in closed-form as \citep[see, \eg,][p.~17]{Rasmussen+Williams:2006}
\begin{align}
\begin{split} \label{eq:GP-solution}
  \mathbb{E}[f(\vect{x}_*)] &= 
    \vect{k}_*\T (\vect{K} + \sigma_\mathrm{n}^2 \vect{I})^{-1} \vect{y}, \\
  \mathbb{V}[f(\vect{x}_*)] &= 
    k(\vect{x}_*,\vect{x}_*) - \vect{k}_*\T 
    (\vect{K} + \sigma_\mathrm{n}^2 \vect{I})^{-1} \vect{k}_*,
\end{split}
\end{align}
where $\vect{K}_{ij} = k(\vect{x}_i,\vect{x}_j)$, $\vect{k}_*$ is an $n$-dimensional vector with the $i$th entry being $k(\vect{x}_*,\vect{x}_i)$, and $\vect{y}$ is a vector of the $n$ observations.

In order to avoid the $n \times n$ matrix inversion in \eqref{eq:GP-solution}, we use the approximation scheme presented in the previous section and project the process to a truncated set of $m$ basis functions of the Laplacian as given in Equation~\eqref{eq:approximation} such that
\begin{equation}
  f(\vect{x}) \approx \sum_{j=1}^m f_j \, \phi_j(\vect{x}),
\end{equation}
where $f_j \sim \N(0,S(\sqrt{\lambda_j}))$. We can then form an approximate eigendecomposition of the matrix $\vect{K} \approx \vectb{\Phi}\vectb{\Lambda}\vectb{\Phi}\T$, where $\vectb{\Lambda}$ is a diagonal matrix of the leading $m$ approximate eigenvalues such that $\vectb{\Lambda}_{jj} = S(\sqrt{\lambda_j}), j=1,2,\ldots,m$. Here $S(\cdot)$ is the spectral density of the Gaussian process and $\lambda_j$ the $j$th eigenvalue of the Laplace operator. The corresponding eigenvectors in the decomposition are given by the eigenvectors $\phi_j(\vect{x})$ of the Laplacian such that $\vectb{\Phi}_{ij} = \phi_j(\vect{x}_i)$.

Using the matrix inversion lemma we rewrite \eqref{eq:GP-solution} as follows:
\begin{align}
\begin{split} \label{eq:GP-solution-approx}
  \mathbb{E}[f_*] &\approx 
    \vectb{\phi}_*\T
    (\vectb{\Phi}\T\vectb{\Phi} + \sigma_\mathrm{n}^2 \vectb{\Lambda}^{-1})^{-1}
    \vectb{\Phi}\T \vect{y}, \\
  \mathbb{V}[f_*] &\approx 
    \sigma_\mathrm{n}^2 \vectb{\phi}_*\T
    (\vectb{\Phi}\T\vectb{\Phi} + \sigma_\mathrm{n}^2 \vectb{\Lambda}^{-1})^{-1}
    \vectb{\phi}_*,
\end{split}
\end{align}
where $\vectb{\phi}_*$ is an $m$-dimensional vector with the $j$th entry being $\phi_j(\vect{x}_*)$. Thus, when the size of the training set is higher than the number of required basis functions $n > m$, the use of this approximation is advantageous.

\subsection{Learning the Hyperparameters} \label{sec:hyperpar}
A common way to learn the hyperparameters $\vectb{\theta}$ of the covariance function (suppressed earlier in the notation for brevity) and the noise variance $\sigma_\mathrm{n}^2$ is by maximizing the marginal likelihood function \citep{Rasmussen+Williams:2006,Quinonero-Candela+Rasmussen:2005b}. Let $\vect{Q} = \vect{K} + \sigma_\mathrm{n}^2 \vect{I}$ for the full model, then the negative log marginal likelihood and its derivatives are
\begin{align}
  \mathcal{L} &= 
      {1 \over 2} \log |\vect{Q}| 
    + {1 \over 2} \vect{y}\T\vect{Q}^{-1}\vect{y} 
    + {n \over 2} \log(2\pi), \\
  {\partial \mathcal{L} \over \partial \theta_k} &= 
    {1 \over 2} \tr\! \left( \vect{Q}^{-1} 
      {\partial \vect{Q} \over \partial \theta_k} \right) 
    - {1 \over 2} \vect{y}\T\vect{Q}^{-1}{\partial \vect{Q} \over 
      \partial \theta_k}\vect{Q}^{-1}\vect{y}, \\
  {\partial \mathcal{L} \over \partial \sigma_\mathrm{n}^2} &= 
    {1 \over 2} \tr\! \left( \vect{Q}^{-1} \right) 
    - {1 \over 2} \vect{y}\T\vect{Q}^{-1}\vect{Q}^{-1}\vect{y},
\end{align}
and they can be combined with a conjugate gradient optimizer. The problem in this case is the inversion of $\vect{Q}$, which is an $n \times n$ matrix. And thus each step of running the optimizer is $\O(n^3)$.
For our approximation scheme, let $\tilde{\vect{Q}} = \vectb{\Phi}\vectb{\Lambda}\vectb{\Phi}\T + \sigma_\mathrm{n}^2 \vect{I}$. Now replacing $\vect{Q}$ with $\tilde{\vect{Q}}$ in the above expressions gives us the following:
\begin{align}
  \tilde{\mathcal{L}} &= 
      {1 \over 2} \log |\tilde{\vect{Q}}| 
    + {1 \over 2} \vect{y}\T\tilde{\vect{Q}}^{-1}\vect{y} 
    + {n \over 2} \log(2\pi), \\
  {\partial \tilde{\mathcal{L}} \over \partial \theta_k} &= 
    {1 \over 2}
  {\partial \log |\tilde{\vect{Q}}| \over \partial \theta_k}
+   {1 \over 2}
  {\partial \vect{y}\T\tilde{\vect{Q}}^{-1}\vect{y} \over \partial \theta_k}, \\  
  {\partial \tilde{\mathcal{L}} \over \partial \sigma_\mathrm{n}^2} &= 
    {1 \over 2}
  {\partial \log |\tilde{\vect{Q}}| \over \partial \sigma_\mathrm{n}^2}
+ {1 \over 2}
  {\partial \vect{y}\T\tilde{\vect{Q}}^{-1}\vect{y} \over \partial \sigma_\mathrm{n}^2},
\end{align}
where for the terms involving $\log|\tilde{\vect{Q}}|$:
\begin{align}
  \log |\tilde{\vect{Q}}| 
     &= (n-m)\log \sigma_\mathrm{n}^2 
        + \log |\vect{Z}| \nonumber \\
        &\qquad+ \sum_{j=1}^m \log S(\sqrt{\lambda_j}), \\
  {\partial \log |\tilde{\vect{Q}}| \over \partial \theta_k}
     &= \sum_{j=1}^m S(\sqrt{\lambda_j})^{-1} 
           {\partial S(\sqrt{\lambda_j}) \over \partial \theta_k} \nonumber \\
     &\qquad   - \sigma_\mathrm{n}^2\tr\!\left( \vect{Z}^{-1} \vect{\Lambda}^{-2}
          {\partial \vect{\Lambda} \over \partial \theta_k} \right), \\
  {\partial \log |\tilde{\vect{Q}}| \over \partial \sigma_\mathrm{n}^2}
     &= {n-m \over \sigma_\mathrm{n}^2} 
        + \tr\!\left( \vect{Z}^{-1}\vectb{\Lambda}^{-1}\right),
  \intertext{and for the terms involving $\tilde{\vect{Q}}^{-1}$:}
  \vect{y}\T\tilde{\vect{Q}}^{-1}\vect{y}
     &= \frac{1}{\sigma_\mathrm{n}^2}
        \left( \vect{y}\T\vect{y} - 
        \vect{y}\T\vect{\Phi} \vect{Z}^{-1} \vect{\Phi}\T\vect{y}  \right), \\
  {\partial \vect{y}\T\tilde{\vect{Q}}^{-1}\vect{y} \over \partial \theta_k}
     &= -\vect{y}\T \vect{\Phi} \vect{Z}^{-1}
        \left(\vectb{\Lambda}^{-2} 
          {\partial \vect{\Lambda} \over \partial \theta_k} \right)
        \vect{Z}^{-1} \vect{\Phi}\T \vect{y}, \\
  {\partial \vect{y}\T\tilde{\vect{Q}}^{-1}\vect{y} \over \partial \sigma_\mathrm{n}^2}
     &= \frac{1}{\sigma_\mathrm{n}^2} \vect{y}\T\vectb{\Phi}\vect{Z}^{-1}
        \vectb{\Lambda}^{-1}\vect{Z}^{-1}\vectb{\Phi}\T\vect{y}
        -\frac{1}{\sigma_\mathrm{n}^4}\vect{y}\T\vect{y},
\end{align}
where $\vect{Z} = \sigma_\mathrm{n}^2\vect{\Lambda}^{-1} + \vect{\Phi}\T\vect{\Phi}$. For efficient implementation, matrix-to-matrix multiplications can be avoided in many cases, and the inversion of $\vect{Z}$ can be carried out through Cholesky factorization for numerical stability. This factorization ($\vect{L}\vect{L}\T = \vect{Z}$) can also be used for the term $\log |\vect{Z}| = 2\sum_j \log \vect{L}_{jj}$, and $\tr\!\left( \vect{Z}^{-1}\vectb{\Lambda}^{-1}\right) = \sum_j 1/(\vect{Z}_{jj}\vect{\Lambda}_{jj})$ can be evaluated by element-wise multiplication.

Once the marginal likelihood and its derivatives are available, it is also possible to use other methods for parameter inference such as Markov chain Monte Carlo methods \citep{Liu:2001,Brooks+Gelman+Jones+Meng:2011} including Hamiltonian Monte Carlo  \citep[HMC,][]{Duane+Kennedy+Pendleton+Roweth:1987,Neal:2011} as well as numerous others.

\subsection{Discussion on the Computational Complexity}
As can be noted from Equation~\eqref{eq:approximation}, the basis functions in the reduced-rank approximation do not depend on the hyperparameters of the covariance function. Thus it is enough to calculate the product $\vectb{\Phi}\T\vectb{\Phi}$ only once, which means that the method has a overall asymptotic computational complexity of $\O(nm^2)$. After this initial cost, evaluating the marginal likelihood and the marginal likelihood gradient is an $\O(m^3)$ operation---which in practice comes from the Cholesky factorization of $\vect{Z}$ on each step.

If the number of observations $n$ is so large that storing the $n \times m$ matrix $\vectb{\Phi}$ is not feasible, the computations of $\vectb{\Phi}\T\vectb{\Phi}$ can be carried out in blocks. Storing the evaluated eigenfunctions in $\vectb{\Phi}$ is not necessary, because the $\phi_j(\vect{x})$ are closed-form expressions that can be evaluated when necessary. In practice, it might be preferable to cache the result of $\vectb{\Phi}\T\vectb{\Phi}$ (causing a memory requirement scaling as $\O(m^2)$), but this is not required.

The computational complexity of conventional sparse GP approximations typically scale as  $\O(nm^2)$ in time for each step of evaluating the marginal likelihood. The scaling in demand of storage is $\O(nm)$. This comes from the inevitable cost of re-evaluating all results involving the basis functions on each step and storing the matrices required for doing this. This applies to all the methods that will be discussed in Section~\ref{sec:relationship-to-other-methods}, with the exception of SSGP, where the storage demand can be relaxed by re-evaluating the basis functions on demand.

We can also consider the rather restricting, but in certain applications often encountered case, where the measurements are constrained to a regular grid. This causes the product of the orthonormal eigenfunction matrices $\vectb{\Phi}\T\vectb{\Phi}$ to be diagonal, avoiding the calculation of the matrix inverse altogether. This relates to the FFT-based methods for GP regression \mbox{\citep{Paciorek:2007, Fritz+Neuweiler+Nowak:2009}}, and the projections to the basis functions can be evaluated by fast Fourier transform in $\O(n \log n)$ time complexity.

\subsection{Inverse Problems and Latent Force Models} \label{sec:invlfm}
We can also use the methodology to models of the form
\begin{equation} \label{eq:invprob}
\begin{split}
  f(\vect{x}) &\sim \GP(0, k(\vect{x},\vect{x}')), \\
  y_i &= (\mathcal{H} f)(\vect{x}_i) + \varepsilon_i,
\end{split}
\end{equation}
where $\mathcal{H}$ is a linear operator acting on functions depending on the $\vect{x}$ variable. This kind of models appear both in inverse problems literature and machine learning \citep[see, \eg,][]{Tarantola:2004,Kaipio+Somersalo:2005,Sarkka:2011b}. The Gaussian process regression solution now becomes
\begin{align}
\begin{split} \label{eq:invprob-solution}
  \mathbb{E}[f(\vect{x}_*)] &= 
    \vect{k}_{*h}\T (\vect{K}_h + \sigma_\mathrm{n}^2 \vect{I})^{-1} \vect{y}, \\
  \mathbb{V}[f(\vect{x}_*)] &= 
    k(\vect{x}_*,\vect{x}_*) - \vect{k}_{*h}\T 
    (\vect{K}_h + \sigma_\mathrm{n}^2 \vect{I})^{-1} \vect{k}_{*h},
\end{split}
\end{align}
where $[\vect{K}_h]{ij} = (\mathcal{H} \, \mathcal{H}' \, k)(\vect{x}_i,\vect{x}_j)$, the $i$th entry of vector $\vect{k}_{*h}$ is $(\mathcal{H}'\, k(\vect{x}_*,\cdot))(\vect{x}_i)$, and $\vect{y}$ is the vector of observations. Here $\mathcal{H}'$ denotes that the operator is applied to the second variable $\vect{x}'$ of the argument. With the series expansion \eqref{eq:approximation} we can easily approximate
\begin{equation} \label{eq:ip_approximation}
\begin{split} 
  (\mathcal{H} \, \mathcal{H}' \, k)(\vect{x},\vect{x}')
   & \approx \sum_j S(\sqrt{\lambda_j}) \,
   (\mathcal{H}\, \phi_j)(\vect{x}) \, (\mathcal{H}\, \phi_j)(\vect{x}'), \\
   (\mathcal{H}'\, k(\vect{x}_*,\cdot))(\vect{x}')
   & \approx \sum_j S(\sqrt{\lambda_j}) \,
   \phi_j(\vect{x}_*) \, (\mathcal{H}\, \phi_j)(\vect{x}').
\end{split} 
\end{equation} 
After applying the matrix inversion lemma \eqref{eq:invprob-solution} becomes
\begin{align}
\begin{split} \label{eq:ip-solution-approx}
  \mathbb{E}[f_*] &\approx 
    \vectb{\phi}_*\T
    (\tilde{\vectb{\Phi}}\T\tilde{\vectb{\Phi}} + \sigma_\mathrm{n}^2 \vectb{\Lambda}^{-1})^{-1}
    \tilde{\vectb{\Phi}}\T \vect{y}, \\
  \mathbb{V}[f_*] &\approx 
    \sigma_\mathrm{n}^2 \vectb{\phi}_*\T
    (\tilde{\vectb{\Phi}}\T\tilde{\vectb{\Phi}} + \sigma_\mathrm{n}^2 \vectb{\Lambda}^{-1})^{-1}
    \vectb{\phi}_*,
\end{split}
\end{align}
where $\tilde{\vectb{\Phi}}_{ij} = (\mathcal{H} \phi_j)(\vect{x}_i)$ and $\vectb{\phi}_*$ is as defined in \eqref{eq:GP-solution-approx}. The hyperparameter estimation methods discussed in Section~\ref{sec:hyperpar} can also be easily extended to this case.

Another (related) type of model is the following model arising in the context of latent force models \citep[LFM,][]{Alvarez+Luengo+Lawrence:2013}
\begin{equation} \label{eq:lfm}
\begin{split}
  f(\vect{x}) &\sim \GP(0, k(\vect{x},\vect{x}')), \\
  \mathcal{L} g &= f, \\
  y_i &= g(\vect{x}_i) + \varepsilon_i,
\end{split}
\end{equation}
where $\mathcal{L}$ is a linear operator. We can now write $\mathcal{H} = \mathcal{L}^{-1}$, where $\mathcal{L}^{-1}$ is the Green's operator associated with the operator $\mathcal{L}$ and hence the model becomes a special case of \eqref{eq:invprob}. The approximation to the operator $\mathcal{L}^{-1}$ on the given basis can be easily formed by using, for example, by projecting it onto the basis or by using point collocation. A particularly simple cases arises when the operator itself contains of Laplace operators, for example, when it has the form $\mathcal{L} = \nabla^2$. In that case the projection of the operator becomes diagonal.

\section{Convergence Analysis}
\label{sec:convergence-analysis}
In this section we analyze the convergence of the proposed approximation when the size of the domain $\Omega$ and the number of terms in the series grows to infinity. We start by analyzing a univariate problem in the domain $\Omega = [-L,L]$ and with Dirichlet boundary conditions and then generalize the result to $d$-dimensional cubes $\Omega = [-L_1,L_1] \times \cdots \times [-L_d,L_d]$. Then we analyze the truncation error as function of the number of terms in the series. We also discuss how the analysis could be extended to other types of basis functions.

\subsection{Univariate Dirichlet Case} \label{sec:dirichlet1d}
In the univariate case, the $m$-term approximation has the form
\begin{equation}
  \widetilde{k}_m(x,x')
  = \sum_{j=1}^m S(\sqrt{\lambda_j}) \, \phi_j(x) \, \phi_j(x'),
\label{eq:1d_kapp_m}
\end{equation}
where the eigenfunctions and eigenvalues for $j=1,2,\ldots$ are:
\begin{equation}
\begin{split}
 \phi_j(x) 
    &= \frac{1}{\sqrt{L}} \, \sin\left(\frac{\pi \, j \, (x + L)}{2L} \right)
 \quad \text{and} \quad
  \lambda_j
    = \left(\frac{\pi \, j}{2L} \right)^2.
\end{split}
\end{equation}
The true covariance function $k(x,x')$ is assumed to be stationary and have a spectral density with the following properties. It is uniformly bounded $S(\omega) = B < \infty$ and has at least one bounded derivative $|S'(\omega)| = D < \infty$ on $\omega > 0$. The following integrals are also assumed to be bounded: $\int_{0}^{\infty} S(\omega) \dd\omega  = A < \infty$ and $\int_{0}^{\infty} |S'(\omega)| \dd\omega = C < \infty$. We also assume that our training and test sets are constrained in the area $[-\widetilde{L},\widetilde{L}]$, where $\widetilde{L} < L$, and thus we are only interested in the case $x,x' \in [-\widetilde{L},\widetilde{L}]$. For the purposes of analysis we also assume that $L$ is bounded below by a constant.

The univariate convergence result can be summarized as the following theorem which is proved in Appendix \ref{app:conv_proof1}.
\begin{mytheorem} \label{the:1d_kapp_inf}
There exists a constant $E$ (independent of $m$, $x$, and $x'$) such that
\begin{equation}
  \left| k(x,x') - \widetilde{k}_m(x,x') \right| \leq \frac{E}{L} 
  + \frac{2}{\pi} \,
    \int_{\frac{\pi \, m}{2L}}^{\infty}
    S(\omega) \dd\omega,
\end{equation}
which in turn implies that uniformly
\begin{equation}
  \lim_{L \to \infty} 
    \left[ \lim_{\vphantom{L} m \to \infty} \widetilde{k}_m(x,x') \right] = k(x,x').
\end{equation}
\end{mytheorem}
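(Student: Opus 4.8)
The plan is to bound $|k-\widetilde{k}_m|$ by splitting it, through the triangle inequality, into a \emph{truncation} error $|\widetilde{k}_\infty-\widetilde{k}_m|$ and a \emph{domain-restriction} error $|k-\widetilde{k}_\infty|$, where $\widetilde{k}_\infty$ is the full series $m\to\infty$. The truncation part is the easy piece. Since $|\phi_j(x)|\leq 1/\sqrt{L}$ gives $|\phi_j(x)\,\phi_j(x')|\leq 1/L$, we get $|\widetilde{k}_\infty-\widetilde{k}_m|\leq\frac{1}{L}\sum_{j=m+1}^\infty S(\sqrt{\lambda_j})$. Writing $\omega_j=\sqrt{\lambda_j}=\pi j/(2L)$ with spacing $\Delta\omega=\pi/(2L)$, this equals $\frac{2}{\pi}\sum_{j>m}S(\omega_j)\,\Delta\omega$, a right-endpoint Riemann sum for $\frac{2}{\pi}\int_{\pi m/(2L)}^\infty S(\omega)\dd\omega$; this reproduces exactly the integral term in the statement, the Riemann-sum defect being absorbed into $C/L$.

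For the domain-restriction error I would first expose the boundary dependence. Applying the product-to-sum identity to $\phi_j(x)\,\phi_j(x')=\frac{1}{L}\sin\!\big(\frac{\pi j(x+L)}{2L}\big)\sin\!\big(\frac{\pi j(x'+L)}{2L}\big)$ yields $\phi_j(x)\,\phi_j(x')=\frac{1}{2L}\big[\cos(\omega_j\tau)-(-1)^j\cos(\omega_j\sigma)\big]$, with $\tau=x-x'$ and $\sigma=x+x'$, both bounded by $2\widetilde{L}$. Thus $\widetilde{k}_\infty$ splits into a \emph{bulk} term $\frac{1}{\pi}\sum_{j\geq1}S(\omega_j)\cos(\omega_j\tau)\,\Delta\omega$ and an alternating \emph{boundary} term $-\frac{1}{\pi}\sum_{j\geq1}(-1)^jS(\omega_j)\cos(\omega_j\sigma)\,\Delta\omega$. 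By the Wiener--Khintchin identity \eqref{eq:duality} and evenness of $S$, the target is $k(\tau)=\frac{1}{\pi}\int_0^\infty S(\omega)\cos(\omega\tau)\dd\omega$, so the bulk term is precisely a Riemann sum for $k(\tau)$.

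The two remaining estimates must both come out as $\O(1/L)$. For the bulk term I would use the first-order Euler--Maclaurin correction: with $g(\omega)=S(\omega)\cos(\omega\tau)$, the defect between $\frac{1}{\pi}\sum_{j\geq1}g(\omega_j)\,\Delta\omega$ and $\frac{1}{\pi}\int_0^\infty g$ has leading part of order $\Delta\omega\,g(0)=\Delta\omega\,S(0)=\O(1/L)$, while the higher-order remainder is controlled by the assumed bounded $S'$, $S''$ and telescopes, since $\int_0^\infty g'=-S(0)$ and $\int_0^\infty g''=-S'(0)$ are finite even though $g',g''$ need not be integrable in absolute value. For the boundary term I would exploit the explicit $1/(2L)$ prefactor: using $(-1)^j=\cos(2L\,\omega_j)$ one rewrites it as cosine sums at the large arguments $2L\pm\sigma$, so it equals, up to $\O(1/L)$, $-\tfrac12[k(2L+\sigma)+k(2L-\sigma)]$; one integration by parts in \eqref{eq:duality} using the bounded $S'$ shows $k(r)$ decays for large $r$, and $2L\pm\sigma\geq 2L-2\widetilde{L}$ is large, so this term is $\O(1/L)$ too. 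Equivalently, the alternating sum has bounded variation uniformly in $L$, and the $1/(2L)$ factor alone supplies the rate.

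Collecting the pieces yields $|k-\widetilde{k}_m|\leq C/L+\frac{2}{\pi}\int_{\pi m/(2L)}^\infty S(\omega)\dd\omega$, after which the iterated limit is immediate: for fixed $L$, letting $m\to\infty$ sends the lower limit of the tail integral to infinity, so by $\int_0^\infty S<\infty$ the integral vanishes; then letting $L\to\infty$ kills $C/L$. I expect the main obstacle to be making the two $\O(1/L)$ bounds \emph{uniform in $m$}: a naive per-subinterval Riemann estimate accumulates an error that grows with $m$, because the summation range $[0,\omega_m]$ expands, so the argument must instead rely on the cancellation from the oscillation of the integrand (the telescoping in Euler--Maclaurin and the alternating signs in the boundary term) together with the boundedness of $S'$ and $S''$. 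Verifying that these oscillatory remainders genuinely remain $\O(1/L)$ under only the stated regularity of $S$ is the delicate heart of the proof.
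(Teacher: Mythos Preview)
Your proposal is correct and mirrors the paper's proof: the same triangle split into $|\widetilde{k}_\infty-\widetilde{k}_m|$ and $|k-\widetilde{k}_\infty|$, the same product-to-sum decomposition of $\phi_j(x)\phi_j(x')$ into a bulk Riemann sum for $k(\tau)$ plus an alternating boundary term in $\sigma=x+x'$, and the same Riemann-tail estimate for the truncation piece. The one technical difference is in the boundary term: the paper pairs even and odd indices to rewrite $\sum_j(-1)^jS(\omega_j)\cos(\omega_j\sigma)$ as two first-difference Riemann sums approximating $\int S'(\omega)\cos(\omega\sigma)\dd\omega$ and $\int S(\omega)\,\partial_\omega\cos(\omega\sigma)\dd\omega$ (i.e.\ your ``bounded variation'' alternative made explicit), rather than your primary $k(2L\pm\sigma)$ route, which under only bounded $S',S''$ does not obviously yield the needed $\O(1/L)$ decay of $k$.
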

\begin{myremark}
Note that we cannot simply exchange the order of the limits in the above theorem. However, the theorem does ensure the convergence of the approximation in the joint limit $m,L \to \infty$ provided that we add terms to the series fast enough such that $m / L \to \infty$. That is, in this limit, the approximation $\widetilde{k}_m(x,x')$ converges uniformly to $k(x,x')$.
\end{myremark}

As such, the results above only ensure the convergence of the prior covariance functions. However, it turns out that this also ensures the convergence of the posterior as is summarized in the following corollary.
\begin{mycorollary}
  Because the Gaussian process regression equations only involve pointwise evaluations of the kernels, it also follows that the posterior mean and covariance functions converge uniformly to the exact solutions in the limit $m,L \to \infty$.
\end{mycorollary}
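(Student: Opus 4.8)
The plan is to propagate the uniform kernel convergence from Theorem~\ref{the:1d_kapp_inf} through the fixed, finite-dimensional algebraic operations that define the posterior. The key observation is that for a training set of fixed size $n$, the quantities appearing in \eqref{eq:GP-solution}---the Gram matrix $\vect{K}$, the cross-covariance vector $\vect{k}_*$, and the prior variance $k(\vect{x}_*,\vect{x}_*)$---are all assembled from pointwise values of the kernel. Writing $\widetilde{\vect{K}}$, $\widetilde{\vect{k}}_*$, and $\widetilde{k}_m(\vect{x}_*,\vect{x}_*)$ for their approximate counterparts built from $\widetilde{k}_m$, Theorem~\ref{the:1d_kapp_inf} gives, in the joint limit $m,L\to\infty$ with $m/L\to\infty$, that every entry of $\vect{K}-\widetilde{\vect{K}}$ and $\vect{k}_*-\widetilde{\vect{k}}_*$ together with the scalar difference tends to zero, with a bound uniform in $\vect{x}_*$ and in the training points provided they lie in $[-\widetilde{L},\widetilde{L}]$. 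Since $n$ is fixed, entrywise convergence implies convergence in any chosen matrix or vector norm, and the uniformity over $\vect{x}_*$ is inherited directly from the uniform estimate in the theorem.

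First I would establish that the regularized Gram matrices have uniformly bounded inverses. Both $\vect{K}$ and $\widetilde{\vect{K}}$ are positive semidefinite: $\vect{K}$ because $k$ is a valid covariance function, and $\widetilde{\vect{K}}$ because $\widetilde{k}_m(x,x')=\sum_{j=1}^m S(\sqrt{\lambda_j})\,\phi_j(x)\,\phi_j(x')$ is a finite nonnegative combination of rank-one terms and hence itself positive semidefinite. Consequently $\vect{K}+\sigma_\mathrm{n}^2\vect{I}$ and $\widetilde{\vect{K}}+\sigma_\mathrm{n}^2\vect{I}$ have all eigenvalues at least $\sigma_\mathrm{n}^2>0$, so both are invertible with $\norm{(\vect{K}+\sigma_\mathrm{n}^2\vect{I})^{-1}}\le 1/\sigma_\mathrm{n}^2$, and likewise for the tilde version, uniformly in $m$ and $L$.

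Next I would invoke the resolvent identity $\vect{A}^{-1}-\vect{B}^{-1}=\vect{A}^{-1}(\vect{B}-\vect{A})\vect{B}^{-1}$ with $\vect{A}=\vect{K}+\sigma_\mathrm{n}^2\vect{I}$ and $\vect{B}=\widetilde{\vect{K}}+\sigma_\mathrm{n}^2\vect{I}$ to bound the difference of inverses by $\sigma_\mathrm{n}^{-4}\,\norm{\vect{K}-\widetilde{\vect{K}}}\to 0$. Substituting the approximate kernel quantities into \eqref{eq:GP-solution} and expanding the differences of the posterior mean and variance as telescoping sums---one term for the change in $\vect{k}_*$, one for the change in the inverse, one for the change in $k(\vect{x}_*,\vect{x}_*)$---each summand is a product of factors that are either uniformly bounded (the inverses, $\vect{y}$, and the convergent $\widetilde{\vect{k}}_*$) or vanishing in norm. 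A triangle-inequality bookkeeping then bounds $|\mathbb{E}[f(\vect{x}_*)]-\widetilde{\mathbb{E}}[f(\vect{x}_*)]|$ and $|\mathbb{V}[f(\vect{x}_*)]-\widetilde{\mathbb{V}}[f(\vect{x}_*)]|$ by a constant---depending on $n$, $\sigma_\mathrm{n}^2$, $\norm{\vect{y}}$, and a uniform kernel bound---times the uniform kernel error from Theorem~\ref{the:1d_kapp_inf}, which vanishes uniformly in $\vect{x}_*$.

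I do not anticipate a serious obstacle: the statement is essentially the continuity of a fixed finite-dimensional map composed with the uniform convergence already in hand. The one point requiring care is the uniform lower bound on the spectrum of the regularized Gram matrices, which keeps the inverses bounded as $m,L\to\infty$; this is exactly where the noise term $\sigma_\mathrm{n}^2\vect{I}$ and the positive semidefiniteness of $\widetilde{\vect{K}}$ are used. Tracking the uniformity in $\vect{x}_*$ is then automatic, since every kernel evaluation entering the bound is controlled by the single uniform estimate of Theorem~\ref{the:1d_kapp_inf}.
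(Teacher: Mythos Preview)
Your proposal is correct and is in fact considerably more detailed than what the paper offers: the paper treats this corollary as immediate, with the entire justification contained in the one-line clause ``because the Gaussian process regression equations only involve pointwise evaluations of the kernels.'' No separate proof is given. Your argument---propagating the uniform kernel error of Theorem~\ref{the:1d_kapp_inf} through the fixed finite-dimensional map \eqref{eq:GP-solution}, using the noise regularization $\sigma_\mathrm{n}^2\vect{I}$ together with positive semidefiniteness of $\widetilde{\vect{K}}$ to keep the inverse uniformly bounded, and then telescoping via the resolvent identity---is exactly the honest way to cash out that one-line claim, and it highlights the one nontrivial point (the uniform spectral lower bound on the regularized Gram matrix) that the paper leaves entirely implicit.
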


\begin{proof}
Analogous to proof of Theorem 2.2 in \citet{Sarkka+Piche:2014}.
\end{proof}

\subsection{Multivariate Cartesian Dirichlet Case}
In order to generalize the results from the previous section, we turn our attention to a $d$-dimensional inputs space with rectangular domain $\Omega = [-L_1,L_1] \times \cdots \times [-L_d,L_d]$ with Dirichlet boundary conditions. In this case we consider a truncated $m = {\hat m}^d$ term approximation of the form
\begin{multline}
  \widetilde{k}_m(\vect{x},\vect{x}') \\
  = \sum_{j_1,\ldots,j_d=1}^{\hat m}
   S(\sqrt{\lambda_{j_1,\ldots,j_d}}) \,
   \phi_{j_1,\ldots,j_d}(\vect{x}) \, \phi_{j_1,\ldots,j_d}(\vect{x}')
\label{eq:nd_kapp_m}
\end{multline}
with the eigenfunctions and eigenvalues
\begin{equation}
\begin{split}
 \phi_{j_1,\ldots,j_d}(x) 
  = \prod_{k=1}^d \frac{1}{\sqrt{L_k}} \,
    \sin\left(\frac{\pi \, j_k \, (x_k + L_k)}{2L_k} \right)
\end{split}
\end{equation}
and
\begin{equation}
\begin{split}
  \lambda_{j_1,\ldots,j_d}
  = \sum_{k=1}^d \left(\frac{\pi \, j_k}{2L_k} \right)^2.
\end{split}
\end{equation}
The true covariance function $k(\vect{x},\vect{x}')$ is assumed to be homogeneous (stationary) and have a spectral density $S(\vectb{\omega})$ which satisfies the one-dimensional assumptions listed in the previous section in each variable. Furthermore, we assume that the training and test sets are contained in the $d$-dimensional cube $[-\widetilde{L},\widetilde{L}]^d$ and that $L_k$s are bounded from below.

The following result for this $d$-dimensional case is proved in Appendix~\ref{app:conv_proof2}.
\begin{mytheorem} \label{the:nd_kapp_inf}
There exists a constant $E$ (independent of $m$, $d$, $\vect{x}$, and $\vect{x}'$) such that
\begin{equation}
\begin{split}
  \left| k(\vect{x},\vect{x}') - \widetilde{k}_m(\vect{x},\vect{x}') \right|
  \leq \frac{E \, d}{L}
  + \frac{1}{\pi^d} 
  \int_{\norm{\vectb{\omega}} \geq \frac{\pi \, {\hat m}}{2L}} S(\vectb{\omega}) \dd\vectb{\omega},
\end{split}
\end{equation}
where $L = \min_k L_k$, which in turn implies that uniformly
\begin{equation}
\begin{split}
  \lim_{L_1,\ldots,L_d \to \infty}
    \left[ \lim_{\vphantom{L_1} m \to \infty} \widetilde{k}_m(\vect{x},\vect{x}') \right] = k(\vect{x},\vect{x}').
\end{split}
\end{equation}
\end{mytheorem}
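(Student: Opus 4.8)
The plan is to reduce the $d$-dimensional estimate to the one-dimensional machinery already established in Theorem~\ref{the:1d_kapp_inf}, exploiting the tensor-product form of the eigenfunctions. First I would apply the product-to-sum identity $\sin a\,\sin b=\tfrac12[\cos(a-b)-\cos(a+b)]$ to each of the $d$ sine factors in $\phi_{\vect j}(\vect x)\,\phi_{\vect j}(\vect x')$, writing each factor as $\tfrac{1}{2L_k}[\cos(\omega_{j_k}(x_k-x_k'))-\cos(\omega_{j_k}(x_k+x_k'+2L_k))]$ with $\omega_{j_k}=\pi j_k/2L_k$ and $\vect j=(j_1,\ldots,j_d)$. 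Multiplying out the $d$ brackets yields $2^d$ terms: one \emph{main} term carrying all the difference-cosines $\prod_k\cos(\omega_{j_k}(x_k-x_k'))$, and $2^d-1$ \emph{boundary} terms each containing at least one sum-cosine $\cos(\omega_{j_k}(x_k+x_k'+2L_k))$. Since $\sqrt{\lambda_{\vect j}}=\norm{\vectb\omega_{\vect j}}$ with $\vectb\omega_{\vect j}=(\omega_{j_1},\ldots,\omega_{j_d})$, and the grid spacing in coordinate $k$ is $\Delta\omega_k=\pi/2L_k$, the main term is exactly a truncated $d$-dimensional Riemann sum of $S(\norm{\vectb\omega})\prod_k\cos(\omega_k(x_k-x_k'))$ over the positive orthant.

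Next I would identify the limit of this Riemann sum as $k(\vect x,\vect x')$. Using evenness of $S$ (isotropy) and of the cosines, the positive-orthant integral extends to all of $\R^d$, and $\tfrac{1}{(2\pi)^d}\int_{\R^d}S(\norm{\vectb\omega})\prod_k\cos(\omega_k(x_k-x_k'))\dd\vectb\omega=\tfrac{1}{(2\pi)^d}\int_{\R^d}S(\norm{\vectb\omega})\,e^{\imag\vectb\omega\T\vect r}\dd\vectb\omega=k(\vect x,\vect x')$ by the Wiener--Khintchin identity \eqref{eq:duality}. With the target fixed, the error $|k-\widetilde k_m|$ splits into three pieces to be bounded uniformly for $\vect x,\vect x'\in[-\widetilde L,\widetilde L]^d$: (i) the truncation error from summing only $j_k\le\hat m$; (ii) the quadrature (discretization) error of the main term relative to its integral; and (iii) the $2^d-1$ boundary terms.

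For (i), I would bound each dropped term by $S(\norm{\vectb\omega_{\vect j}})\prod_kL_k^{-1}$ using $|\phi_{\vect j}|\le\prod_kL_k^{-1/2}$, recognize the resulting discrete tail as $(2/\pi)^d$ times a Riemann sum, and dominate it by the spectral tail integral $\tfrac{1}{\pi^d}\int_{\norm{\vectb\omega}\ge\pi\hat m/2L}S(\vectb\omega)\dd\vectb\omega$; symmetry accounts for the $(2/\pi)^d$-versus-$\pi^{-d}$ factor, and the scalar $L=\min_kL_k$ enters through a uniform, worst-case comparison of the discrete index region against a ball. For (iii), the key observation is that $\omega_{j_k}(x_k+x_k'+2L_k)=\tfrac{\pi j_k(x_k+x_k')}{2L_k}+\pi j_k$, so each sum-cosine carries a factor $(-1)^{j_k}$; summation by parts in such an alternating coordinate, together with $S\in C^1$ and $|x_k+x_k'|\le2\widetilde L$, makes every boundary term $\O(1/L)$.

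The main obstacle is (ii), the discretization error of the main term, made uniform in $\vect x,\vect x'$. Here I would use a telescoping argument: interpolate between the full integral and $\widetilde k_m$ by discretizing one coordinate at a time, so that the error is controlled by $d$ successive one-dimensional quadrature estimates of the type behind Theorem~\ref{the:1d_kapp_inf}. Each step applies to $\omega_k\mapsto S(\norm{\vectb\omega})\cos(\omega_k(x_k-x_k'))$, whose first two $\omega_k$-derivatives are bounded uniformly in the remaining variables and in $\vect x,\vect x'$ because $S$ has two bounded derivatives and $|x_k-x_k'|\le2\widetilde L$; the resulting error is $\O(\Delta\omega_k)=\O(1/L_k)=\O(1/L)$. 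Summing the $d$ contributions with (iii) and the bounds $L_k\ge L$ collapses all $\O(1/L)$ pieces into a single constant $C_d$, leaving the explicit tail term. Finally, sending $m\to\infty$ annihilates the tail by integrability of $S$, and then $L\to\infty$ annihilates $C_d/L$, giving the claimed uniform iterated limit. The delicate points are maintaining uniformity over $\vect x,\vect x'$ through the $d$ telescoped quadrature estimates and the multi-index bookkeeping, without letting $C_d$ grow with $d$ faster than stated.
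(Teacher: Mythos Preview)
Your proposal is correct and rests on the same ingredients as the paper's proof---coordinate-wise reduction to one-dimensional Riemann-sum estimates and a spectral-tail bound for the truncation---but organizes them differently. The paper first splits $|k-\widetilde k_m|\le|k-\widetilde k_\infty|+|\widetilde k_\infty-\widetilde k_m|$; for the first piece (Lemma~\ref{lem:nd_kapp_inf1}) it applies the full one-dimensional Lemma~\ref{lem:1d_kapp_inf1} as a black box to one coordinate at a time, telescoping through $j_1,\ldots,j_d$, so the product-to-sum expansion and the alternating-sum bounds stay hidden inside the 1D lemma and the $2^d$ cross-terms never appear explicitly. You instead expand all $d$ sine products up front into one main term and $2^d-1$ boundary terms, then treat the main term by a telescoped quadrature estimate and each boundary term by summation by parts in one of its alternating indices. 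Both routes yield the stated bound; the paper's is more modular and sidesteps the $2^d$ bookkeeping, while yours makes the structure of the boundary contributions more transparent.
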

\begin{myremark}
  Analogously as in the one-dimensional case we cannot simply exchange the order of the limits above. Furthermore, we need to add terms fast enough so that ${\hat m} / L_k \to \infty$ when $m,L_1,\ldots,L_d \to \infty$.
\end{myremark}
\begin{mycorollary}
As in the one-dimensional case, the uniform convergence of the prior covariance function also implies uniform convergence of the posterior mean and covariance in the limit $m,L_1,\ldots,L_d \to \infty$.
\end{mycorollary}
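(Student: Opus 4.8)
The plan is to deduce this corollary from Theorem~\ref{the:nd_kapp_inf} by propagating the uniform convergence of the \emph{prior} kernel through the finite-dimensional regression formulas \eqref{eq:GP-solution}, exploiting that these involve the kernel only through pointwise evaluations and that the strictly positive noise variance $\sigma_\mathrm{n}^2$ keeps every matrix that must be inverted uniformly well-conditioned. The argument is dimension-independent, so it mirrors the one-dimensional case exactly.

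First I would record what the theorem supplies. Its bound does not depend on $\vect{x},\vect{x}'$, so on the compact region $[-\widetilde{L},\widetilde{L}]^d$ containing all training and test inputs we obtain
\[
  \epsilon_{m,L} := \sup_{\vect{x},\vect{x}' \in [-\widetilde{L},\widetilde{L}]^d}
    \bigl| k(\vect{x},\vect{x}') - \widetilde{k}_m(\vect{x},\vect{x}') \bigr|
  \longrightarrow 0
\]
in the joint limit $m,L_1,\dots,L_d \to \infty$ taken as in the accompanying remark. Since $n$ is fixed, the approximate Gram matrix $\widetilde{\vect{K}}$ (entries $\widetilde{k}_m(\vect{x}_i,\vect{x}_j)$), the cross-covariance $\widetilde{\vect{k}}_*$ (entries $\widetilde{k}_m(\vect{x}_*,\vect{x}_i)$) and the prior variance $\widetilde{k}_m(\vect{x}_*,\vect{x}_*)$ then converge, uniformly in the test input $\vect{x}_*$, to their exact counterparts $\vect{K}$, $\vect{k}_*$, $k(\vect{x}_*,\vect{x}_*)$, with every entrywise error at most $\epsilon_{m,L}$.

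Second I would control the only nonlinear ingredient, the inverse $(\vect{K}+\sigma_\mathrm{n}^2\vect{I})^{-1}$. Both $\vect{K}$ and $\widetilde{\vect{K}}$ are positive semidefinite — the latter because $\widetilde{\vect{K}} = \vectb{\Phi}\vectb{\Lambda}\vectb{\Phi}\T$ with $\vectb{\Lambda}_{jj} = S(\sqrt{\lambda_j}) \geq 0$ — so writing $\vect{A} = \vect{K}+\sigma_\mathrm{n}^2\vect{I}$ and $\widetilde{\vect{A}} = \widetilde{\vect{K}}+\sigma_\mathrm{n}^2\vect{I}$ we have $\vect{A},\widetilde{\vect{A}} \succeq \sigma_\mathrm{n}^2\vect{I}$ and hence $\norm{\vect{A}^{-1}}, \norm{\widetilde{\vect{A}}^{-1}} \leq \sigma_\mathrm{n}^{-2}$, a bound uniform in $m,L$ and $\vect{x}_*$. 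The resolvent identity $\widetilde{\vect{A}}^{-1}-\vect{A}^{-1} = \widetilde{\vect{A}}^{-1}(\vect{A}-\widetilde{\vect{A}})\vect{A}^{-1}$ together with $\norm{\vect{K}-\widetilde{\vect{K}}} \leq n\,\epsilon_{m,L}$ then gives $\norm{\widetilde{\vect{A}}^{-1}-\vect{A}^{-1}} \leq \sigma_\mathrm{n}^{-4}\,n\,\epsilon_{m,L}$, so the inverse converges at the same rate.

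Finally I would assemble the estimate. The posterior mean $\vect{k}_*\T\vect{A}^{-1}\vect{y}$ and variance $k(\vect{x}_*,\vect{x}_*)-\vect{k}_*\T\vect{A}^{-1}\vect{k}_*$ are finite sums of products of the quantities above, each of which is \emph{uniformly} bounded ($k$ is bounded on the compact set, $\norm{\vect{A}^{-1}}\le\sigma_\mathrm{n}^{-2}$, and $\vect{y}$ is fixed); a routine triangle-inequality/product-rule estimate therefore bounds the gap between the exact and approximate posterior mean and variance by $\mathrm{const}\cdot\epsilon_{m,L}$, with the constant depending only on $n$, $\sigma_\mathrm{n}^2$, $\norm{\vect{y}}$ and a uniform kernel bound, but crucially not on $\vect{x}_*$. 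Applying the same reasoning to the two-argument posterior covariance $k(\vect{x}_*,\vect{x}_*')-\vect{k}_*\T\vect{A}^{-1}\vect{k}_{*'}$ handles the full covariance function, and letting $m,L_1,\dots,L_d\to\infty$ yields the claimed uniform convergence. I expect the only genuine subtlety to be the uniform invertibility of the Gram matrices — precisely what $\sigma_\mathrm{n}^2 > 0$ secures through the lower bound $\vect{A},\widetilde{\vect{A}} \succeq \sigma_\mathrm{n}^2\vect{I}$; everything else is the propagation of a single uniform bound through finitely many continuous algebraic operations.
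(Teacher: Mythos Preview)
Your proposal is correct and is essentially a rigorous elaboration of the paper's own treatment: the paper offers no formal proof of this corollary beyond the one-sentence observation (stated in the one-dimensional case) that the GP regression formulas \eqref{eq:GP-solution} involve the kernel only through pointwise evaluations, so uniform convergence of the prior kernel propagates. You supply exactly the details that make this precise --- in particular the uniform lower bound $\widetilde{\vect{K}}+\sigma_\mathrm{n}^2\vect{I}\succeq\sigma_\mathrm{n}^2\vect{I}$ and the resolvent identity to control the inverse --- which the paper leaves implicit.
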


\subsection{Scaling of Error with Increasing $\hat m$} \label{sec:scaling}
Using the Dirichlet eigenfunction basis, we can also investigate the truncation error with an increasing number of series expansion terms $m = \hat{m}^d$. If we take a look at the bound in Theorem~\ref{the:nd_kapp_inf}, we can see that it has the form
\begin{equation}
\begin{split}
\frac{E \, d}{L}
  + \frac{1}{\pi^d} 
  \int_{\norm{\vectb{\omega}} \geq \frac{\pi \, {\hat m}}{2L}} S(\vectb{\omega}) \dd\vectb{\omega},
\end{split}
\end{equation}
where the first term is independent of $\hat m$ and is a linear function of $d$. The latter term in turn depends on $\hat m$ and in that sense defines the scaling of error in the number of series terms.

It is worth noting that due to Remarks~\ref{rem:1d_kapp_inf2} and \ref{rem:nd_kapp_inf2} we could actually tighten the bound by introducing $\hat m$-dependence to $E$, but it does not affect the order of scaling, because the dependence on the dimensionality in that term is linear. Furthermore, the latter term actually depends on the ratio $\hat m / L$ and hence there is a coupling between the number of terms and the size of the domain $L$. However, we can still get idea of the convergence speed by fixing $L$. 

Let us start by considering the case when $S(\|\vect{\omega}\|)$ is bounded by a reciprocal of a polynomial which is the case, for example, for the Mat\'ern covariance function. We get the following theorem.

\begin{mytheorem} \label{the:rate1}
Assume that where exists a constant $D$ such that $S(\norm{\vectb{\omega}}) \le \frac{D}{\norm{\vectb{\omega}}^{d + a}}$ for some $a > 0$. Then we have
\begin{equation}
\begin{split}
  \int_{\norm{\vectb{\omega}} \geq \frac{\pi \, {\hat m}}{2L}} \, S(\norm{\vectb{\omega}}) \,
  \dd\vectb{\omega} \le
  \frac{D'}{m^{a/d}},
\end{split}
\end{equation}
where $m = {\hat m}^d$ for some constant $D'$ (which depends on $L$ and $d$).
\end{mytheorem}

\begin{proof} First recall that 
\begin{equation}
\begin{split}
  \int_{\norm{\vectb{\omega}} \geq \frac{\pi \, {\hat m}}{2L}} \, S(\norm{\vectb{\omega}}) \,
  \dd\vectb{\omega}
  &= 
  \frac{2 \pi^{d/2}}{\Gamma(d/2)} \,
   \int_{\frac{\pi \, \hat{m}}{2L}}^{\infty} S(r) \, r^{d-1} \, \dd r,
\end{split}
\end{equation}
where $\Gamma(\cdot)$ is the gamma function, and hence to analyze the scaling as function of $m$, it is enough to investigate the scaling of the term $\int_{\frac{\pi \, \hat{m}}{2L}}^{\infty} S(r) \, r^{d-1} \, dr$. We now get
\begin{equation}
\begin{split}
   \int_{\frac{\pi \, \hat{m}}{2L}}^{\infty} S(r) \, r^{d-1} \, dr 
  &\le
   \int_{\frac{\pi \, \hat{m}}{2L}}^{\infty} \frac{D}{r^{a+1}} \, dr \\
   &=
  \left( \frac{1}{a} \right) \, \left( \frac{2L}{\pi \, \hat{m}} \right)^{a}   \\
   &=
  \underbrace{\left( \frac{(2L)^a}{\pi^a \, a} \right)}_{D'} \, \left( \frac{1}{m^{a/d}} \right), \\
\end{split}
\end{equation}
where we have recalled that $m = \hat{m}^d$.
\end{proof}
The result in the above theorem tells that by selecting an appropriate differentiation order for the covariance function, we can make the convergence speed arbitrarily large. In particular, if we select $a = d/2$, we get the Monte Carlo rate, and with $a = d$, we get a convergence rate of $\sim 1/m$. 

In order to analyze the squared exponential covariance function with spectral density
\begin{equation}  \label{eq:se_spec}
\begin{split}
  S(\vect{\omega}) & = \prod_{i=1}^d \left[ s^2 \, \sqrt{2 \pi} \,\ell \,
  \exp\left(-\frac{\ell^2 \, \omega_i^2}{2} \right) \right],
\end{split}
\end{equation}
we recall that the integral $\int_{\norm{\vectb{\omega}} \geq \frac{\pi \, {\hat m}}{2L}} \, S(\norm{\vectb{\omega}}) \,  \dd\vectb{\omega}$ was actually used for bounding a more tight bound $\int_{\frac{\pi \, {\hat m}}{2L_1}}^{\infty} \cdots \int_{\frac{\pi \, {\hat m}}{2L_d}}^{\infty} S( \omega_1, \ldots, \omega_d)  \dd\omega_1 \cdots \dd\omega_d$ appearing in Equation~\eqref{eq:tighter_bound}. In terms of that (original) bound, we get the following theorem.

\begin{mytheorem} \label{the:se_rate}
Assume that the spectral density is of the squared exponential form \eqref{eq:se_spec}. Then we have
\begin{equation} 
\begin{split}
&\int_{\frac{\pi \, {\hat m}}{2L_1}}^{\infty} \cdots \int_{\frac{\pi \, {\hat m}}{2L_d}}^{\infty} S( \omega_1, \ldots, \omega_d)  \dd\omega_1 \cdots \dd\omega_d \\
&\le D'' \, \frac{\exp(-\gamma \, d \, m^{2/d})}{m} \le \frac{D''}{m},
\end{split}
\end{equation}
for some constants $D'', \gamma > 0$ (which depend on $d$ and $L$).
\end{mytheorem}

\begin{proof}
Due to separability of the spectral density, we have
\begin{equation}
\begin{split}
  &
  \int_{\frac{\pi \, {\hat m}}{2L_1}}^{\infty} \cdots \int_{\frac{\pi \, {\hat m}}{2L_d}}^{\infty}
  S( \omega_1, \ldots, \omega_d) 
  \dd\omega_1 \cdots \dd\omega_d \\
  &= 
  \left( s^2 \, \sqrt{2 \pi} \,\ell \right)^d \, \prod_{i=1}^d \int_{\frac{\pi \, {\hat m}}{2L_i}}^{\infty} \exp\left(-\frac{\ell^2 \, \omega_i^2}{2} \right) \dd\omega_i \\
  &\le
  \left( s^2 \, \sqrt{2 \pi} \,\ell \right)^d \, \left[ \int_{\frac{\pi \, {\hat m}}{2L}}^{\infty} \exp\left(-\frac{\ell^2 \, \omega_i^2}{2} \right) \dd\omega_i \right]^d,
\end{split}
\end{equation}
where $L = \min_k L_k$. By using the bound from \citet{Feller:1968}, Section~VII.1, Lemma~2, we get that is this
\begin{equation}
\begin{split}
  &\le
  \left( s^2 \, \sqrt{2 \pi} \,\ell^2 \right)^d \,
  \left[  \exp\left(-\frac{1}{2} \left[ \frac{\pi \, \hat m}{2L \, \ell} \right]^2 \right) \,
  \frac{2L \, \ell}{\pi \, \hat m} \right]^d \\
  &= D'' \, \frac{\exp(-\gamma \, d \, m^{2/d})}{m}.
\end{split}
\end{equation}
\end{proof}
The above theorem tells that the convergence in the squared exponential case is faster than $\sim 1 / m$, independently of the dimensionality $d$. It is worth noting though that the bound is not independent of the dimensionality in the sense that the constants do depend on it. Strictly speaking, the convergence rate is $h(d) / m$, for some function $h$ which depends on $d$. However, as function of $m$, this rate is independent of the dimensionality.

\subsection{Other Domains}
It would also be possible carry out similar convergence analysis, for example, in a spherical domain. In that case the technical details become slightly more complicated, because instead sinusoidals we will have Bessel functions and the eigenvalues no longer form a uniform grid. This means that instead of Riemann integrals we need to consider weighted integrals where the distribution of the zeros of Bessel functions is explicitly accounted for. It might also be possible to use some more general theoretical results from mathematical analysis to obtain the convergence results. However, due to these technical challenges more general convergence proof will be developed elsewhere.

There is also a similar technical challenge in the analysis when the basis functions are formed by assuming an input density (see Section~\ref{sec:inner_product_point_of_view}) instead of a bounded domain. Because explicit expressions for eigenfunctions and eigenvalues cannot be obtained in general, the elementary proof methods which we used here cannot be applied. Therefore the convergence analysis of this case is also left as a topic for future research.

\section{Relationship to Other Methods}
\label{sec:relationship-to-other-methods}
In this section we compare our method to existing sparse GP methods from a theoretical point of view. We consider two different classes of approaches: a class of inducing input methods based on the Nystr{\"o}m approximation \citep[following the interpretation of][]{Quinonero-Candela+Rasmussen:2005,Bui+Yan+Turner:2017}, and direct spectral approximations.

\subsection{Methods from the Nystr{\"o}m Family}
A crude but rather effective scheme for approximating the eigendecomposition of the Gram matrix is the Nystr{\"o}m method \citep[see, \eg,][for the integral approximation scheme]{Baker:1977}. This method is based on choosing a set of $m$ inducing inputs $\vect{x}_\mathrm{u}$ and scaling the corresponding eigendecomposition of their corresponding covariance matrix $\vect{K}_{\mathrm{u},\mathrm{u}}$ to match that of the actual covariance. The Nystr{\"o}m approximations to the $j$th eigenvalue and eigenfunction are 
\begin{align} 
  \widetilde{\lambda}_j &= \frac{1}{m} \, \lambda_{\mathrm{u},j}, 
    \label{eq:nystrom-lambda}\\
  \widetilde{\phi}_j(\vect{x}) &= \frac{\sqrt{m}}{\lambda_{\mathrm{u},j}} \,
     k(\vect{x},\vect{x}_{\mathrm{u}}) \, \vectb{\phi}_{\mathrm{u},j},
     \label{eq:nystrom-phi}
\end{align}
where $\lambda_{\mathrm{u},j}$ and $\vectb{\phi}_{\mathrm{u},j}$ correspond to the $j$th eigenvalue and eigenvector of $\vect{K}_{\mathrm{u},\mathrm{u}}$. This scheme was originally introduced to the GP context by \citet{Williams+Seeger:2001}. They presented a sparse scheme, where the resulting approximate prior covariance over the latent variables is $\vect{K}_{\mathrm{f},\mathrm{u}} \vect{K}_{\mathrm{u},\mathrm{u}}^{-1} \vect{K}_{\mathrm{u},\mathrm{f}}$, which can be derived directly from Equations~\eqref{eq:nystrom-lambda} and \eqref{eq:nystrom-phi}.

As discussed by \citet{Quinonero-Candela+Rasmussen:2005}, the Nystr{\"o}m method by \citet{Williams+Seeger:2001} does not correspond to a well-formed probabilistic model. However, several methods modifying the inducing point approach are widely used. The \emph{Subset of Regressors} \citep[SOR,][]{Smola+Bartlett:2001} method uses the Nystr{\"o}m approximation scheme for approximating the whole covariance function, 
\begin{equation}
  k_\mathrm{SOR}(\vect{x},\vect{x}') =
    \sum_{j=1}^m \widetilde{\lambda}_j \, 
      \widetilde{\phi}_j(\vect{x}) \, \widetilde{\phi}_j(\vect{x}'),
\end{equation}
whereas the sparse Nystr{\"o}m method \citep{Williams+Seeger:2001} only replaces the training data covariance matrix. The SOR method is in this sense a complete Nystr{\"o}m approximation to the full GP problem. A method in-between is the \emph{Deterministic Training Conditional} \citep[DTC, ][]{Csato+Opper:2002, Seeger+Williams+Lawrence:2003} method which retains the true covariance for the training data, but uses the approximate cross-covariances between training and test data. For DTC, tampering with the covariance matrix causes the result not to actually be a Gaussian process. The \emph{Variational Approximation} \citep[VAR,][]{Titsias:2009} method modifies the DTC method by an additional trace term in the likelihood that comes from the variational bound.

The \emph{Fully Independent (Training) Conditional} \citep[FIC,][]{Quinonero-Candela+Rasmussen:2005} method \citep[originally introduced as \emph{Sparse Pseudo-Input GP} by][]{Snelson+Ghahramani:2006} is also based on the Nystr{\"o}m approximation but contains an additional diagonal term replacing the diagonal of the approximate covariance matrix with the values from the true covariance. The corresponding prior covariance function for FIC, is thus
\begin{equation}
\begin{split}
  &k_\mathrm{FIC}(\vect{x}_i,\vect{x}_j) \\ &= 
  k_\mathrm{SOR}(\vect{x}_i,\vect{x}_j) + 
    \delta_{i,j} (k(\vect{x}_i,\vect{x}_j) - k_\mathrm{SOR}(\vect{x}_i,\vect{x}_j)),
\end{split}
\end{equation}
where $\delta_{i,j}$ is the Kronecker delta.

Figure~\ref{fig:inducing} illustrates the effect of the approximations compared to the exact correlation structure in the GP. The dashed contours show the exact correlation contours computed for three locations with the squared exponential covariance function. Figure~\ref{fig:inducing-fic} shows the results for the FIC approximation with 16 inducing points (locations shown in the figure). It is clear that the number of inducing points or their locations are not sufficient to capture the correlation structure. For similar figures and discussion on the effects of the inducing points, see \citet{Vanhatalo+Pietilainen+Vehtari:2010}. This behavior is not unique to SOR or FIC, but applies to all the methods from the Nystr{\"o}m family.

\begin{figure*}[!t]
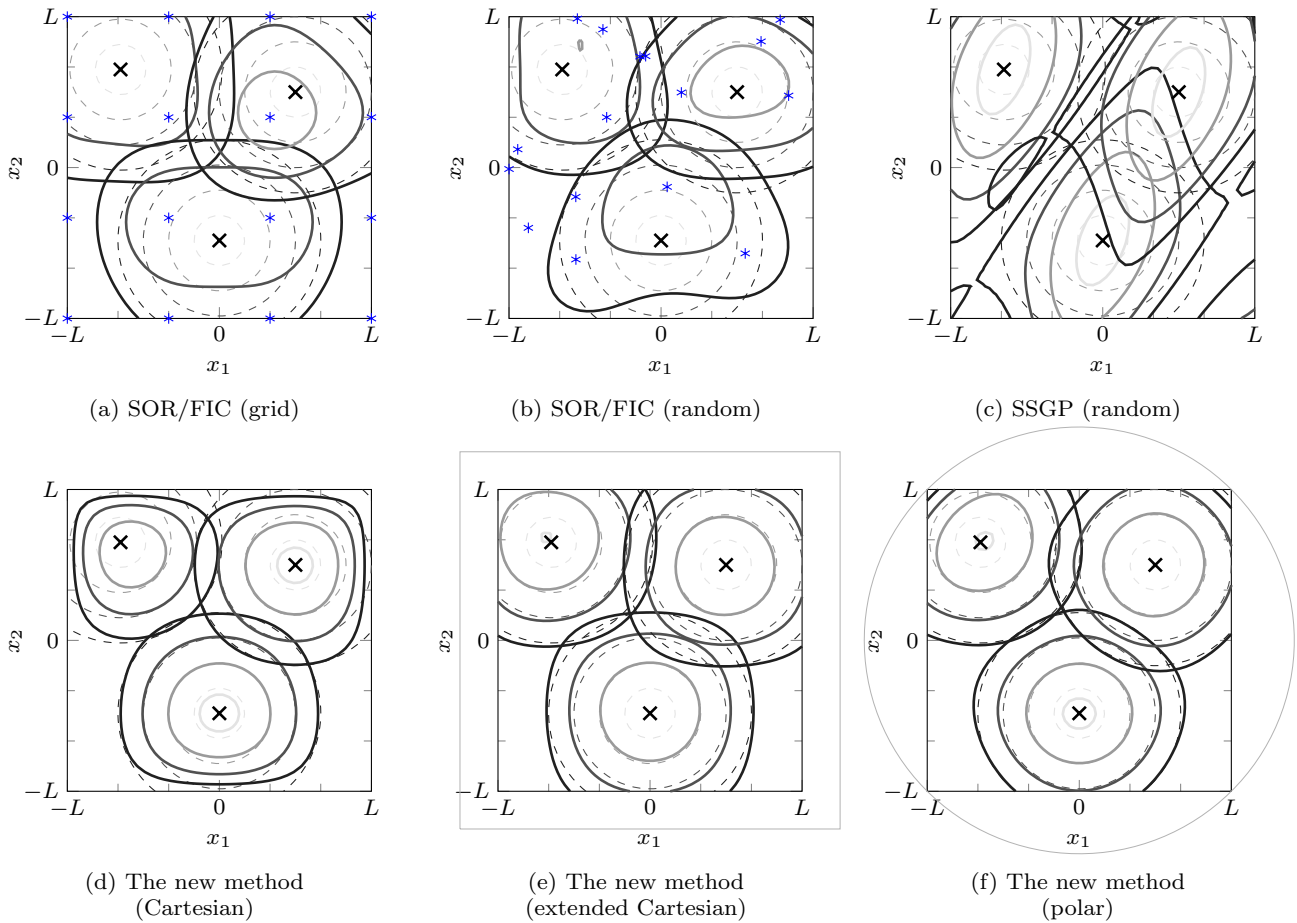


  \setlength\figureheight{0.23\textwidth}
  \setlength\figurewidth{0.23\textwidth} 

  \pgfplotsset{ytick align=inside, 
               xtick align=inside,
               every axis y label/.style={at={(ticklabel
               cs:0.5)},rotate=90,anchor=center},
               minor tick num=2}

  \hspace*{\fill}%
  \begin{subfigure}[b]{0.32\textwidth}
    \centering
    \tikzsetnextfilename{tikz-inducing-fic}
    \sffamily \footnotesize
    \input{fig/inducing-fic.tex}
    
    \caption{SOR/FIC (grid)}
    \label{fig:inducing-fic}
  \end{subfigure}%
  \hspace*{\fill}%
  \begin{subfigure}[b]{0.32\textwidth}
    \centering 
    \tikzsetnextfilename{tikz-inducing-fic-rand}
    \sffamily \footnotesize
    \input{fig/inducing-fic-rand.tex}
    
    \caption{SOR/FIC (random)}
    \label{fig:inducing-fic-rand}
  \end{subfigure}%
  \hspace*{\fill}%
  \begin{subfigure}[b]{0.32\textwidth}
    \centering 
    \tikzsetnextfilename{tikz-inducing-ssgp}
    \sffamily \footnotesize
    \input{fig/inducing-ssgp.tex}
    
    \caption{SSGP (random)}
    \label{fig:inducing-ssgp}
  \end{subfigure}%
  \hspace*{\fill}
  \\
  \hspace*{\fill}%
  \begin{subfigure}[b]{0.32\textwidth}
    \centering
    \tikzsetnextfilename{tikz-inducing-new}
    \sffamily \footnotesize
    \input{fig/inducing-new.tex}
    
    \caption{Our method \\ (Cartesian)}
    \label{fig:inducing-new}
  \end{subfigure}%
  \hspace*{\fill}%
  \begin{subfigure}[b]{0.32\textwidth}
    \centering 
    \captionsetup{justification=centering}
    \tikzsetnextfilename{tikz-inducing-new-bc}
    \sffamily \footnotesize
    \input{fig/inducing-new-bc.tex}
    
    \caption{Our method \\ (extended Cartesian)}
    \label{fig:inducing-new-bc}
  \end{subfigure}%
  \hspace*{\fill}%
  \begin{subfigure}[b]{0.32\textwidth}
    \centering
    \tikzsetnextfilename{tikz-inducing-new-disk}
    \sffamily \footnotesize
    \input{fig/inducing-new-disk.tex}
    
    \caption{Our method \\ (polar)}
    \label{fig:inducing-new-disk}
  \end{subfigure}%
  \hspace*{\fill}%
  \tikzexternaldisable
  \caption{Correlation contours computed for three locations 
    (\ref{addplot:inducing-fic0}) corresponding to the squared exponential 
    covariance function (exact contours dashed). 
    The rank of each approximation is $m=16$, 
    and the locations of the inducing inputs are marked with blue stars 
    (\ref{addplot:inducing-fic1}). The hyperparameters are the same in each
    figure. The domain boundary is shown in thin grey 
    (\ref{addplot:inducing-new-bc1}) if extended outside the box.}
  \label{fig:inducing}
  \tikzexternalenable
\end{figure*}

\subsection{Direct Spectral Methods}
The spectral analysis and series expansions of Gaussian processes has a long history. A classical result \citep[see, \eg,][and references therein]{Loeve:1963,Van-Trees:1968,Adler:1981,Cramer+Leadbetter:1967} is that in a compact set $\vect{x},\vect{x}' \in \Omega \subset \mathbb{R}^d$ defined continuous covariance function can be expanded into a Mercer series
\begin{equation}
  K(\vect{x},\vect{x}') = \sum_j \gamma_j \, \varphi_j(\vect{x}) \,  \varphi_j(\vect{x}'),
  \label{eq:mercer_again}
\end{equation}
where $\gamma_j$ and $\varphi_j$ are the eigenvalues and the orthonormal eigenfunctions of the covariance function, respectively, defined as
\begin{equation}
  \int_{\Omega} K(\vect{x},\vect{x}') \, \varphi_j(\vect{x}') \, \dd \vect{x}'
  = \gamma_j \, \varphi_j(\vect{x}).
  \label{eq:eigen_again}
\end{equation}
Furthermore, the convergence happens absolutely and uniformly \citep{Adler:1981}. This also means that we can approximate the covariance function with a finite truncation of the series and the approximation is guaranteed to converge to the exact covariance function when the number of terms is increased.

In the case of Gaussian processes we get that a zero mean Gaussian process with the covariance function $K(\vect{x},\vect{x}')$ has the following Karhunen--Loeve series expansion in the domain $\Omega$:
\begin{equation}
  f(\vect{x}) = \sum_j f_j \, \varphi_j(\vect{x}),
\end{equation}
where $f_j$ are independent zero-mean Gaussian random variables with variances $\gamma_j$. The (also classical) generalization of this classical result to more general inner products was already discussed in Section~\ref{sec:inner_product_point_of_view}. 

In the case that $\Omega$ is not compact, but covers the whole $\mathbb{R}^d$, and when the covariance function is homogeneous, the eigenvalues defined by \eqref{eq:eigen_again} are no longer discrete, but they can only be expressed as the spectral density $S(\vectb{\omega})$ which can be seen as a continuum of eigenvalues. The eigenfunctions become complex exponentials, that is, sines and cosines -- which in turn are a subset of eigenfunctions of Laplace operator. In this background, what \eqref{eq:approximation} essentially says is that we can approximate the Mercer expansion \eqref{eq:mercer_again} by using the basis consisting of the Laplacian eigenfunctions $\varphi_j(\vect{x}) \approx \phi_j(\vect{x})$ and point-wise evaluations of the spectral density at the Laplacian eigenvalues $\gamma_j \approx S(\sqrt{-\lambda_j})$.  

Another related classical connection is to the works in the relationship of spline interpolation and Gaussian process priors \citep{Wahba:1978,Kimeldorf+Wahba:1970,Wahba:1990}. In particular, it is well-known \citep[see, \eg,][]{Wahba:1990} that spline smoothing can be seen as Gaussian process regression with a specific choice of covariance function. The relationship of the spline regularization with Laplace operators then leads to series expansion representations that are closely related to the approximations considered here.

In more recent machine learning context, the sparse spectrum GP (SSGP) method introduced by \citet{Lazaro-Gredilla+Quinonero-Candela+Rasmussen+Figueiras-Vidal:2010} uses the spectral representation of the covariance function for drawing random samples from the spectrum. These samples are used for representing the GP on a trigonometric basis
\begin{multline}
  \vectb{\phi}(\vect{x}) = 
  \big( 
   \cos(2\pi \, \vect{s}_1\T \vect{x}) ~ \sin(2\pi \, \vect{s}_1\T \vect{x}) ~ \ldots \\
   \cos(2\pi \, \vect{s}_h\T \vect{x}) ~ \sin(2\pi \, \vect{s}_h\T \vect{x})
  \big),
\end{multline}
where the spectral points $\vect{s}_r, r = 1,2,\ldots,h$ ($2h=m$), are sampled from the spectral density of the original stationary covariance function (following the normalization convention used in the original paper). The covariance function corresponding to the SSGP scheme is now of the form
\begin{align} \label{eq:k_SSGP}
  k_\mathrm{SSGP}(\vect{x},\vect{x}') 
  &= \frac{2 \sigma^2}{m} \, \vectb{\phi}(\vect{x}) \, \vectb{\phi}\T(\vect{x}') \nonumber \\
  &= \frac{\sigma^2}{h} 
    \sum_{r=1}^h \cos\left( 2\pi \, \vect{s}_r\T (\vect{x}-\vect{x}') \right),
\end{align}
where $\sigma^2$ is the magnitude scale hyperparameter. This representation of the sparse spectrum method converges to the full GP in the limit of the number of spectral points going to infinity, and it is the preferred formulation of the method in one or two dimensions \citep[see][for discussion]{Lazaro-Gredilla:2010}. We can interpret the SSGP method in \eqref{eq:k_SSGP} as a Monte Carlo approximation of the Wiener--Khintchin integral. In order to have a representative sample of the spectrum, the method typically requires the number of spectral points to be large. For high-dimensional inputs the number of required spectral points becomes overwhelming, and optimizing the spectral locations along with the hyperparameters attractive. However, as argued by \citet{Lazaro-Gredilla+Quinonero-Candela+Rasmussen+Figueiras-Vidal:2010}, this option does not converge to the full GP and suffers from overfitting to the training data \citep[see][for discussion on overfitting]{Gal+Turner:2015}.

Contours for the sparse spectrum SSGP method are visualized in Figure~\ref{fig:inducing-ssgp}. Here the spectral points were chosen at random following \citet{Lazaro-Gredilla:2010}. Because the basis functions are spanned using both sines and cosines, the number of spectral points was $h=8$ in order to match the rank $m=16$. These results agree well with those presented in the \citet{Lazaro-Gredilla+Quinonero-Candela+Rasmussen+Figueiras-Vidal:2010} for a one-dimensional example. For this particular set of spectral points some directions of the contours happen to match the true values very well, while other directions are completely off. Increasing the rank from 16 to 100 would give comparable results to the other methods.

Recently \citet{Hensman+Durrande+Solin:2018} presented a variational Fourier feature approximation for Gaussian processes that was derived for the Mat\'ern class of kernels, where the approximation structure is set up by a low-rank plus diagonal structure. The key differences here are the fully diagonal (independent) structure in the $\vect{K}_{u,u}$ matrix (giving rise to additional speed-up) and the generality of only requiring the spectral density function to be known.

While SSGP is based on a sparse spectrum, the reduced-rank method proposed in this paper aims to make the spectrum as `full' as possible at a given rank. While SSGP can be interpreted as a Monte Carlo integral approximation, the corresponding interpretation to the proposed method would be a numerical quadrature-based integral approximation (\cf\ the convergence proof in Appendix~\ref{app:conv_proof1}). Figure~\ref{fig:inducing-new} shows the same contours obtained by the proposed reduced-rank method. Here the eigendecomposition of the Laplace operator has been obtained for the square $\Omega = [-L,L] \times [-L,L]$ with Dirichlet boundary conditions. The contours match well with the full solution towards the middle of the domain. The boundary effects drive the process to zero, which is seen as distortion near the edges.

Figure~\ref{fig:inducing-new-bc} shows how extending the boundaries just by 25\% and keeping the number of basis functions fixed at 16, gives good results. The last Figure~\ref{fig:inducing-new-disk} corresponds to using a disk shaped domain instead of the rectangular. The eigendecomposition of the Laplace operator is done in polar coordinates, and the Dirichlet boundary is visualized by a circle in the figure.

\subsection{Structure Exploiting and Decomposition Methods}
Other methods for scalable Gaussian processes include many structure exploiting techniques that, similarly to general inducing input methods, aim to be agnostic to the choice of covariance function. They rather exploit the structure of the inputs \citep[see][for discussion on Kronecker and Toeplitz algebra]{Saatcci:2012}, and not the GP prior per se. Most notably, Scalable Kernel Interpolation \citep[SKI, ][]{Wilson+Nickisch:2015} is an inducing point method that achieves $O(n + m \log m)$ time complexity and $\mathcal{O}(n + m)$ space complexity. Through local cubic kernel interpolation, the SKI framework is used in KISS-GP \citep[see][for details]{Wilson+Nickisch:2015} which uses Kronecker and Toeplitz algebra on grids of inducing inputs to speed up inference. 

The computational complexity of the SKI approach scales cubically in the input dimenionality $d$. Other recent methods \citep[\eg,][]{Gardner+Pleiss+Wu+Weinberger+Wilson:2018,Izmailov+Novikov+Kropotov:2018} have reduced the time complexity to linear in $d$ as well (\eg, $\mathcal{O}(d n + d m \log m)$). These methods typically leverage parallelization (well suited for GPU calculations) or iterative methods. 

Furthermore, general methods form numerical linear algebra for approximately solving eigenvalue and singular value problems allow for fast low-rank decompositions. These methods ignore the kernel learning perspective, but can provide useful tools in practice. For example, the pivoted Cholesky decomposition \citep{Harbrecht+Peters+Schneider:2012,Bach:2013} allows constructing a low-rank approximation to an $n \times n$ positive definite matrix in $O(n m^2)$ time. There are also methods for fast randomized singular value decompositions based on subsampled Hadamard transformations \citep[\eg, ][]{Boutsidis+Gittens:2013}, with some further details in \citet{Le+Sarlos+Smola:2013}.
These methods provide speedup to the general linear algebraic problem, but ignore the well structured nature of the specific application to Gaussian process regression with stationary prior covariance functions.

\begin{figure}[!t]

  \setlength{\figurewidth}{.85\columnwidth}
  \setlength{\figureheight}{.8\figurewidth}
  \pgfplotsset{
    legend cell align=right, ...
    every axis y label/.style={at={(0,0.5)},xshift=-2em,rotate=90,anchor=center}, 
    tick label style={font=\footnotesize},
    y tick label style={rotate=90},
    legend pos=north east,
    legend style={inner xsep=1pt,inner ysep=0.5pt,nodes={inner sep=1pt,text depth=0.1em},font=\tiny},
    minor x tick num=1,
    scale only axis
  }
  \centering
  \sffamily \footnotesize %
  \tikzsetnextfilename{tikz-varying-L-KL}
  \input{fig/varying-L-KL.tex}
  \tikzexternalenable
  \caption{The Kullback--Leibler divergence between the approximative and exact GP posterior by varying the boundary $L$ and keeping all other parameters fixed.}
  \label{fig:varying-L}
\end{figure}

\begin{figure*}[!t]
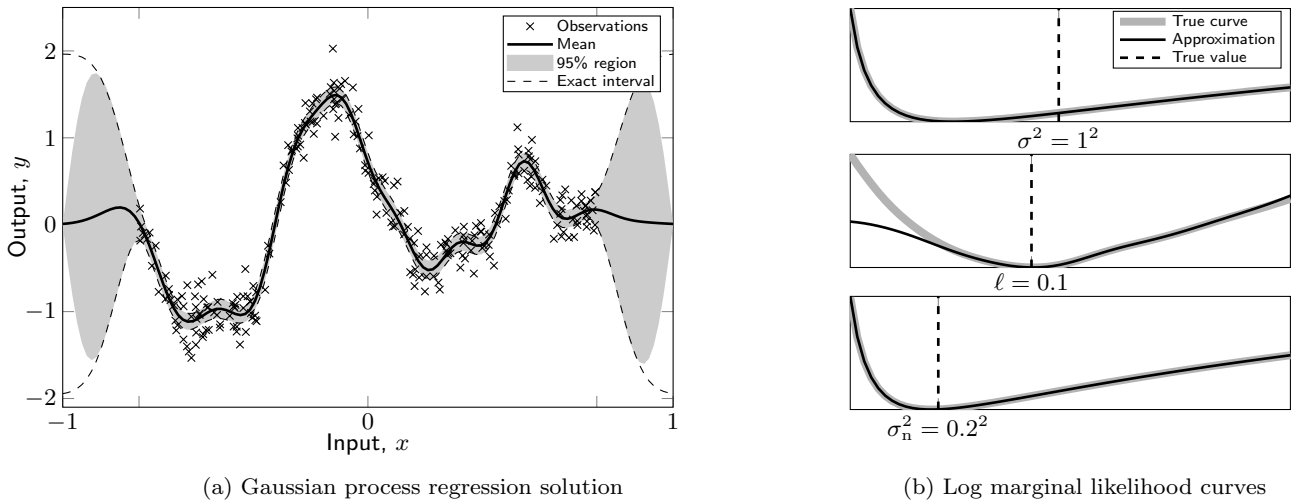

  \tikzexternalenable

  \begin{subfigure}[b]{0.62\textwidth}

    \setlength\figureheight{5.3cm}
    \setlength\figurewidth{0.85\textwidth}

    \sffamily \tiny \hspace{-5mm} \vspace{-1mm}
    \tikzsetnextfilename{tikz-toy-ex-gp}
    \input{fig/toy_ex_gp.tex}
    
    \caption{Gaussian process regression solution}
    \label{fig:toy_example-gp}
  \end{subfigure}%
  ~ %
  \begin{subfigure}[b]{0.38\textwidth}

    \setlength\figureheight{1.5cm}
    \setlength\figurewidth{\textwidth}
    \centering \sffamily \tiny \hspace{-5mm}
    \tikzsetnextfilename{tikz-toy-ex-lik}
    \input{fig/toy_ex_lik.tex}
    
    \caption{Log marginal likelihood curves}
    \label{fig:toy_example-lik}
  \end{subfigure}%

  \caption{(a)~256 data points generated from a GP with hyperparameters 
    $(\sigma^2,\ell,\sigma_\mathrm{n}^2) = (1^2, 0.1, 0.2^2)$, the full 
    GP solution, and an approximate solution with $m=32$. (b)~Negative 
    marginal likelihood curves for the signal variance $\sigma^2$, 
    length-scale $\ell$, and noise variance $\sigma_\mathrm{n}^2$.}
  \label{fig:toy_example}
\end{figure*}

\section{Experiments}
\label{sec:experiments}
In this section we aim to test the convergence results of the method in practice, provide examples of the practical use of the proposed method, and compare it against other methods that are typically used in a similar setting. We start with small simulated one-dimensional datasets, and then provide more extensive comparisons by using real-world data. We also consider an example of data, where the input domain is the surface of a sphere, and conclude our comparison by using a very large dataset to demonstrate what possibilities the computational benefits open.

\subsection{Variation of Domain Size}
\label{sec:domain-size}
In addition to the theoretical analysis of approximation error, we provide a study of the effect of choosing the domain size. We set up an experiment where we simulate data ($n=100$ and all results averaged over 10 independent draws) from GP priors with a squared exponential covariance function with unit hyperparameters and corrupting additive Gaussian noise with variance $\sigma_\mathrm{n}^2 = 0.1^2$. The inputs are chosen uniformly randomly in $[-\tilde{L},\tilde{L}]$ with $\tilde{L}=1$. We study the effect of varying the boundary location $L \in (1,10]$.

Figure~\ref{fig:varying-L} shows the Kullback--Leibler (KL) divergence \citep[see, \eg,][Appendix~A for the the identities for the KL between two multivariate Gaussians]{Rasmussen+Williams:2006} between the approximative GP posterior and the exact GP posterior evaluated over ten uniformly spaced points. The same curve is recalculated for $m=5, 10, 15,$ and $20$. The figure shows that the KL has a single minimum that describes the trade-off of being far enough from the data but close enough not to start losing representative power with the given number of basis functions $m$. Even though the KL suggests there would be a single best choice for $L$, the practical sensitivity to the choice of $L$ is low. Already for $m=5$, the MSE in the posterior mean is $10^{-5}$ (note that the data has unit magnitude scale) when $L$ is chosen one to two length-scales from the data boundary $\tilde{L}$.

\subsection{Comparison Study}
\label{sec:toy-example}
For assessing the performance of different methods we use 10-fold cross-validation and evaluate the following measures based on the validation set: the \emph{standardized mean squared error} (SMSE) and the \emph{mean standardized log loss} (MSLL), respectively defined as:
\begin{equation}
  \mathrm{SMSE} = \sum_{i=1}^{n_*} {(y_{*i} - \mu_{*i})^2 \over \mathrm{Var}[y]},
\end{equation} 
and
\begin{equation}
  \mathrm{MSLL} = \frac{1}{2 n_*} \sum_{i=1}^{n_*} \bigg(
                    { (y_{*i} - \mu_{*i})^2 \over \sigma_{*i}^2} 
                    + \log 2\pi\sigma_{*i}^2 \bigg),
\end{equation} 
where $\mu_{*i} = \mathbb{E}[f(\vect{x}_{*i})]$ and $\sigma_{*i}^2 = \mathbb{V}[f(\vect{x}_{*i})] + \sigma_\mathrm{n}^2$ are the predictive mean and variance for test sample $i=1,2,\ldots,n_*$, and $y_{*i}$ is the actual test value. The training data variance is denoted by $\mathrm{Var}[y]$. For all experiments, the values reported are averages over ten repetitions.

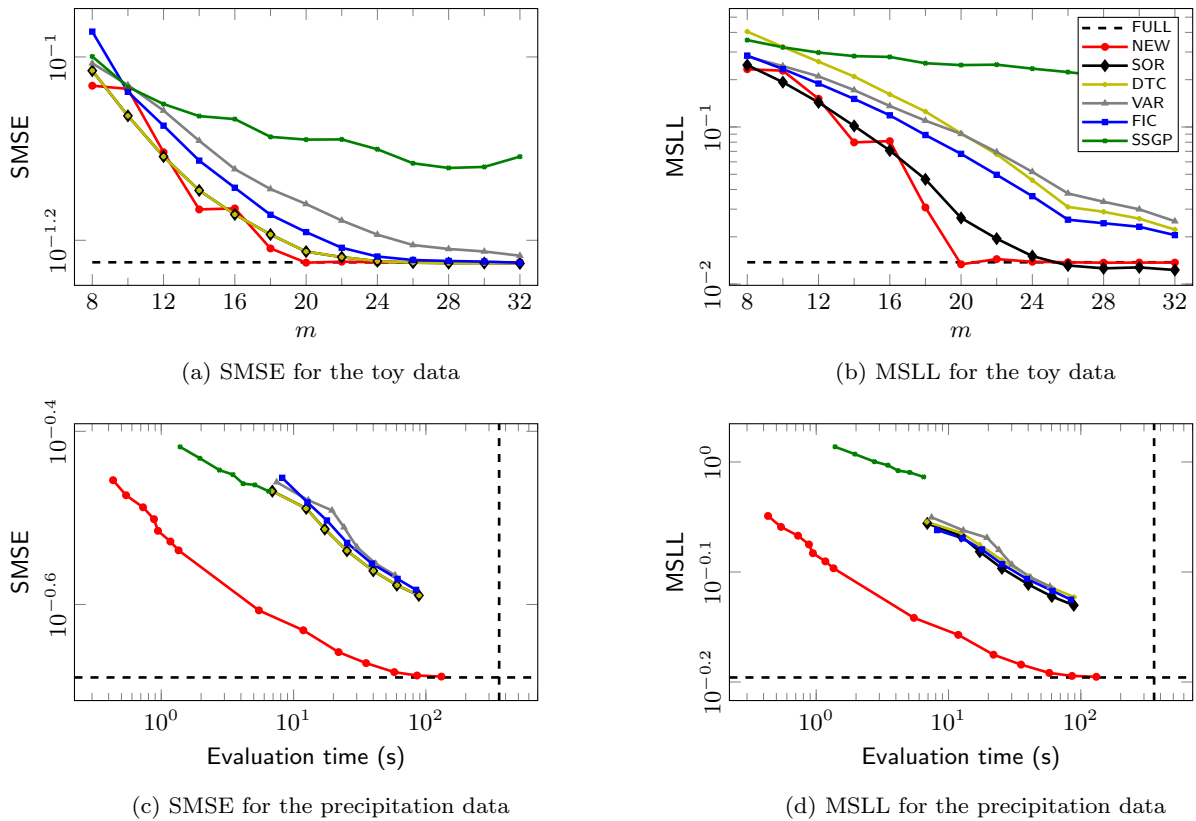
\begin{figure*}[!t]

  \centering

  \setlength{\figureheight}{.17\textheight}
  \setlength{\figurewidth}{.40\textwidth}
  \pgfplotsset{
    legend cell align=right, ...
    every axis y label/.style={at={(0,0.5)},xshift=-2em,rotate=90,anchor=center}, 
    tick label style={font=\footnotesize},
    y tick label style={rotate=90},
    legend pos=north east,
    legend style={inner xsep=1pt,inner ysep=0.5pt,nodes={inner sep=1pt,text depth=0.1em},font=\tiny},
    minor x tick num=1
  }
  \hspace*{\fill}
  \begin{subfigure}[t]{0.48\textwidth}

    \pgfplotsset{every axis legend/.code={\let\addlegendentry\relax}}
    \sffamily \footnotesize %
    \tikzsetnextfilename{tikz-results-toy-smse}
    % This file was created by matlab2tikz v0.4.2.
% Copyright (c) 2008--2013, Nico Schlömer <nico.schloemer@gmail.com>
% All rights reserved.
% 
% The latest updates can be retrieved from
%   http://www.mathworks.com/matlabcentral/fileexchange/22022-matlab2tikz
% where you can also make suggestions and rate matlab2tikz.
% 
% 
% 
%
% defining custom colors
\definecolor{mycolor1}{rgb}{0.75,0.75,0}%
\begin{tikzpicture}

\begin{axis}[%
width=\figurewidth,
height=\figureheight,
scale only axis,
xmin=7,
xmax=33,
xtick={ 8, 12, 16, 20, 24, 28, 32},
xlabel={$m$},
ymode=log,
ymin=0.0563461350332864,
ymax=0.112925027858055,
yminorticks=true,
ylabel={SMSE},
legend style={draw=black,fill=white,legend cell align=left}
]
\addplot [
color=black,
dashed,
line width=1.0pt
]
table[row sep=crcr]{
8 0.0597069157923099\\
10 0.0597069157923099\\
12 0.0597069157923099\\
14 0.0597069157923099\\
16 0.0597069157923099\\
18 0.0597069157923099\\
20 0.0597069157923099\\
22 0.0597069157923099\\
24 0.0597069157923099\\
26 0.0597069157923099\\
28 0.0597069157923099\\
30 0.0597069157923099\\
32 0.0597069157923099\\
};
\addlegendentry{FULL};

\addplot [
color=red,
solid,
line width=1.0pt,
mark size=1.0pt,
mark=*,
mark options={solid,fill=red}
]
table[row sep=crcr]{
8 0.0930289751975447\\
10 0.0922932940020993\\
12 0.0787278448386207\\
14 0.0681879773046299\\
16 0.0683561005397187\\
18 0.0618294139451827\\
20 0.0596508397528384\\
22 0.0597956944436071\\
24 0.0597175490483689\\
26 0.0597052907002579\\
28 0.0596955432496609\\
30 0.0596999999599942\\
32 0.0597001128287244\\
};
\addlegendentry{NEW};

\addplot [
color=black,
solid,
line width=1.0pt,
mark size=1.7pt,
mark=diamond*,
mark options={solid,fill=black}
]
table[row sep=crcr]{
8 0.0966029250907511\\
10 0.0862492354203134\\
12 0.0778764252003604\\
14 0.0715258061006253\\
16 0.0673397249937524\\
18 0.0640260978647223\\
20 0.0613294533589487\\
22 0.0604832398276781\\
24 0.0598673513057676\\
26 0.0596550550911782\\
28 0.0595825178132748\\
30 0.0595926532731678\\
32 0.059535356544276\\
};
\addlegendentry{SOR};

\addplot [
color=mycolor1,
solid,
line width=1.0pt,
mark size=1.0pt,
mark=+,
mark options={solid,fill=mycolor1}
]
table[row sep=crcr]{
8 0.0966029250907511\\
10 0.0862492354203134\\
12 0.0778764252003604\\
14 0.0715258061006253\\
16 0.0673397249937524\\
18 0.0640260978647223\\
20 0.0613294533589487\\
22 0.0604832398276781\\
24 0.0598673513057676\\
26 0.0596550550911782\\
28 0.0595825178132748\\
30 0.0595926532731678\\
32 0.059535356544276\\
};
\addlegendentry{DTC};

\addplot [
color=gray,
solid,
line width=1.0pt,
mark size=0.7pt,
mark=triangle*,
mark options={solid,fill=gray}
]
table[row sep=crcr]{
8 0.0984124906975452\\
10 0.0931720165059231\\
12 0.0873957012700543\\
14 0.0810203744000252\\
16 0.0754437047841245\\
18 0.0717590207976766\\
20 0.0691246403998752\\
22 0.0663027868112051\\
24 0.0640027346840454\\
26 0.0623495823423567\\
28 0.0617388234241876\\
30 0.0613493820612706\\
32 0.060703038236498\\
};
\addlegendentry{VAR};

\addplot [
color=blue,
solid,
line width=1.0pt,
mark size=0.7pt,
mark=square*,
mark options={solid,fill=blue}
]
table[row sep=crcr]{
8 0.106568707894089\\
10 0.0916128100213541\\
12 0.0841425763701308\\
14 0.0770864548265948\\
16 0.0720035810257198\\
18 0.0672748608642466\\
20 0.064414025541847\\
22 0.0619245815041343\\
24 0.0605809283455465\\
26 0.0600489654717761\\
28 0.0599069129106113\\
30 0.0598081749064949\\
32 0.0596442281614628\\
};
\addlegendentry{FIC};

\addplot [
color=green!50!black,
solid,
line width=1.0pt,
mark size=1.0pt,
mark=x,
mark options={solid,fill=green!50!black}
]
table[row sep=crcr]{
8 0.100113603404585\\
10 0.0927640163151947\\
12 0.088845954552827\\
14 0.0861905642332804\\
16 0.0855503883372889\\
18 0.0818148016117144\\
20 0.0812514489415951\\
22 0.0812900494319207\\
24 0.0793185770926649\\
26 0.0765667964365957\\
28 0.0756720817043526\\
30 0.0758780417811181\\
32 0.0778636701554052\\
};
\addlegendentry{SSGP};

\end{axis}
\end{tikzpicture}%

    \caption{SMSE for the toy data}
    \label{fig:toy-SMSE}
  \end{subfigure}%
  \hspace*{\fill}
  \begin{subfigure}[t]{0.48\textwidth}
    \sffamily \footnotesize %
    \tikzsetnextfilename{tikz-results-toy-msll}
    % This file was created by matlab2tikz v0.4.2.
% Copyright (c) 2008--2013, Nico Schlömer <nico.schloemer@gmail.com>
% All rights reserved.
% 
% The latest updates can be retrieved from
%   http://www.mathworks.com/matlabcentral/fileexchange/22022-matlab2tikz
% where you can also make suggestions and rate matlab2tikz.
% 
% 
% 
%
% defining custom colors
\definecolor{mycolor1}{rgb}{0.75,0.75,0}%
\begin{tikzpicture}

\begin{axis}[%
width=\figurewidth,
height=\figureheight,
scale only axis,
xmin=7,
xmax=33,
xtick={ 8, 12, 16, 20, 24, 28, 32},
xlabel={$m$},
ymode=log,
ymin=0.00982075356822962,
ymax=0.567481452650072,
yminorticks=true,
ylabel={MSLL},
legend style={draw=black,fill=white,legend cell align=left}
]
\addplot [
color=black,
dashed,
line width=1.0pt
]
table[row sep=crcr]{
8 0.0137708913451369\\
10 0.0137708913451369\\
12 0.0137708913451369\\
14 0.0137708913451369\\
16 0.0137708913451369\\
18 0.0137708913451369\\
20 0.0137708913451369\\
22 0.0137708913451369\\
24 0.0137708913451369\\
26 0.0137708913451369\\
28 0.0137708913451369\\
30 0.0137708913451369\\
32 0.0137708913451369\\
};
\addlegendentry{FULL};

\addplot [
color=red,
solid,
line width=1.0pt,
mark size=1.0pt,
mark=*,
mark options={solid,fill=red}
]
table[row sep=crcr]{
8 0.233153229245066\\
10 0.228245222371451\\
12 0.151233113466669\\
14 0.0798337558764026\\
16 0.0812558629984853\\
18 0.0307443178778784\\
20 0.0133676233194376\\
22 0.0144254688461662\\
24 0.0138760984229297\\
26 0.0137684742820506\\
28 0.013681540121571\\
30 0.0137082018513682\\
32 0.0137078499510644\\
};
\addlegendentry{Hilbert-GP (ours)};

\addplot [
color=black,
solid,
line width=1.0pt,
mark size=1.7pt,
mark=diamond*,
mark options={solid,fill=black}
]
table[row sep=crcr]{
8 0.247755797904985\\
10 0.192899704770246\\
12 0.143481795807436\\
14 0.101523998261427\\
16 0.0708550583560786\\
18 0.0464263119205671\\
20 0.0263650517211808\\
22 0.0194611446986186\\
24 0.0150730850431016\\
26 0.0131391419555893\\
28 0.0125987901797085\\
30 0.0127759423664762\\
32 0.0122970438110433\\
};
\addlegendentry{SOR};

\addplot [
color=mycolor1,
solid,
line width=1.0pt,
mark size=1.0pt,
mark=+,
mark options={solid,fill=mycolor1}
]
table[row sep=crcr]{
8 0.404701145433511\\
10 0.323333204947854\\
12 0.260617299080432\\
14 0.209438072567357\\
16 0.161393018956725\\
18 0.125435178860575\\
20 0.0910526471084743\\
22 0.0670231793015462\\
24 0.045823786526511\\
26 0.0309772453559979\\
28 0.0288563095575734\\
30 0.0260866337440907\\
32 0.0222552384986409\\
};
\addlegendentry{DTC};

\addplot [
color=gray,
solid,
line width=1.0pt,
mark size=0.7pt,
mark=triangle*,
mark options={solid,fill=gray}
]
table[row sep=crcr]{
8 0.280716106972888\\
10 0.244025470429048\\
12 0.210127386951632\\
14 0.171707907181547\\
16 0.136044426753204\\
18 0.10974116866221\\
20 0.0903739965283309\\
22 0.0692358732703289\\
24 0.051792951366057\\
26 0.0377710779394409\\
28 0.0334187173468001\\
30 0.0299352608733584\\
32 0.0251366114842413\\
};
\addlegendentry{VAR};

\addplot [
color=blue,
solid,
line width=1.0pt,
mark size=0.7pt,
mark=square*,
mark options={solid,fill=blue}
]
table[row sep=crcr]{
8 0.284631258321139\\
10 0.233852963734052\\
12 0.188992076298888\\
14 0.150875749653669\\
16 0.118673758896662\\
18 0.0888872567792326\\
20 0.0674328761791883\\
22 0.049605635158511\\
24 0.0362813636593844\\
26 0.0256974922090203\\
28 0.0244035007476116\\
30 0.0231676116853157\\
32 0.0205049434218721\\
};
\addlegendentry{FIC};

\addplot [
color=green!50!black,
solid,
line width=1.0pt,
mark size=1.0pt,
mark=x,
mark options={solid,fill=green!50!black}
]
table[row sep=crcr]{
8 0.357316177572842\\
10 0.321658570317463\\
12 0.297732685330197\\
14 0.282556087655918\\
16 0.279097082011509\\
18 0.254558633700725\\
20 0.247712808803435\\
22 0.249196579951357\\
24 0.235113366180629\\
26 0.223554237409333\\
28 0.210273281194116\\
30 0.213789542545961\\
32 0.223270211234156\\
};
\addlegendentry{SSGP};

\end{axis}
\end{tikzpicture}%

    \caption{MSLL for the toy data}
    \label{fig:toy-MSLL}
  \end{subfigure}%
  \hspace*{\fill}
  \\
  \hspace*{\fill}
  \begin{subfigure}[t]{0.48\textwidth}

    \pgfplotsset{every axis legend/.code={\let\addlegendentry\relax}}
    \sffamily \footnotesize %
    \tikzsetnextfilename{tikz-results-precipitation-smsevstime}
    % This file was created by matlab2tikz v0.4.2.
% Copyright (c) 2008--2013, Nico Schlömer <nico.schloemer@gmail.com>
% All rights reserved.
% 
% The latest updates can be retrieved from
%   http://www.mathworks.com/matlabcentral/fileexchange/22022-matlab2tikz
% where you can also make suggestions and rate matlab2tikz.
% 
% 
% 
%
% defining custom colors
\definecolor{mycolor1}{rgb}{0.75,0.75,0}%
\begin{tikzpicture}

\begin{axis}[%
width=\figurewidth,
height=\figureheight,
unbounded coords=jump,
scale only axis,
xmode=log,
xmin=0.220828440238951,
xmax=700.622612406229,
xminorticks=true,
xlabel={Evaluation time (s)},
ymode=log,
ymin=0.194586757118381,
ymax=0.406193652877505,
yminorticks=true,
ylabel={SMSE},
legend style={draw=black,fill=white,legend cell align=left}
]
\addplot [
color=black,
dashed,
line width=1.0pt
]
table[row sep=crcr]{
357.8478299 0.206894177197309\\
357.8478299 0.206894177197309\\
357.8478299 0.206894177197309\\
357.8478299 0.206894177197309\\
357.8478299 0.206894177197309\\
357.8478299 0.206894177197309\\
357.8478299 0.206894177197309\\
NaN NaN\\
NaN NaN\\
NaN NaN\\
NaN NaN\\
NaN NaN\\
NaN NaN\\
NaN NaN\\
};
\addlegendentry{FULL};

\addplot [
color=red,
solid,
line width=1.0pt,
mark size=1.0pt,
mark=*,
mark options={solid,fill=red}
]
table[row sep=crcr]{
0.43235528 0.349584851180282\\
0.54179052 0.335863490222314\\
0.72859041 0.3252710961436\\
0.88101871 0.315177766369839\\
0.94417498 0.305559147950911\\
1.17285002 0.296974267241042\\
1.3515241 0.290112253510083\\
5.47359724 0.247279054985067\\
11.84352711 0.234549823574962\\
21.87838973 0.221325008654503\\
35.33335339 0.214934583917858\\
57.71844645 0.209857230889822\\
85.76338078 0.207964786590508\\
130.96872456 0.20737307735697\\
};
\addlegendentry{NEW};

\addplot [
color=black,
solid,
line width=1.0pt,
mark size=1.7pt,
mark=diamond*,
mark options={solid,fill=black}
]
table[row sep=crcr]{
6.91186995 0.33952435420954\\
12.46649065 0.324329337531339\\
17.23031424 0.306933566501785\\
25.32769193 0.289728793080509\\
40.04201051 0.274709518689843\\
60.33628402 0.264418021855358\\
88.44382186 0.257311951424124\\
NaN NaN\\
NaN NaN\\
NaN NaN\\
NaN NaN\\
NaN NaN\\
NaN NaN\\
NaN NaN\\
};
\addlegendentry{SOR};

\addplot [
color=mycolor1,
solid,
line width=1.0pt,
mark size=1.0pt,
mark=+,
mark options={solid,fill=mycolor1}
]
table[row sep=crcr]{
6.86779864 0.33952435420954\\
12.42955275 0.324329337531339\\
17.18143851 0.306933566501785\\
25.3286059 0.289728793080509\\
40.05780311 0.274709518689843\\
60.57946772 0.264418021855358\\
88.9228960299999 0.257311951424124\\
NaN NaN\\
NaN NaN\\
NaN NaN\\
NaN NaN\\
NaN NaN\\
NaN NaN\\
NaN NaN\\
};
\addlegendentry{DTC};

\addplot [
color=gray,
solid,
line width=1.0pt,
mark size=0.7pt,
mark=triangle*,
mark options={solid,fill=gray}
]
table[row sep=crcr]{
7.45241003 0.347766060284452\\
12.93160558 0.331502874035263\\
19.52125624 0.322603587736653\\
24.04202378 0.308472005557648\\
30.09577025 0.292619270287672\\
41.73536731 0.280010426274874\\
58.46272212 0.271538428908957\\
NaN NaN\\
NaN NaN\\
NaN NaN\\
NaN NaN\\
NaN NaN\\
NaN NaN\\
NaN NaN\\
};
\addlegendentry{VAR};

\addplot [
color=blue,
solid,
line width=1.0pt,
mark size=0.7pt,
mark=square*,
mark options={solid,fill=blue}
]
table[row sep=crcr]{
8.20434239 0.351806567332954\\
12.73282399 0.329570076191929\\
17.87639423 0.314047857258598\\
25.404052 0.295903923388727\\
39.08805755 0.280038577581372\\
61.08784696 0.268883179396872\\
84.21667801 0.261138321954564\\
NaN NaN\\
NaN NaN\\
NaN NaN\\
NaN NaN\\
NaN NaN\\
NaN NaN\\
NaN NaN\\
};
\addlegendentry{FIC};

\addplot [
color=green!50!black,
solid,
line width=1.0pt,
mark size=1.0pt,
mark=x,
mark options={solid,fill=green!50!black}
]
table[row sep=crcr]{
1.38866948 0.382030595284103\\
1.95842083 0.370614431960156\\
2.75340384 0.359097752355242\\
3.48201779 0.354817967989979\\
4.15312703 0.346403043438074\\
5.09724071 0.345238264543245\\
6.49033088 0.339447906020766\\
NaN NaN\\
NaN NaN\\
NaN NaN\\
NaN NaN\\
NaN NaN\\
NaN NaN\\
NaN NaN\\
};
\addlegendentry{SSGP};

\addplot [
color=black,
dashed,
line width=1.0pt,
forget plot
]
table[row sep=crcr]{
0.220828440238951 0.206894177197309\\
700.622612406229 0.206894177197309\\
};
\addplot [
color=black,
dashed,
line width=1.0pt,
forget plot
]
table[row sep=crcr]{
357.8478299 0.194586757118381\\
357.8478299 0.406193652877505\\
};
\end{axis}
\end{tikzpicture}%

    \caption{SMSE for the precipitation data}
    \label{fig:precipitation-SMSE}
  \end{subfigure}%
  \hspace*{\fill}
  \begin{subfigure}[t]{0.48\textwidth}

    \pgfplotsset{every axis legend/.code={\let\addlegendentry\relax}}
    \sffamily \footnotesize %
    \tikzsetnextfilename{tikz-results-precipitation-msllvstime}
    % This file was created by matlab2tikz v0.4.2.
% Copyright (c) 2008--2013, Nico Schlömer <nico.schloemer@gmail.com>
% All rights reserved.
% 
% The latest updates can be retrieved from
%   http://www.mathworks.com/matlabcentral/fileexchange/22022-matlab2tikz
% where you can also make suggestions and rate matlab2tikz.
% 
% 
% 
%
% defining custom colors
\definecolor{mycolor1}{rgb}{0.75,0.75,0}%
\begin{tikzpicture}

\begin{axis}[%
width=\figurewidth,
height=\figureheight,
unbounded coords=jump,
scale only axis,
xmode=log,
xmin=0.220828440238951,
xmax=700.622612406229,
xminorticks=true,
xlabel={Evaluation time (s)},
ymode=log,
ymin=0.606942784019318,
ymax=1.0833691690049,
yminorticks=true,
ylabel={MSLL},
legend style={draw=black,fill=white,legend cell align=left}
]
\addplot [
color=black,
dashed,
line width=1.0pt
]
table[row sep=crcr]{
357.8478299 0.636966823154081\\
357.8478299 0.636966823154081\\
357.8478299 0.636966823154081\\
357.8478299 0.636966823154081\\
357.8478299 0.636966823154081\\
357.8478299 0.636966823154081\\
357.8478299 0.636966823154081\\
NaN NaN\\
NaN NaN\\
NaN NaN\\
NaN NaN\\
NaN NaN\\
NaN NaN\\
NaN NaN\\
};
\addlegendentry{FULL};

\addplot [
color=red,
solid,
line width=1.0pt,
mark size=1.0pt,
mark=*,
mark options={solid,fill=red}
]
table[row sep=crcr]{
0.43235528 0.893198789886186\\
0.54179052 0.872989243082058\\
0.72859041 0.857000847790584\\
0.88101871 0.841485839604773\\
0.94417498 0.826215660451244\\
1.17285002 0.812158440892403\\
1.3515241 0.80050589456682\\
5.47359724 0.721656566941169\\
11.84352711 0.696386892855968\\
21.87838973 0.668099721541768\\
35.33335339 0.654347986810418\\
57.71844645 0.643084241856535\\
85.76338078 0.639052673122286\\
130.96872456 0.637896149663509\\
};
\addlegendentry{NEW};

\addplot [
color=black,
solid,
line width=1.0pt,
mark size=1.7pt,
mark=diamond*,
mark options={solid,fill=black}
]
table[row sep=crcr]{
6.91186995 0.879397276145392\\
12.46649065 0.856373359391225\\
17.23031424 0.828841003321573\\
25.32769193 0.800400709108173\\
40.04201051 0.774130811988827\\
60.33628402 0.754725833769716\\
88.44382186 0.740871853632727\\
NaN NaN\\
NaN NaN\\
NaN NaN\\
NaN NaN\\
NaN NaN\\
NaN NaN\\
NaN NaN\\
};
\addlegendentry{SOR};

\addplot [
color=mycolor1,
solid,
line width=1.0pt,
mark size=1.0pt,
mark=+,
mark options={solid,fill=mycolor1}
]
table[row sep=crcr]{
6.86779864 0.88214800427247\\
12.42955275 0.862209525827942\\
17.18143851 0.840874774779558\\
25.3286059 0.814162561972542\\
40.05780311 0.788044817437658\\
60.57946772 0.768404215516429\\
88.9228960299999 0.75383775323829\\
NaN NaN\\
NaN NaN\\
NaN NaN\\
NaN NaN\\
NaN NaN\\
NaN NaN\\
NaN NaN\\
};
\addlegendentry{DTC};

\addplot [
color=gray,
solid,
line width=1.0pt,
mark size=0.7pt,
mark=triangle*,
mark options={solid,fill=gray}
]
table[row sep=crcr]{
7.45241003 0.890591439351248\\
12.93160558 0.866722172398576\\
19.52125624 0.853520339845231\\
24.04202378 0.83201820901998\\
30.09577025 0.806740850969092\\
41.73536731 0.785294857435696\\
58.46272212 0.770643483302571\\
NaN NaN\\
NaN NaN\\
NaN NaN\\
NaN NaN\\
NaN NaN\\
NaN NaN\\
NaN NaN\\
};
\addlegendentry{VAR};

\addplot [
color=blue,
solid,
line width=1.0pt,
mark size=0.7pt,
mark=square*,
mark options={solid,fill=blue}
]
table[row sep=crcr]{
8.20434239 0.867248127630572\\
12.73282399 0.852422962137021\\
17.87639423 0.832687685304191\\
25.404052 0.807535914503198\\
39.08805755 0.783091150494705\\
61.08784696 0.763652520342296\\
84.21667801 0.74965165046696\\
NaN NaN\\
NaN NaN\\
NaN NaN\\
NaN NaN\\
NaN NaN\\
NaN NaN\\
NaN NaN\\
};
\addlegendentry{FIC};

\addplot [
color=green!50!black,
solid,
line width=1.0pt,
mark size=1.0pt,
mark=x,
mark options={solid,fill=green!50!black}
]
table[row sep=crcr]{
1.38866948 1.03230352924907\\
1.95842083 1.01659679935496\\
2.75340384 1.00070865679471\\
3.48201779 0.993210957451386\\
4.15312703 0.982092720360688\\
5.09724071 0.978344915622941\\
6.49033088 0.96921070458998\\
NaN NaN\\
NaN NaN\\
NaN NaN\\
NaN NaN\\
NaN NaN\\
NaN NaN\\
NaN NaN\\
};
\addlegendentry{SSGP};

\addplot [
color=black,
dashed,
line width=1.0pt,
forget plot
]
table[row sep=crcr]{
0.220828440238951 0.636966823154081\\
700.622612406229 0.636966823154081\\
};
\addplot [
color=black,
dashed,
line width=1.0pt,
forget plot
]
table[row sep=crcr]{
357.8478299 0.606942784019318\\
357.8478299 1.0833691690049\\
};
\end{axis}
\end{tikzpicture}%

    \caption{MSLL for the precipitation data}
    \label{fig:precipitation-MSLL}
  \end{subfigure}%
  \hspace*{\fill}

  \caption{Standardized mean squared error (SMSE) and mean standardized 
           log loss (MSLL) results for the toy data ($d=1$, $n=256$) from 
           Figure~\ref{fig:toy_example} and the precipitation data 
           ($d=2$, $n=5776$) evaluated by 10-fold cross-validation and 
           averaged over ten repetitions. The evaluation time includes 
           hyperparameter learning.}
  \label{fig:results}
\end{figure*}

We compare our solution to SOR, DTC, VAR, and FIC using the implementations provided in the GPstuff software package \mbox{\citep[version~4.3.1, see][]{Vanhatalo+Riihimaki+Hartikainen+Jylanki+Vehtari:2013}} for Mathworks Matlab. The sparse spectrum SSGP method \citep{Lazaro-Gredilla+Quinonero-Candela+Rasmussen+Figueiras-Vidal:2010} was implemented into the GPstuff toolbox for the comparisons.\footnote{The implementation is based on the code available from Miguel L{\'a}zaro-Gredilla:  \url{http://www.tsc.uc3m.es/~miguel/downloads.php}.} The reference implementation was modified such that also non-ARD covariances could be accounted for.

The $m$ inducing inputs for SOR, DTC, VAR, and FIC were chosen at random as a subset from the training data and kept fixed between the methods. For low-dimensional inputs, this tends to lead to good results and avoid over-fitting to the training data, while optimizing the input locations alongside hyperparameters becomes the preferred approach in high input dimensions \citep{Quinonero-Candela+Rasmussen:2005}. The results are averaged over ten repetitions in order to present the average performance of the methods. In Sections~\ref{sec:toy-example} and \ref{sec:precipitation-data}, we used a Cartesian domain with Dirichlet boundary conditions for the new reduced-rank method. To avoid boundary effects,  the domain was extended by 10\% outside the inputs in each direction.

In the comparisons we followed the guidelines given by \citet{Chalupka+Williams+Murray:2013} for making comparisons between the actual performance of different methods. For hyperparameter optimization we used the \texttt{fminunc} routine in Matlab with a Quasi-Newton optimizer. We also tested several other algorithms, but the results were not sensitive to the choice of optimizer. The optimizer was run with a termination tolerance of $10^{-5}$ on the target function value and on the optimizer inputs.  The number of required target function evaluations stayed fairly constant for all the comparisons, making the comparisons for the hyperparameter learning bespoke.

Figure~\ref{fig:toy_example} shows a simulated example, where 256 data points are drawn from a Gaussian process prior with a squared exponential covariance function. We use the same parametrization as \citet{Rasmussen+Williams:2006} and denote the signal variance $\sigma^2$, length-scale $\ell$, and noise variance $\sigma_\mathrm{n}^2$. Figure~\ref{fig:toy_example-lik} shows the negative marginal log likelihood curves both for the full GP and the approximation with $m=32$ basis functions. The likelihood curve approximations are almost exact and only differs from the full GP likelihood for small length-scales (roughly for values smaller than $2L/m$). Figure~\ref{fig:toy_example-gp} shows the approximate GP solution. The mean estimate follows the exact GP mean, and the shaded region showing the 95\% confidence area differs from the exact solution (dashed) only near the boundaries.

\begin{figure*}[!t]
  \centering
  \begin{subfigure}[t]{0.32\textwidth}
    \centering
    \includegraphics[trim=0 -0.336in 0 0]{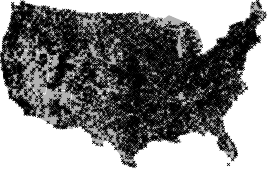}
    \caption{Observation locations}
    \label{fig:precipitation-locations}
  \end{subfigure}
  \hspace*{\fill}
  \begin{subfigure}[t]{0.32\textwidth}
    \centering
    \includegraphics[]{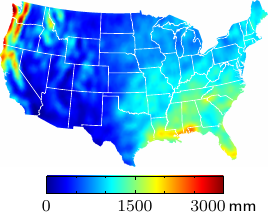}
    \caption{The full GP}
    \label{fig:precipitation-full}
  \end{subfigure}
  \hspace*{\fill}
  \begin{subfigure}[t]{0.32\textwidth}
    \centering
    \includegraphics[trim=0 -0.348in 0 0]{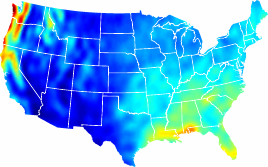}
    \caption{The reduced-rank method}
    \label{fig:precipitation-reduced}
  \end{subfigure}
  \caption{Interpolation of the yearly precipitation levels using reduced-rank GP regression. Subfigure~\ref{fig:precipitation-locations} shows the $n=5776$ weather station locations. Subfigures~\ref{fig:precipitation-full} and \ref{fig:precipitation-reduced} show the results for the full GP model and the new reduced-rank GP method.}
  \label{fig:precipitation-map}
\end{figure*}

Figures~\ref{fig:toy-SMSE} and \ref{fig:toy-MSLL} show the SMSE and MSLL values for $m=8,10,\ldots,32$ inducing inputs and basis functions for the toy dataset from Figure~\ref{fig:toy_example}. The convergence of the proposed reduced rank method is fast and as soon as the number of eigenfunctions is large enough ($m=20$) to account for the short length-scales, the approximation converges to the exact full GP solution (shown by the dashed line).

In this case the SOR method that uses the Nystr{\"o}m approximation to directly approximate the spectrum of the full GP (see \mbox{Section~\ref{sec:relationship-to-other-methods}}) seems to give good results. However, as the resulting approximation in SOR corresponds to a singular Gaussian distribution, the predictive variance is underestimated. This can be seen in \mbox{Figure~\ref{fig:toy-MSLL}}, where SOR seems to give better results than the full GP. These results are however due to the smaller predictive variance on the test set. DTC tries to fix this shortcoming of SOR---they are identical in other respects except predictive variance evaluation---and while SOR and DTC give identical results in terms of SMSE, they differ in MSLL. We also note that additional trace term in the marginal likelihood in VAR makes the likelihood surface flat, which explains the differences in the results in comparison to DTC.

The sparse spectrum SSGP method did not perform well on average. Still, it can be seen that it converges towards the performance of the full GP. The dependence on the number of spectral points differs from the rest of the methods, and a rank of $m=32$ is not enough to meet the other methods. However, in terms of best case performance over the ten repetitions with different inducing inputs and spectral points, both FIC and SSGP outperformed SOR, DTC, and VAR. Because of its `dense spectrum' approach, the proposed reduced-rank method is not sensitive to the choice of spectral points, and thus the performance remained the same between repetitions. In terms of variance over the 10-fold cross-validation folds, the methods in order of growing variance in the figure legend (the variance approximately doubling between FULL and SSGP).

\subsection{Precipitation Data}
\label{sec:precipitation-data}
As a real-data example, we consider a precipitation data set that contain US annual precipitation summaries for year 1995 \citep[$d=2$ and $n=5776$, available online, see][]{Vanhatalo+Riihimaki+Hartikainen+Jylanki+Vehtari:2013}. The observation locations are shown on a map in Figure~\ref{fig:precipitation-locations}. 

We limit the number of inducing inputs and spectral points to $m=128,192,\ldots,512$. For the our Hilbert-GP method we additionally consider ranks $m = 1024,1536, \ldots, 4096$, and show that this causes a computational burden of the same order as the conventional sparse GP methods with smaller $m$s. To avoid boundary effects,  the domain was extended by 10\% outside the inputs in each direction.

In order to demonstrate the computational benefits of the proposed model, we also present the running time of the GP inference (including hyperparameter optimization). All methods were implemented under a similar framework in the GPstuff package, and they all employ similar reformulations for numerical stability. The key difference in the evaluation times comes from hyperparameter optimization, where SOR, DTC, VAR, FIC, and SSGP scale as  $\O(nm^2)$ for each evaluation of the marginal likelihood. The proposed reduced-rank method scales as $\O(m^3)$ for each evaluation (after an initial cost of $\O(nm^2)$).

Figures~\ref{fig:precipitation-SMSE} and \ref{fig:precipitation-MSLL} show the SMSE and MSLL results for this data against evaluation time. On this scale we note that the evaluation time and accuracy, both in terms of SMSE and MSLL, are alike for SOR, DTC, VAR, and FIC. SSGP is faster to evaluate in comparison with the Nystr{\"o}m family of methods, which comes from the simpler structure of the approximation. Still, the number of required spectral points to meet a certain average error level is larger for SSGP. 

The results for the proposed reduced-rank method (Hilbert-GP) show that with two input dimensions, the required number of basis functions is larger. For the first seven points, we notice that even though the evaluation is two orders of magnitude faster, the method performs only slightly worse in comparison to conventional sparse methods. By considering higher ranks (the next seven points), our method converges to the performance of the full GP (both in SMSE and MSLL), while retaining a computational time comparable to the conventional methods. This type of spatial medium-size GP regression problems can thus be solved in seconds.

Figures~\ref{fig:precipitation-full} and \ref{fig:precipitation-reduced} show interpolation of the precipitation levels using a full GP model and the reduced-rank method ($m = 1728$), respectively. The results are practically identical, as is easy to confirm from the color surfaces. Obtaining the reduced-rank result (including initialization and hyperparameter learning) took slightly less than 30~seconds on a laptop computer (MacBook Air, Late~2010 model, 2.13~GHz, 4~GB RAM), while the full GP inference took approximately 18~minutes.

\begin{figure*}[!t]
  \centering
  \hspace*{\fill} %
  \begin{subfigure}[t]{0.45\textwidth}
    \centering
    \includegraphics
     {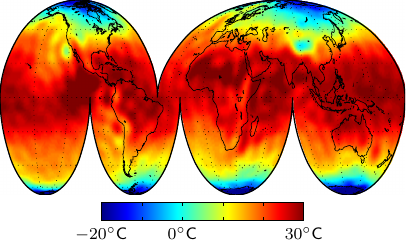}
    \caption{The mean temperature}
    \label{fig:temperature-mean}
  \end{subfigure}%
  \hspace*{\fill} %
  \begin{subfigure}[t]{0.45\textwidth}
    \centering
    \includegraphics
     {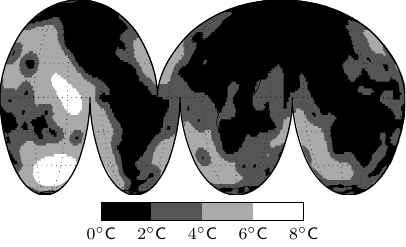}
    \caption{Standard deviation contours}
    \label{fig:temperature-std}
  \end{subfigure}%
  \hspace*{\fill} %
  \caption{Modeling of the yearly mean temperature on the spherical 
           surface of the Earth ($n = 11\,028$). Figure~\ref{fig:temperature-std}
           shows the standard deviation contours which match well with the continents.}
  \label{fig:temperature}
\end{figure*}

\subsection{Temperature Data on the Surface of the Globe}
\label{sec:temperature-data}
We also demonstrate the use of the method in non-Cartesian coordinates. We consider modeling of the spatial mean temperature over a number of $n = 11\,028$ locations around the globe.\footnote{The data are available for download from US National Climatic Data Center: \url{http://www7.ncdc.noaa.gov/CDO/cdoselect.cmd} (accessed January~3, 2014).}

As earlier demonstrated in Figure~\ref{fig:spheres}, we use the Laplace operator in spherical coordinates as defined in \eqref{eq:spherical-laplacian}. The eigenfunctions for the angular part are the Laplace's spherical harmonics. The evaluation of the approximation does not depend on the coordinate system, and thus all the equations presented in the earlier sections remain valid. We use the squared exponential covariance function and $m = 1089$ basis functions.

Figure~\ref{fig:temperature} visualizes the modeling outcome. The results are visualized using an interrupted projection (an adaption of the Goode homolosine projection) in order to preserve the length-scale structure across the map. The uncertainty is visualized in Figure~\ref{fig:temperature-std}, which corresponds to the $n = 11{,}028$  observation locations that are mostly spread over the continents and western countries (the white areas in Figure~\ref{fig:temperature-std} contain no observations). Obtaining the reduced-rank result (including initialization and hyperparameter learning) took approximately 50~seconds on a laptop computer (MacBook Air, Late~2010 model, 2.13~GHz, 4~GB RAM), which scales with $n$ in comparison to the evaluation time in the previous section.

\begin{table*}
  \centering
  \caption{\label{tbl:airline}
  Predictive mean squared errors (MSEs) and negative log predictive densities (NLPDs) with one standard deviation on the airline arrival delays experiment (input dimensionality $d=8$) for a number of data points ranging up to almost 6~million. The Hilbert-GP method is on par with the VFF method albeit being clearly faster due to the diagonalizable structure (solving the regression problem including hyperparameter optimization in 41 seconds on a CPU-only laptop computer).}  
  \resizebox{\textwidth}{!}{
  \begin{tabular}{lcccccccc}
\toprule
$n$ & \multicolumn{2}{c}{10,000} & \multicolumn{2}{c}{100,000} & \multicolumn{2}{c}{1,000,000} & \multicolumn{2}{c}{5,929,413} \\
& MSE & {NLPD} & MSE & {NLPD} & MSE & {NLPD} & MSE & {NLPD} \\
\midrule
Hilbert-GP & $0.97 \pm 0.14$ & $1.404 \pm 0.071$ & $0.80 \pm 0.06$ & $1.311 \pm 0.038$ & $0.83 \pm 0.02$ & $1.329 \pm 0.011$ & $0.827 \pm 0.005$ & $1.324 \pm 0.003$ \\
VFF & 0.89 $\pm$ 0.15 & {1.362 $\pm$ 0.091} & 0.82 $\pm$ 0.05 & {1.319 $\pm$ 0.030} & 0.83 $\pm$ 0.01 & {1.326 $\pm$ 0.008} & 0.827 $\pm$ 0.004 & {1.324 $\pm$ 0.003} \\
SVIGP & {0.89 $\pm$ 0.16} & {1.354 $\pm$ 0.096} & {0.79 $\pm$ 0.05} & {1.299 $\pm$ 0.033} & {0.79 $\pm$ 0.01} & {1.301 $\pm$ 0.009} & {0.791 $\pm$ 0.005} & {1.300 $\pm$ 0.003} \\
Full-RBF & 0.89 $\pm$ 0.16 & {1.349 $\pm$ 0.098} & N/A & {N/A} & N/A & {N/A} & N/A & {N/A} \\
Full-additive & 0.89 $\pm$ 0.16 & {1.362 $\pm$ 0.096} & N/A & {N/A} & N/A & {N/A} & N/A & {N/A} \\
\bottomrule
  \end{tabular}%
  }
\end{table*}

\subsection{Additive Modelling of Airline Delays}
\label{sec:airline-delays}
In order to fully use the computational benefits and also underline a way of applying the method to high-dimensional inputs, we consider a large dataset for predicting airline delays. The US flight delay prediction example \citep[originally considered by][]{Hensman+Fusi+Lawrence:2013} is a standard test data set in Gaussian process regression. This is due to the clearly non-stationary behavior and its massive size, with nearly 6~million records.

We aim to replicate and extend to the results previously presented in the work by \citet{Hensman+Durrande+Solin:2018} for the Variational Fourier Features (VFF) method. This example has also been used by \citet{Deisenroth+Ng:2015}, where it was solved using distributed Gaussian processes, and by \citet{Samo+Roberts:2016} who use this example for demonstrating the computational efficiency of string Gaussian processes. \citet{Adam+Hensman+Sahani:2016} used this data set as an example where the model can be formed by the addition of multiple underlying components.

The data consists of flight arrival and departure times for every commercial flight in the USA for the year 2008. We use the standard eight covariates $\vect{x}$  \citep[see][]{Hensman+Fusi+Lawrence:2013} which are the age of the aircraft (number of years since deployment), route distance, airtime, departure time, arrival time, day of the week, day of the month, and month. The target is to predict the delay of the aircraft at landing (in minutes), $y$.

This regression task is set up similarly as in \citet{Hensman+Durrande+Solin:2018} and \citet{Adam+Hensman+Sahani:2016}, as a Gaussian process regression model with a prior covariance structure given as a sum of ovariance functions for each input dimension and assuming the observations are corrupted by independent Gaussian noise, $\varepsilon_i \sim \mathcal{N}(0,\sigma_\mathrm{n}^2)$. The model is
\begin{equation}
\begin{split}
    f(\vect{x}) &\sim \mathcal{GP}\left(0, \sum_{d=1}^8 k_\mathrm{se}(x_d, x_d')\right),\\
    y_i &= f(\vect{x}_i) + \varepsilon_i,
\end{split}
\end{equation}
for $i=1,2,\ldots,m$. We used $m=40$ basis functions per input dimension. The boundary is set to a distance of two times the range of the data for each dimension.

We consider several subset sizes of the data, each selected uniformly at random: $n = 10{,}000$, 100,000, 1,000,000, and 5,929,413 (all data). In each case, two thirds of the data is used for training and one third for testing.  For each subset size the training is repeated ten times. The random splits are exactly the same as in \citet{Hensman+Durrande+Solin:2018}.

Table~\ref{tbl:airline} shows the (normalized) predictive mean squared errors (MSEs) and the negative log predictive densities (NLPDs) with one standard deviation on the airline arrival delays experiment. The table shows that the Hilbert-GP method is directly on par with the Variational Fourier Features (VFF) method. For the smaller subsets some variability in the results is visible, even though the MSEs and NLPDs are within one standard deviation of one another for VFF and Hilbert-GP. For the data sets in the millions, VFF and Hilbert-GP perform practically equally well.
Further analysis and interpretation of the data and model can be found in \citet{Hensman+Durrande+Solin:2018}. We have omitted reporting results for the String GP method \citet{Samo+Roberts:2016}, the Bayesian committee machine \citep[BCM,][]{Tresp:2000}, and the robust Bayesian committee machine \citep[rBCM,][]{Deisenroth+Ng:2015}. Each of these performed worse than any of the included methods, and the resulting numbers can be found listed in \citet{Hensman+Durrande+Solin:2018} and \citet{Samo+Roberts:2016}.

Running the Hilbert-GP method in this experiment (including hyperparameter training and prediction) with all $5.93$~million data took $41 \pm 2$~seconds ($120\pm 7$~s CPU time) on a MacBook Pro laptop (with all calculation done on the CPU). The is clarly faster than the VFF method with $265 \pm 6$~seconds ($626\pm11$~s CPU time), where our computational gain comes from the fully diagonal structure of the covariance. For comparison, the SVIGP method \citep{Hensman+Fusi+Lawrence:2013} required $5.1\pm0.1$~hours of computing ($27.0\pm0.8$~h CPU time) on a cluster. \citet{Samo+Roberts:2016} report that running the String GP took 91.0~hours total CPU time (or 15~h of wall-clock time on an 8-core machine). \citet{Izmailov+Novikov+Kropotov:2018} also report results for the airline dataset, where  one pass over the data taking 5200~seconds, when running on a Nvidia Tesla K80 GPU and not  assuming additive structure.

\begin{figure*}[!t]
  \centering
  \hspace*{\fill} %
  \begin{subfigure}[t]{0.45\textwidth}
    \centering
    \includegraphics[width=\textwidth]{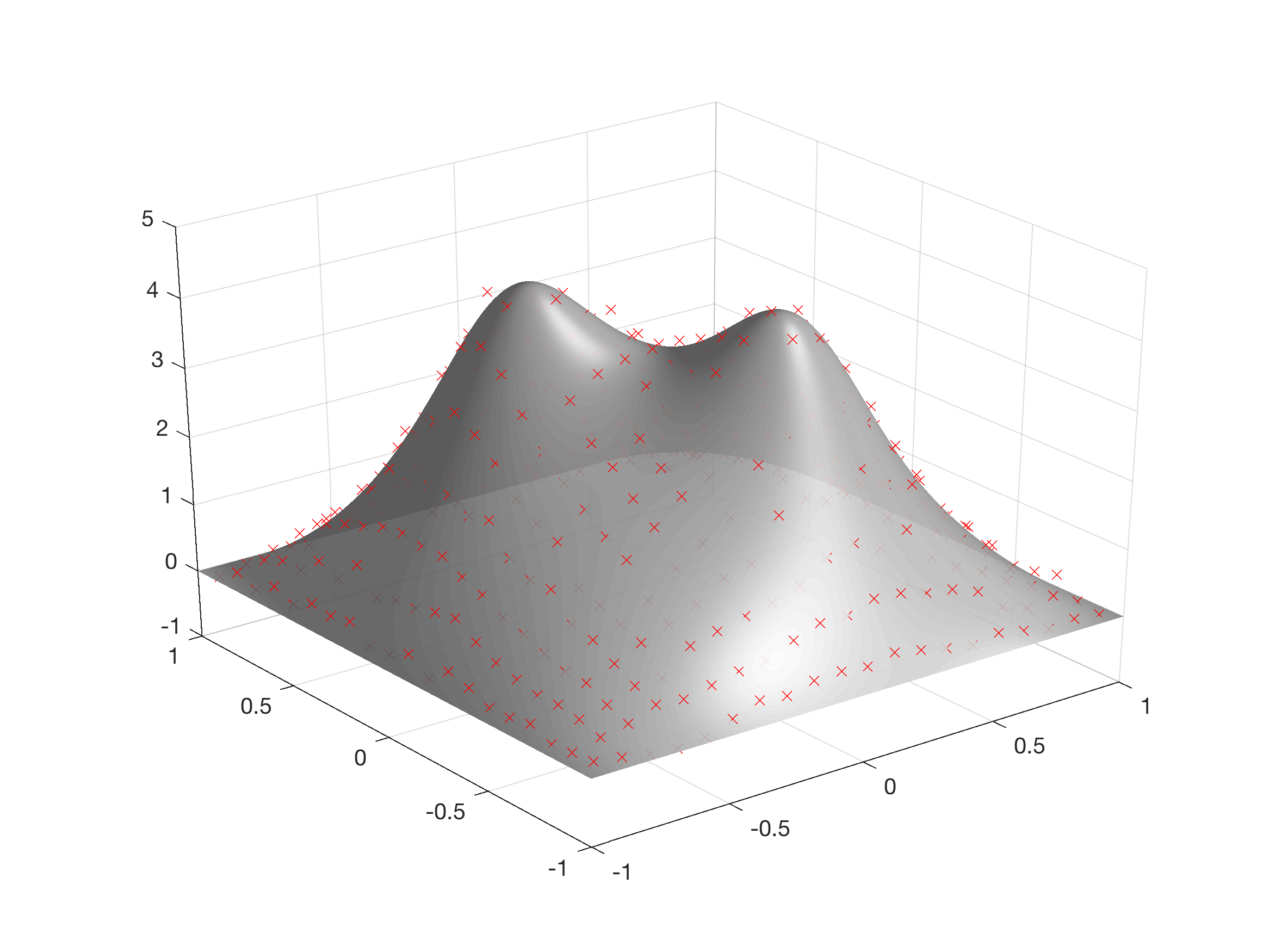}
    \caption{The true solution $g(x_1,x_2)$ and the measurements.}
    \label{fig:poisson-orig-f}
  \end{subfigure}%
  \hspace*{\fill} %
  \begin{subfigure}[t]{0.45\textwidth}
    \centering
    \includegraphics[width=\textwidth]{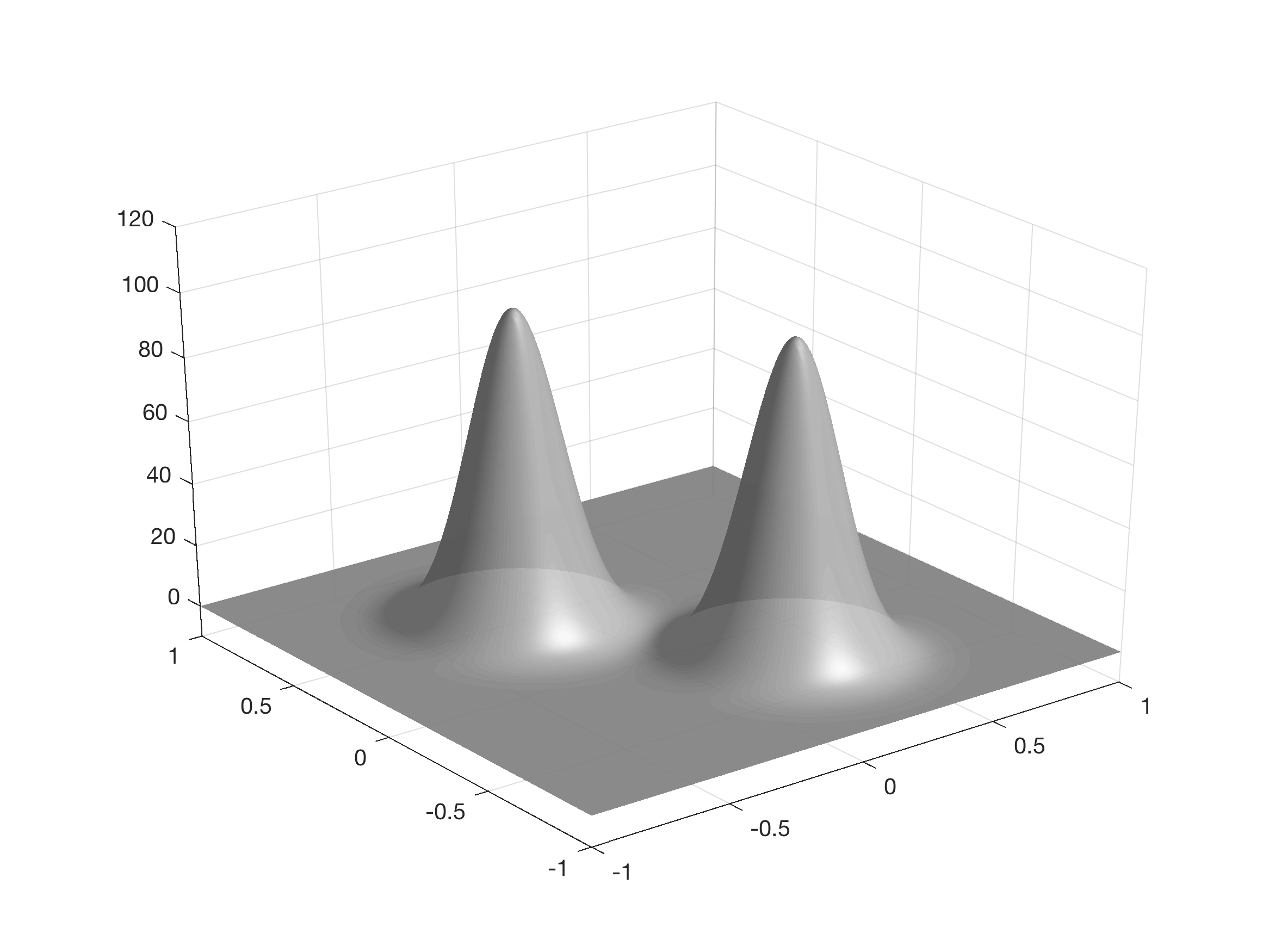}
    \caption{The true input $f(x_1,x_2)$.}
    \label{fig:poisson-orig-u}
  \end{subfigure}%
  \hspace*{\fill} \\
  \hspace*{\fill} 
  \begin{subfigure}[t]{0.45\textwidth}
    \centering
    \includegraphics[width=\textwidth]{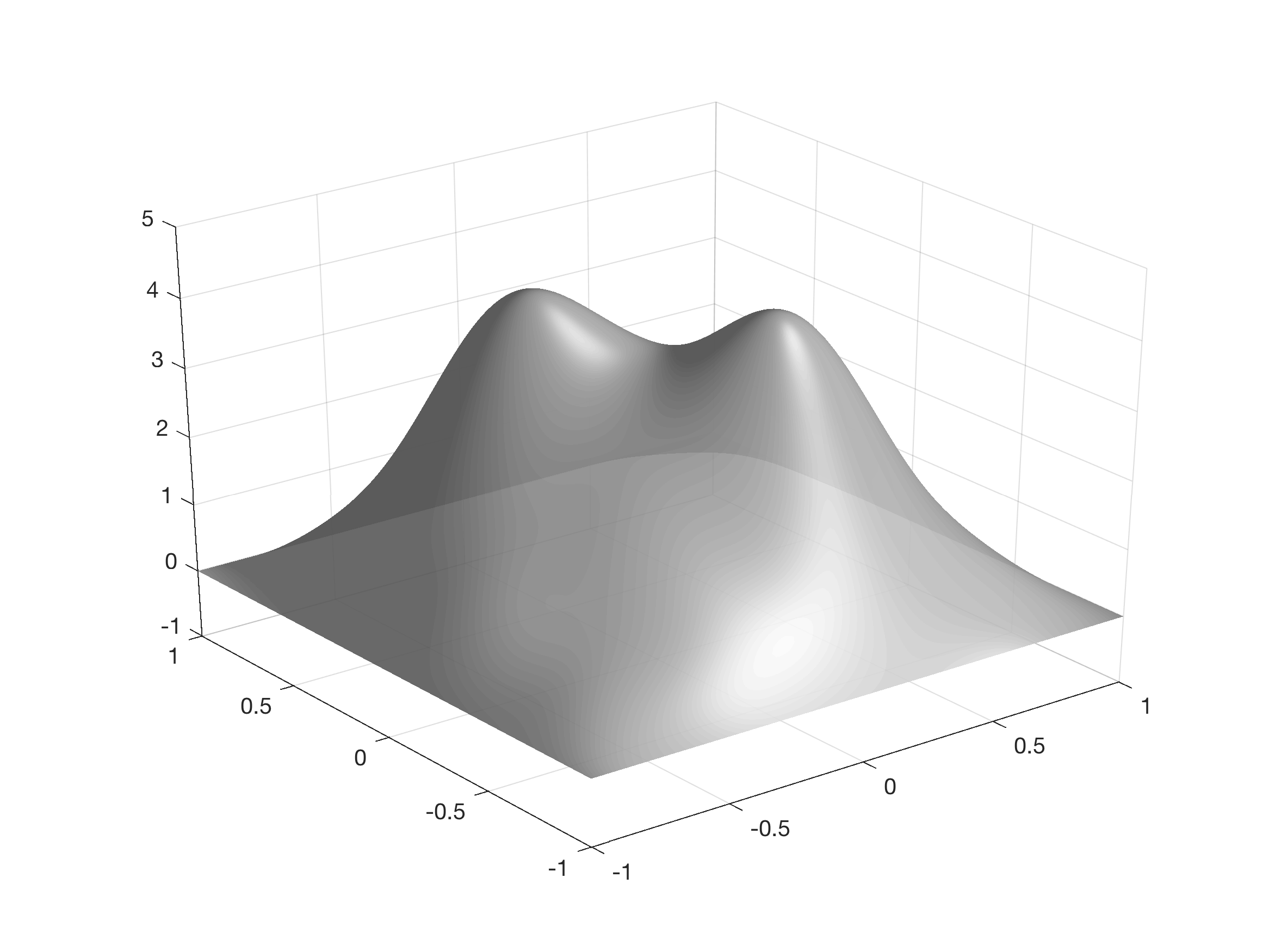}
    \caption{The estimate of solution $g(x_1,x_2)$.}
    \label{fig:poisson-est-f}
  \end{subfigure}%
  \hspace*{\fill} 
  \begin{subfigure}[t]{0.45\textwidth}
    \centering
    \includegraphics[width=1.1\textwidth]{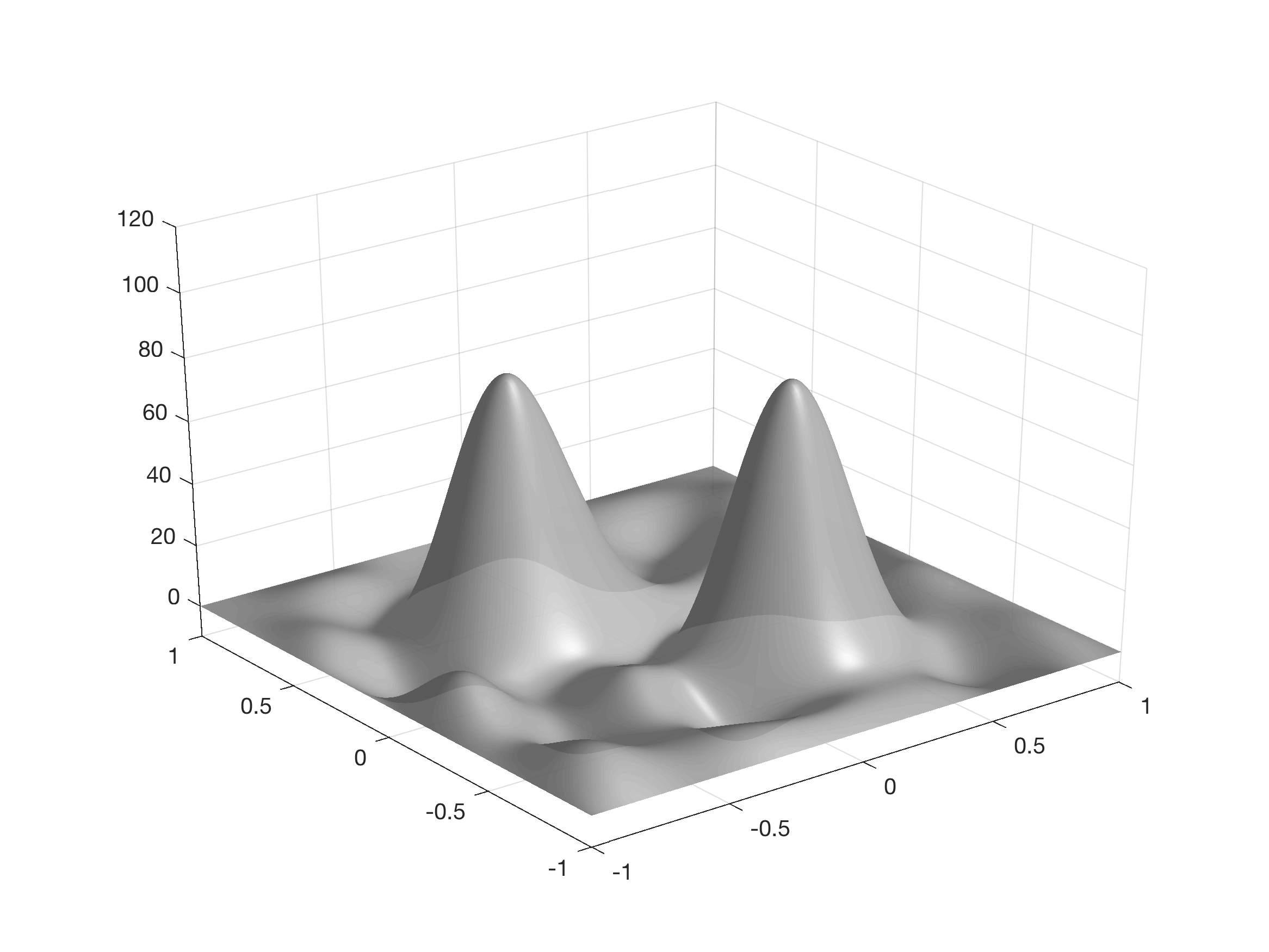}
    \caption{The estimate of input $f(x_1,x_2)$.}
    \label{fig:poisson-est-u}
  \end{subfigure}%
  \hspace*{\fill} 
  \caption{Gaussian process inference on the Poisson equation.}
  \label{fig:poisson}
\end{figure*}

\subsection{Gaussian Process Driven Poisson Equation}
As discussed in Section~\ref{sec:invlfm}, our framework also directly extends to inverse problems and latent force models. As this final experiment, we demonstrate the use of the approximation in the latent force model (LFM)
\begin{equation}
\begin{split}
  -\nabla^2 g(\vect{x}) &= f(\vect{x}), \\
  y_i &= g(\mathbf{x}_i) + \varepsilon_i,
\end{split}
\end{equation}
where $\vect{x} \in \mathbb{R}^2$ and $f(\vect{x}) \sim \GP(0, k(\vect{x}, \vect{x}'))$ is the input with a squared exponential covariance function prior. This problem can also be interpreted as a inverse problem where the measurement operator is the Green's operator $\mathcal{H} = (-\nabla^2)^{-1}$:
\begin{equation}
\begin{split}
  y_i &= (\mathcal{H} f)(\mathbf{x}_i) + \varepsilon_i.
\end{split}
\end{equation}
If we assume that the boundary conditions of the problem are the same as we used for forming the basis functions in \eqref{eq:eigenf_eqs}, then if we put $g(\vect{x}) \approx \sum_{j=1}^m g_j \, \phi_j(\vect{x})$, we get
\begin{equation}
\begin{split}
  -\nabla^2 g(\vect{x}) &\approx
  -\sum_{j=1}^m g_j \, \nabla^2 \phi_j(\vect{x}) 
  = \sum_{j=1}^m g_j \, \lambda_j \, \phi_j(\vect{x})
\end{split}
\end{equation}
and thus by further putting $f(\vect{x}) \approx \sum_{j=1}^m f_j \, \phi_j(\vect{x})$, the approximation to the equation $-\nabla^2 g(\vect{x}) = f(\vect{x})$ becomes
\begin{equation}
\begin{split}
  \sum_{j=1}^m g_j \, \lambda_j \, \phi_j(\vect{x})
  = \sum_{j=1}^m f_j \, \phi_j(\vect{x})
\end{split}
\end{equation}
which allows us to solve $f_j = g_j / \lambda_j$. This implies that we approximately have $(-\nabla^2)^{-1} \phi_j = \phi_j / \lambda_j$ which reduces Equations \eqref{eq:ip_approximation} to %
\begin{equation} 
\begin{split} 
  (\mathcal{H} \, \mathcal{H}' \, k)(\vect{x},\vect{x}')
   & \approx \sum_j \lambda_j^{-2} \, S(\sqrt{\lambda_j}) \,
   \phi_j(\vect{x}) \, \phi_j(\vect{x}'), \\
   (\mathcal{H}' k(\vect{x}_*,\cdot))(\vect{x}')
   & \approx \sum_j \lambda_j^{-1} \, S(\sqrt{\lambda_j}) \,
   \phi_j(\vect{x}_*) \, \phi_j(\vect{x}'),
\end{split} 
\end{equation} 
after which we can proceed with \eqref{eq:invprob-solution}. Alternatively we can directly use \eqref{eq:ip-solution-approx} with $\tilde{\vectb{\Phi}}_{ij} = \phi_j(\vect{x}_i) / \lambda_j$.

Figure~\ref{fig:poisson} shows the result of applying the proposed method to this model with the input function shown in Figure~\ref{fig:poisson-orig-u}. The true solution and the simulated measurements (with standard deviation of $1/10$) are shown in Figure~\ref{fig:poisson-orig-f}. The scale $\sigma^2$ and length scale $\ell$ of the SE covariance function were estimated by maximum likelihood method and the number of basis functions used for solving the GP regression problem was 100 (for simulation we used 255 basis functions). The estimates of the input and the solution function are shown in Figures~\ref{fig:poisson-orig-u} and \ref{fig:poisson-orig-f}, respectively. As can be seen in the figures, the estimate of the solution is very good, as can be expected from the fact that we obtain direct (although noisy) measurements from it. The estimate of the input is less accurate, but still approximates the true input well.

\section{Conclusion and Discussion}
\label{sec:conclusion-and-discussion}
In this paper we have proposed a novel approximation scheme for forming approximate eigendecompositions of covariance functions in terms of the Laplace operator eigenbasis and the spectral density of the covariance function. The eigenfunction decomposition of the Laplacian can easily be formed in various domains, and the eigenfunctions are independent of the choice of hyperparameters of the covariance. 

An advantage of the method is that it has the ability to approximate the eigendecomposition using only the eigendecomposition of the Laplacian and the spectral density of the covariance function, both of which are closed-from expressions. This together with having the eigenvectors in $\vectb{\Phi}$ mutually orthogonal and independent of the hyperparameters, is the key to efficiency. This allows an implementation with a computational cost of $\O(nm^2)$ (initial) and $\O(m^3)$ (marginal likelihood evaluation), with negligible memory requirements.

Of the infinite number of possible basis functions only an extremely small subset are of any relevance to the GP being approximated. In GP regression the model functions are conditioned on a covariance function (kernel), which imposes desired properties on the solutions. We choose the basis functions such that they are as close as possible (w.r.t.\ the Frobenius norm) to those of the particular covariance function. Our method gives the exact eigendecomposition of a GP that has been constrained to be zero at the boundary of the domain.

The method allows for theoretical analysis of the error induced by the truncation of the series and the boundary effects. This is something new in this context and extremely important, for example, in medical imaging applications. The approximative eigendecomposition also opens a range of interesting possibilities for further analysis. In \emph{learning curve} estimation, the eigenvalues of the Gaussian process can now be directly approximated. For example, we can approximate the Opper--Vivarelli bound \citep{Opper+Vivarelli:1999} as 
\begin{equation}
  \epsilon_\mathrm{OV}(n)
    \approx \sigma_\mathrm{n}^2 \,
       \sum_j {S(\sqrt{\lambda_j}) \over
               \sigma_\mathrm{n}^2 + n \, S(\sqrt{\lambda_j})}.
\end{equation}
Sollich's eigenvalue based bounds \citep{Sollich+Halees:2002} can be approximated and analyzed in an analogous way.

However, some of these abilities come with a cost. As demonstrated throughout the paper, restraining the domain to boundary conditions introduces edge effects. These are, however, known and can be accounted for. Extrapolating with a stationary covariance function outside the training inputs only causes the predictions to revert to the prior mean and variance. Therefore we consider the boundary effects a minor problem for practical use.

Although at first sight the method appears to have a bad (exponential) scaling with respect to the input dimensionality, as shown by the analysis in Section~\ref{sec:scaling}, this is not true. By increasing the differentiability order of the covariance function appropriately we can keep the convergence rate at the level ${\sim}1/m^a$, for a given constant $a > 0$ and with total of $m$ terms in the series, regardless of the input dimensionality. Furthermore, Theorem~\ref{the:se_rate} shows that for squared exponential covariance function the convergence rate is always better than ${\sim}1/m$, independently of the input dimensionality.

Further resources related to the proposed method and implementation details in form of code are available at \url{https://github.com/AaltoML/hilbert-gp}.

\subsubsection*{Acknowledgments}
The authors would like to thank James Hensman and Manon Kok for feedback on an early version of this paper, Mauricio {\'A}lvarez for help in the latent force model application as well as the referees for providing valuable ideas for improving the article. This research was supported by the Academy of Finland grants 308640 and 313708. We acknowledge the computational resources provided by the Aalto Science-IT project.

\appendix

\section{Proofs of Convergence Theorems} 
\subsection{Auxiliary Lemmas}
In this section we present a few lemmas that will be needed in the proofs in the next sections. The lemmas are quite classical results on the convergence of Riemannian sums, but as it is hard to find exactly the same results in other literature, for completeness we prove the lemmas here.

\begin{mylemma} \label{lem:riemann1}
Let $\Delta > 0$ and $\alpha \in [0,1)$ be given constants, $m = 0,1,2,\ldots$ some nonnegative integer, and assume that the $f(\omega)$ is a bounded integrable function defined on $\omega \ge m \, \Delta$ with bounded derivative on $\omega > m \, \Delta$ such that $\int_{m \, \Delta}^{\infty} |f'(\omega)| \, d\omega = C^{(m)} < \infty$. Then we have
\begin{equation}
  \left| \int_{m \, \Delta}^\infty f(\omega) \, d\omega
  - \sum_{j=m+1}^\infty f(j \, \Delta - \alpha \, \Delta) \, \Delta \right| \le C^{(m)} \, \Delta.
\end{equation}
Furthermore, provided that $\int_{0}^{\infty} |f'(\omega)| \, d\omega = C^{(0)} < \infty$, this bound can be made independent of $m$:
\begin{equation}
  \left| \int_{m \, \Delta}^\infty f(\omega) \, d\omega
  - \sum_{j=m+1}^\infty f(j \, \Delta - \alpha \, \Delta) \, \Delta \right| \le C^{(0)} \, \Delta.
\end{equation}
\end{mylemma}
\begin{proof}
We can write
\begin{equation}
\begin{split}
  \int_{m \, \Delta}^\infty f(\omega) \, d\omega = \sum_{j=m+1}^\infty \int_{(j-1) \, \Delta}^{j \, \Delta} f(\omega) \, d\omega.
\end{split}
\end{equation}
By the fundamental theorem of calculus we get
\begin{equation}
\begin{split}
  f(\omega) = f(j \ \Delta - \alpha \, \Delta)
  + \int_{j \, \Delta - \alpha \, \Delta}^\omega f'(\omega) \, d\omega,
\end{split}
\end{equation}
which gives for $\omega \in ((j-1) \, \Delta, j \, \Delta]$
\begin{equation}
\begin{split}
  \left| f(\omega) - f(j \, \Delta - \alpha \, \Delta) \right|
 & \le \left| \int_{j \, \Delta - \alpha \, \Delta}^\omega f'(\omega) \, d\omega \right| \\
 & \le \left| \int_{j \, \Delta - \alpha \, \Delta}^\omega \left| f'(\omega) \right| d\omega \right| \\
 & \le \int_{(j-1) \, \Delta}^{j \, \Delta} \left| f'(\omega) \right| \, d\omega
\end{split}
\label{eq:fw_m_fjd}
\end{equation}
We now get
\begin{equation}
\begin{split}
 & \left|\int_{m \, \Delta}^\infty f(\omega) \, d\omega
  - \sum_{j=m+1}^\infty f(j \, \Delta - \alpha \, \Delta) \, \Delta \right| \\
  &= \left|\sum_{j=m+1}^\infty \int_{(j-1) \, \Delta}^{j \, \Delta}
  [f(\omega) - f(j \, \Delta - \alpha \, \Delta)] \, d\omega  \right| \\
  &\le \sum_{j=m+1}^\infty \int_{(j-1) \, \Delta}^{j \, \Delta}
  \left| f(\omega) - f(j \, \Delta - \alpha \, \Delta)  \right| \, d\omega \\
  &\le \sum_{j=m+1}^\infty \int_{(j-1) \, \Delta}^{j \, \Delta}
  \left[ \int_{(j-1) \, \Delta}^{j \, \Delta} \left| f'(\omega) \right| \, d\omega  \right] \, d\omega \\
  &= \sum_{j=m+1}^\infty \int_{(j-1) \, \Delta}^{j \, \Delta} \left|f'(\omega) \right| \, d\omega \, \Delta \\
  &= \underbrace{\int_{m \, \Delta}^{\infty} \left| f'(\omega) \right| \, d\omega}_{C^{(m)}} \, \Delta \\
  &\le \underbrace{\int_{0}^{\infty} \left| f'(\omega) \right| \, d\omega}_{C^{(0)}} \, \Delta,
\end{split}
\end{equation}
which concludes the proof.
\end{proof}

\begin{mylemma} \label{lem:riemann2}
Assume that $f(\omega)$ is a bounded integrable function defined on $\omega \ge 0$ with bounded derivative on $\omega > 0$ and $g(\omega)$ is a bounded function defined on $\omega \ge 0$ such that $|g(w)| \le D$. Further assume that $\int_0^{\infty} |f'(\omega)| \, d\omega = C < \infty$. Then for any $\alpha,\beta \in [0,1)$ and $\Delta > 0$ we have
\begin{equation}
  \left| \sum_{j=1}^\infty [f(j \, \Delta) - f(j \, \Delta - \alpha \, \Delta)] \, g(j \, \Delta - \beta \, \Delta) \right| \le C \, D.
\end{equation}
\end{mylemma}

\begin{proof}
By using \eqref{eq:fw_m_fjd} with $\omega = j \, \Delta - \alpha \, \Delta$ we get
\begin{equation}
\begin{split}
  \left| f(j \, \Delta) - f(j \, \Delta - \alpha \, \Delta) \right|
  \le \int_{(j-1) \, \Delta}^{j \, \Delta} \left| f'(\omega) \right| \, d\omega,
\end{split}
\label{eq:fjd_m_falpha}
\end{equation}
and further
\begin{equation}
\begin{split}
  &\left| \sum_{j=1}^\infty [f(j \, \Delta) - f(j \, \Delta - \alpha \, \Delta)] \,
  g(j \, \Delta - \beta \, \Delta) \right| \\
  &\le \sum_{j=1}^\infty \left| f(j \, \Delta) - f(j \, \Delta - \alpha \, \Delta) \right| \,
  |g(j \, \Delta - \beta \, \Delta)| \\
  &\le \sum_{j=1}^\infty \int_{(j-1) \, \Delta}^{j \, \Delta} \left| f'(\omega) \right| \, d\omega \, D \\
  &= \int_{0}^{\infty} \left| f'(\omega) \right| \, d\omega \, D \\
  &= C \, D. \\
\end{split}
\end{equation}
\end{proof}

\begin{mylemma} \label{lem:riemann3}
Assume that $f(\omega) \ge 0$ is a positive bounded integrable function defined on $\omega \ge 0$ with bounded derivative on $\omega > 0$ such that $\int_0^{\infty} f(\omega) \, d\omega = C_0 < \infty$ and $\int_0^{\infty} |f'(\omega)| \, d\omega = C_1 \le \infty$, and $g(\omega)$ is a bounded integrable function defined on $\omega \ge 0$ with bounded derivative on $\omega > 0$ such that $|g'(\omega)| \le D$. Then for any $\alpha,\beta \in [0,1)$ and $\Delta > 0$ we have for $C = C_1 + C_0$:
\begin{equation}
  \left| \sum_{j=1}^\infty f(j \, \Delta - \alpha \, \Delta) \, [g(j \, \Delta) - g(j \, \Delta - \beta \, \Delta)] 
   \right| \le C \, D.
\end{equation}
\end{mylemma}

\begin{proof}
By applying the mean value theorem to \eqref{eq:fjd_m_falpha} we get that for some $\omega^*_j \in [j \, \Delta - \alpha \, \Delta, j \, \Delta]$ we have
\begin{equation}
\begin{split}
  \left| g(j \, \Delta) - g(j \, \Delta - \beta \, \Delta) \right|
  \le |g'(\omega^*_j)| \, \beta \, \Delta
  \le |g'(\omega^*_j)| \, \Delta \le D \, \Delta.
\end{split}
\end{equation}
By using Lemma~\ref{lem:riemann1} we get
\begin{equation}
\begin{split}
  &\sum_{j=1}^\infty f(j \, \Delta - \alpha \, \Delta) \, \Delta \\
  &=
  \left| \sum_{j=1}^\infty f(j \, \Delta - \alpha \, \Delta) \, \Delta 
  - \int_0^\infty f(\omega) \, d\omega
  + \int_0^\infty f(\omega) \, d\omega \right| \\
  &\le
  \left| \sum_{j=1}^\infty f(j \, \Delta - \alpha \, \Delta) \, \Delta 
  - \int_0^\infty f(\omega) \, d\omega  \right|
  + \left| \int_0^\infty f(\omega) \, d\omega \right| \\
  &\le C_1 + C_0 = C.
\end{split}
\end{equation}
Hence, 
\begin{equation}
\begin{split}
  &\left| \sum_{j=1}^\infty f(j \, \Delta - \alpha \, \Delta) \,
  [g(j \, \Delta) - g(j \, \Delta - \beta \, \Delta)] 
   \right| \\
  &\le \sum_{j=1}^\infty f(j \, \Delta - \alpha \, \Delta) \,
  \left| g(j \, \Delta) - g(j \, \Delta - \beta \, \Delta) \right| \,
   \\  
  &\le \sum_{j=1}^\infty f(j \, \Delta - \alpha \, \Delta) \, \Delta \, D \\  
  &\le C \, D.
\end{split}
\end{equation}
\end{proof}

\subsection{Proof of Theorem \ref{the:1d_kapp_inf}} \label{app:conv_proof1}

The Wiener--Khinchin identity and the symmetry of the spectral density allows us to write
\begin{align}
  k(x,x') &= \frac{1}{2\pi} \,
  \int_{-\infty}^{\infty} S(\omega) \,
  \exp(-\imag \, \omega \, (x - x')) \dd\omega \nonumber \\
  &= \frac{1}{\pi} \,
  \int_{0}^{\infty} S(\omega) \,
  \cos(\omega \, (x - x')) \dd\omega.
\end{align}
In a one-dimensional domain $\Omega = [-L,L]$ with Dirichet boundary conditions we have an $m$-term approximation of the form
\begin{equation}
\begin{split}
 &\widetilde{k}_m(x,x') \\ &=
 \sum_{j=1}^m S\left(\frac{\pi \, j}{2L}\right)
 \, \frac{1}{L} \, \sin\left(\frac{\pi \, j \, (x + L)}{2L} \right)
 \, \sin\left(\frac{\pi \, j \, (x' + L)}{2L} \right).
\end{split}
\end{equation}
We start by showing the convergence by growing the domain and therefore first consider an approximation with an infinite number of terms $m=\infty$:
\begin{equation}
\begin{split}
 &\widetilde{k}_\infty(x,x') \\ &=
 \sum_{j=1}^\infty S\left(\frac{\pi \, j}{2L}\right)
 \, \frac{1}{L} \, \sin\left(\frac{\pi \, j \, (x + L)}{2L} \right)
 \, \sin\left(\frac{\pi \, j \, (x' + L)}{2L} \right).
\end{split}
\label{eq:1d_kapp_inf}
\end{equation}
For that purpose we rewrite the summation above in \eqref{eq:1d_kapp_inf} as
\begin{align}
  &\sum_{j=1}^{\infty} S\left(\frac{\pi \, j}{2L}\right)
 \, \frac{1}{L} \, \sin\left(\frac{\pi \, j \, (x + L)}{2L} \right)
 \, \sin\left(\frac{\pi \, j \, (x' + L)}{2L} \right) \nonumber \\
    &= \sum_{j=1}^\infty S\left( \frac{\pi \, j}{2L} \right) \, 
  \cos\left(\frac{\pi \, j \, (x - x')}{2L} \right)
  \, \frac{1}{2L} \nonumber \\
  &- 
  \frac{1}{2L} \, \sum_{j=1}^\infty
     \left[ S\left( \frac{\pi \, 2j}{2L} \right) \, 
   - S\left( \frac{\pi \, (2j-1)}{2L} \right) \right] \, 
   \cos\left(\frac{\pi \, 2j \, (x + x')}{2L} \right) \nonumber \\
  &-
  \frac{1}{2L} \, \sum_{j=1}^\infty
  S\left( \frac{\pi \, (2j-1)}{2L} \right) \, 
   \Bigg[ \cos\left(\frac{\pi \, 2j \, (x + x')}{2L} \right) \nonumber \\
   &\qquad \qquad \qquad
   - \cos\left(\frac{\pi \, (2j-1) \, (x + x')}{2L} \right) \Bigg].
  \label{eq:three-sums}
\end{align}
and consider the three summations above separately. The analysis of them is done in the next three lemmas.

\begin{mylemma} \label{lem:1d_kapp_inf1a}
  Assume that on $\omega \ge 0$ we have $S(\omega) \le B < \infty$ and $\int_0^\infty S(w) \, \dd \omega = A < \infty$, and on $\omega > 0$ $S(\omega)$ has a bounded derivative $|S'(\omega)| \le D < \infty$ and that $\int_0^\infty |S'(\omega)| \, \dd \omega = C < \infty$. Then there exists a constant $D_2$ such that for all $x,x' \in [-\widetilde{L},\widetilde{L}]$ we have
\begin{multline}
  \Bigg| \sum_{j=1}^\infty S\left( \frac{\pi \, j}{2L} \right) \, 
  \cos\left(\frac{\pi \, j \, (x - x')}{2L} \right)
  \, \frac{1}{2L} 
 \\
 - \frac{1}{\pi} \, \int_{0}^{\infty} S(\omega) \,
    \cos(\omega \, (x - x')) \dd\omega \Bigg| \leq \frac{D_2}{L}.
\end{multline}
\end{mylemma}

\begin{proof}
By using Lemma~\ref{lem:riemann1} with $\Delta = \frac{\pi}{2L}$, $f(\omega) = \frac{1}{\pi} \, S(\omega) \,
    \cos(\omega \, (x - x')) \dd\omega$, $m=0$, and $\alpha = 0$ as well as the assumptions on $S(\omega)$ and boundedness of sine and cosine we get that
\begin{equation}
\begin{split}
  &\Bigg| \sum_{j=1}^\infty S\left( \frac{\pi \, j}{2L} \right) \, 
  \cos\left(\frac{\pi \, j \, (x - x')}{2L} \right)
  \, \frac{1}{2L} \\
  &\qquad - \frac{1}{\pi} \int_0^\infty S(\omega) \,
  \cos(\omega \, (x - x')) \dd\omega \Bigg| \\
  &\leq
  \frac{1}{\pi} \int \big| S'(w) \, \cos(\omega \, (x - x')) \dd\omega \\
  &\qquad - S(w) \, (x - x') \, \sin(\omega \, (x - x'))  \big| \, \dd\omega \, \frac{\pi}{2L} \\
  &\leq
  \frac{1}{2L} \int \big| S'(w) \, \cos(\omega \, (x - x')) \big| \dd\omega \\
  &\quad + \frac{1}{2L} \int \big| S(w) \, (x - x') \, \sin(\omega \, (x - x'))  \big| \, \dd\omega \\
  &\leq
  \frac{1}{2L} \int \big| S'(w) \big| \dd\omega 
  + \frac{|x - x'| }{2L} \, \int \big| S(w)   \big| \, \dd\omega \\
  &\leq
  \frac{1}{2L} \int \big| S'(w) \big| \dd\omega 
  + \frac{\widetilde{L} }{L} \, \int \big| S(w)   \big| \, \dd\omega \\
  &\leq
  \frac{1}{2L} C + \frac{\widetilde{L} }{L} \, A,
\end{split}
\end{equation}
which gives the result with $D_2 = \frac{C}{2} + \widetilde{L} \, A$.
\end{proof}
    
\begin{mylemma} \label{lem:1d_kapp_inf1b}
Assume that for $\omega \ge 0$, $S(\omega)$ is a bounded integrable function with a bounded derivative on $\omega > 0$ such that $\int_0^\infty |S'(\omega)| \, \dd \omega = C < \infty$, then there exists a constant $D_3$ such that
\begin{equation}
\begin{split}
  &\left| \frac{1}{2L} \, \sum_{j=1}^\infty
     \left[ S\left( \frac{\pi \, 2j}{2L} \right) \, 
   - S\left( \frac{\pi \, (2j-1)}{2L} \right) \right] \, 
   \cos\left(\frac{\pi \, 2j \, (x + x')}{2L} \right) \right| \\
   &\le \frac{D_3}{L}.
\end{split}
\end{equation}
\end{mylemma}

\begin{proof}
The result follows by using Lemma~\ref{lem:riemann2} with $\Delta = \frac{\pi}{L}$, $\alpha = 1/2$, $\beta = 0$, $f(\omega) = S(\omega)$, and $g(\omega) = \cos(\omega \, (x + x'))$ and by recalling that  $|\cos(\omega \, (x + x'))| \le 1$, which gives the constant $D_3 = \frac{C}{2}$.
\end{proof}

\begin{mylemma} \label{lem:1d_kapp_inf1c}
Assume that for $\omega \ge 0$, $S(\omega)$ is a bounded positive integrable function with bounded derivative on $\omega > 0$ such that $\int_0^\infty S(\omega) \, \dd \omega = A < \infty$ and $\int_0^\infty |S'(\omega)| \, \dd \omega = C < \infty$. Then there exists a constant $D_4$ such that
\begin{equation}
\begin{split}
  &\Bigg| \frac{1}{2L} \, \sum_{j=1}^\infty
  S\left( \frac{\pi \, (2j-1)}{2L} \right) \, 
   \Bigg[ \cos\left(\frac{\pi \, 2j \, (x + x')}{2L} \right) \nonumber \\
   &\qquad \qquad \qquad
   - \cos\left(\frac{\pi \, (2j-1) \, (x + x')}{2L} \right) \Bigg|
   \le \frac{D_4}{L}.
\end{split}
\end{equation}
\end{mylemma}

\begin{proof}
By using Lemma~\ref{lem:riemann3} with $\Delta = \frac{\pi}{L}$, $\alpha = 1/2$, $\beta = 1/2$, $f(\omega) = S(\omega)$, and $g(\omega) = \cos(\omega \, (x + x'))$  we get
\begin{equation}
\begin{split}
  &\Bigg| \frac{1}{2L} \, \sum_{j=1}^\infty
  S\left( \frac{\pi \, (2j-1)}{2L} \right) \, 
   \Bigg[ \cos\left(\frac{\pi \, 2j \, (x + x')}{2L} \right) \nonumber \\
   &\qquad \qquad \qquad
   - \cos\left(\frac{\pi \, (2j-1) \, (x + x')}{2L} \right) \Bigg| \\
   &\le \frac{(A + C) \, D'}{2L},
\end{split}
\end{equation}
where $D'$ is an upper bound for $|(x + x') \, \sin(\omega \, (x + x'))|$. We can now select $D' = 2\widetilde{L}$, which gives $D_4 = (A + C) \, \widetilde{L}$.
\end{proof}

Next, we combine the above lemmas to get the following result.
\begin{mylemma} \label{lem:1d_kapp_inf1}
Let the assumptions of Lemmas~\ref{lem:1d_kapp_inf1a}, \ref{lem:1d_kapp_inf1b}, and \ref{lem:1d_kapp_inf1c} be satisfied. Then there exists a constant $D_1$ such that for all $x,x' \in [-\widetilde{L},\widetilde{L}]$ we have
\begin{multline}
  \Bigg| \sum_{j=1}^{\infty} S\left(\frac{\pi \, j}{2L}\right)
 \, \frac{1}{L} \, \sin\left(\frac{\pi \, j \, (x + L)}{2L} \right)
 \, \sin\left(\frac{\pi \, j \, (x' + L)}{2L} \right) \\
 - \frac{1}{\pi} \, \int_{0}^{\infty} S(\omega) \,
    \cos(\omega \, (x - x')) \dd\omega \Bigg| \leq \frac{D_1}{L}.
\label{eq:1d_kapp_inf_lemma}
\end{multline}
That is, 
\begin{equation}
  \left| \widetilde{k}_{\infty}(x,x') - k(x,x') \right| \leq \frac{D_1}{L},
  \quad \text{ for } x,x' \in [-\widetilde{L},\widetilde{L}].
\end{equation}
Furthermore, the explicit expression for the constant is given as
\begin{equation}
  D_1 = C + (2 A + C) \, \widetilde{L}.
\end{equation}
\end{mylemma}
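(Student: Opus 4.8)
The plan is to recognise the infinite sum $\widetilde{k}_\infty$ as a numerical quadrature (right-endpoint Riemann sum) approximation of the Wiener--Khinchin integral on the grid $\omega_j = \pi j/(2L)$ with spacing $\Delta\omega = \pi/(2L)$, and to show that the quadrature error and a spurious ``boundary'' term both decay like $1/L$, uniformly in $x,x'$. First I would apply the product-to-sum identity $\sin A\,\sin B = \tfrac12[\cos(A-B)-\cos(A+B)]$ to the summand. Using $\omega_j\cdot 2L = \pi j$, the $A+B$ cosine acquires a factor $(-1)^j$, and since $1/(2L) = \Delta\omega/\pi$ this splits $\widetilde{k}_\infty(x,x') = T_1 - T_2$ with
\[
  T_1 = \frac{\Delta\omega}{\pi}\sum_{j=1}^\infty S(\omega_j)\cos(\omega_j(x-x')), \qquad
  T_2 = \frac{\Delta\omega}{\pi}\sum_{j=1}^\infty (-1)^j S(\omega_j)\cos(\omega_j(x+x')).
\]
Here $T_1$ is exactly the right-endpoint Riemann sum, on $\{\omega_j\}$, of $\frac{1}{\pi}\int_0^\infty S(\omega)\cos(\omega(x-x'))\dd\omega = k(x,x')$.

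Next I would bound the quadrature error $|T_1 - k(x,x')|$. Writing $g(\omega)=S(\omega)\cos(\omega(x-x'))$, a single grid cell contributes $g(\omega_j)\Delta\omega - \int_{\omega_{j-1}}^{\omega_j} g\,\dd t = \int_{\omega_{j-1}}^{\omega_j} g'(t)\,(t-\omega_{j-1})\,\dd t$, and since $|t-\omega_{j-1}|\le\Delta\omega$, summing over $j$ gives $|T_1 - k| \le \frac{\Delta\omega}{\pi}\int_0^\infty |g'(t)|\,\dd t$. Because $|g'| \le |S'| + |x-x'|\,|S|$ with $|x-x'|\le 2\widetilde{L}$ bounded, while $S$ is integrable and the derivatives of $S$ are controlled by the regularity hypotheses, this variation integral is finite and independent of $x,x'$ and of $L$; hence $|T_1 - k| \le D_1'/L$.

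Then I would show the alternating term satisfies $T_2 = \O(1/L)$. The partial sums of $(-1)^j$ are bounded by $1$, while $c_j := S(\omega_j)\cos(\omega_j(x+x'))$ tends to $0$ and has finite total variation, again uniformly since $|x+x'|\le 2\widetilde{L}$. Dirichlet's test (Abel summation) then yields $\left|\sum_j (-1)^j c_j\right| \le |c_1| + \sum_j |c_{j+1}-c_j| \le \|S\|_\infty + D_1''$, so the prefactor $\Delta\omega/\pi = 1/(2L)$ gives $|T_2| \le \frac{1}{2L}(\|S\|_\infty + D_1'')$. (Equivalently, one notes $(-1)^j\cos(\omega_j(x+x')) = \cos(\omega_j(x+x'+2L))$, which exhibits $T_2$ as a quadrature for $k$ at the large lag $x+x'+2L \ge 2(L-\widetilde{L})$.) Combining the two estimates gives $|\widetilde{k}_\infty(x,x') - k(x,x')| \le |T_1-k| + |T_2| \le D_1/L$, which is the claimed bound.

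The \emph{main obstacle} is making the quadrature estimate uniform in $(x,x')$ across the entire semi-infinite frequency axis: the error has to be controlled simultaneously for every lag $x\pm x'$ and out to $\omega=\infty$. This is precisely where the smoothness assumptions (bounded $S'$, $S''$) and the tail integrability $\int S\,\dd\omega < \infty$ are needed, since they guarantee that the total-variation/derivative integrals multiplying $\Delta\omega$ are finite and do not grow with $L$; without such control the naive cell-by-cell bound would not sum to an $L$-independent constant.
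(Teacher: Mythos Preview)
Your proposal is correct and follows essentially the same route as the paper: apply the product-to-sum identity to split $\widetilde{k}_\infty$ into a Riemann-sum approximation of the Wiener--Khinchin integral (your $T_1$, the paper's first sum) plus an alternating ``boundary'' contribution (your $T_2$), and show each is $\O(1/L)$ uniformly in $x,x'$. The only cosmetic difference is in handling $T_2$: the paper pairs consecutive even/odd terms explicitly, producing two further Riemann sums involving the increments $S(\omega_{2j})-S(\omega_{2j-1})$ and $\cos(\omega_{2j}(x+x'))-\cos(\omega_{2j-1}(x+x'))$, while you invoke Abel summation/Dirichlet's test directly---these are the same manipulation, and both ultimately rely on the same regularity of $S$.
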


\begin{proof}
Using triangle inequality to the differece of \eqref{eq:three-sums} and $\frac{1}{\pi} \, \int_{0}^{\infty} S(\omega) \, \cos(\omega \, (x - x')) \dd\omega$ along with Lemmas~\ref{lem:1d_kapp_inf1a}, \ref{lem:1d_kapp_inf1b}, and \ref{lem:1d_kapp_inf1c} gives
\begin{align}
  &\Bigg| \sum_{j=1}^\infty S\left( \frac{\pi \, j}{2L} \right) \, 
  \cos\left(\frac{\pi \, j \, (x - x')}{2L} \right)
  \, \frac{1}{2L} \nonumber \\
  &- 
  \frac{1}{2L} \, \sum_{j=1}^\infty
     \left[ S\left( \frac{\pi \, 2j}{2L} \right) \, 
   - S\left( \frac{\pi \, (2j-1)}{2L} \right) \right] \, 
   \cos\left(\frac{\pi \, 2j \, (x + x')}{2L} \right) \nonumber \\
  &-
  \frac{1}{2L} \, \sum_{j=1}^\infty
  S\left( \frac{\pi \, (2j-1)}{2L} \right) \, 
   \Bigg[ \cos\left(\frac{\pi \, 2j \, (x + x')}{2L} \right) \nonumber \\
   &\qquad \qquad \qquad
   - \cos\left(\frac{\pi \, (2j-1) \, (x + x')}{2L} \right) \Bigg] \nonumber \\
   &- \frac{1}{\pi} \, \int_{0}^{\infty} S(\omega) \,
    \cos(\omega \, (x - x')) \dd\omega \Bigg| \nonumber \\
    &\le 
  \Bigg| \sum_{j=1}^\infty S\left( \frac{\pi \, j}{2L} \right) \, 
  \cos\left(\frac{\pi \, j \, (x - x')}{2L} \right)
  \, \frac{1}{2L} \nonumber \\
   &\qquad \qquad \qquad
   - \frac{1}{\pi} \, \int_{0}^{\infty} S(\omega) \,
    \cos(\omega \, (x - x')) \dd\omega \Bigg| \nonumber \\
  &+ \Bigg|
  \frac{1}{2L} \, \sum_{j=1}^\infty
     \left[ S\left( \frac{\pi \, 2j}{2L} \right) \, 
   - S\left( \frac{\pi \, (2j-1)}{2L} \right) \right] \, 
   \cos\left(\frac{\pi \, 2j \, (x + x')}{2L} \right) \Bigg| \nonumber \\
  &+
  \Bigg| \frac{1}{2L} \, \sum_{j=1}^\infty
  S\left( \frac{\pi \, (2j-1)}{2L} \right) \, 
   \Bigg[ \cos\left(\frac{\pi \, 2j \, (x + x')}{2L} \right) \nonumber \\
   &\qquad \qquad \qquad
   - \cos\left(\frac{\pi \, (2j-1) \, (x + x')}{2L} \right) \Bigg] \Bigg| \nonumber \\
   &\le
   \frac{D_2}{L} + \frac{D_3}{L} + \frac{D_4}{L} = \frac{D_1}{L},
\end{align}
where the explicit values for the costants can be found in the proofs of the lemmas.
\end{proof}

Let us now consider what happens when we replace the infinite sum approximation with a finite $m$ number of terms. We are now interested in
\begin{equation}
\begin{split}
 &\widetilde{k}_{\infty}(x,x') - \widetilde{k}_m(x,x') \\ &=
 \sum_{j=m+1}^{\infty} S\left(\frac{\pi \, j}{2L}\right)
   \, \frac{1}{L} \, \sin\left(\frac{\pi \, j \, (x + L)}{2L} \right)
   \, \sin\left(\frac{\pi \, j \, (x' + L)}{2L} \right).
\end{split}
\end{equation}
\begin{mylemma} \label{lem:1d_kapp_inf2}
  Assume that on $\omega \ge 0$, $S(\omega)$ is bounded and integrable, on $\omega > 0$ it has a bounded derivative, and that $\int_0^\infty |S'(\omega)| \, \dd \omega = C < \infty$.  Then there exists a constant $D_5$ such that for all $x,x' \in [-\widetilde{L},\widetilde{L}]$ we have
\begin{equation}
  \left| \widetilde{k}_{\infty}(x,x') - \widetilde{k}_m(x,x') \right|
 \leq \frac{D_5}{L}
  + \frac{2}{\pi} \, \int_{\frac{\pi \, m}{2L}}^{\infty} S(\omega) \dd\omega.
\end{equation}
\end{mylemma}

\begin{proof}
Because the sinusoidals are bounded by unity, we get
\begin{multline}
 \left| \sum_{j=m+1}^{\infty} S\left(\frac{\pi \, j}{2L}\right)
 \, \frac{1}{L} \, \sin\left(\frac{\pi \, j \, (x + L)}{2L} \right)
 \, \sin\left(\frac{\pi \, j \, (x' + L)}{2L} \right) \right| \\
 \leq
 \left| \sum_{j=m+1}^{\infty} S\left(\frac{\pi \, j}{2L}\right)
 \, \frac{1}{L}  \right|.
\end{multline}
For the right hand side we can now use Lemma~\ref{lem:riemann1} with $f(\omega) = \frac{2}{\pi} \, S(\omega)$ and $\Delta = \frac{\pi}{2L}$, which gives
\begin{equation}
  \Bigg| \sum_{j=m+1}^{\infty} S\left(\frac{\pi \, j}{2L}\right)
   \, \frac{1}{L} - \frac{2}{\pi} \int_{\frac{\pi \, m}{2L}}^{\infty}
   S(\omega) \dd\omega \Bigg| \leq C \, \frac{\pi}{2L} = \frac{D_5}{L}.
\end{equation}
Hence by the triangle inequality we get
\begin{align}
  &\Bigg| \sum_{j=m+1}^{\infty} S\left(\frac{\pi \, j}{2L}\right)
   \, \frac{1}{L} \Bigg| \nonumber\\
  &= 
   \Bigg| \sum_{j=m+1}^{\infty} S\left(\frac{\pi \, j}{2L}\right)
   \, \frac{1}{L} - \frac{2}{\pi} \int_{\frac{\pi \, m}{2L}}^{\infty}
   S(\omega) \dd\omega
   + \frac{2}{\pi} \int_{\frac{\pi \, m}{2L}}^{\infty}
   S(\omega) \dd\omega \Bigg| \nonumber \\
  &\leq 
   \Bigg| \sum_{j=m+1}^{\infty} S\left(\frac{\pi \, j}{2L}\right)
   \, \frac{1}{L} - \frac{2}{\pi} \int_{\frac{\pi \, m}{2L}}^{\infty}
   S(\omega) \dd\omega \Bigg|
   + \frac{2}{\pi} \int_{\frac{\pi \, m}{2L}}^{\infty}
   S(\omega) \dd\omega \nonumber \\
  &\leq 
   \frac{D_5}{L}
   + \frac{2}{\pi} \int_{\frac{\pi \, m}{2L}}^{\infty}
   S(\omega) \dd\omega
\end{align}
and thus the result follows.
\end{proof}

\begin{myremark} \label{rem:1d_kapp_inf2}
We can also obtain a bit more defined bound by not using an $m$-independent bound for forming $D_5$, which under the assumptions of Lemma~\ref{lem:1d_kapp_inf2} gives
\begin{equation}
\begin{split}
  &\left| \widetilde{k}_{\infty}(x,x') - \widetilde{k}_m(x,x') \right| \\
 &\leq \frac{\pi}{2L} \, \int_{\frac{\pi \, m}{2L}}^{\infty} |S'(\omega)| \dd\omega
  + \frac{2}{\pi} \, \int_{\frac{\pi \, m}{2L}}^{\infty} S(\omega) \dd\omega.
\end{split}
\end{equation}
\end{myremark}

The lemmas presented in this section can now be combined to a proof of the one-dimensional convergence theorem as follows:

\begin{proof}[Proof of Theorem \ref{the:1d_kapp_inf}]
  The first result follows by combining Lemmas~\ref{lem:1d_kapp_inf1} and \ref{lem:1d_kapp_inf2} via the triangle inequality. Because our assumptions imply that
\begin{equation}
  \lim_{x \to \infty} \int_{x}^{\infty} S(\omega) \dd\omega = 0,
\end{equation}
for any fixed $L$ we have
\begin{equation}
  \lim_{m \to \infty} \left[ \frac{E}{L} 
  + \frac{2}{\pi} \,
    \int_{\frac{\pi \, m}{2L}}^{\infty}
    S(\omega) \dd\omega \right] \to \frac{E}{L}.
\end{equation}
If we now take the limit $L \to \infty$, the second result in the theorem follows. 
\end{proof}

\subsection{Proof of Theorem~\ref{the:nd_kapp_inf}} \label{app:conv_proof2}
When $\vect{x} \in \R^d$, the Wiener--Khinchin identity and symmetry of the spectral density imply that
\begin{equation}
\begin{split}
  &k(\vect{x},\vect{x}') =
  \frac{1}{(2\pi)^d} \,
  \int_{\R^d}
  S(\vectb{\omega}) \, \exp(-\imag \, \vectb{\omega}\T (\vect{x} - \vect{x}')) \dd\vectb{\omega} \\
  &= \frac{1}{\pi^d} \,
  \int_{0}^{\infty} \cdots \int_{0}^{\infty}
  S(\vectb{\omega}) \, \prod_{k=1}^d \cos(\omega_k \, (x_k - x_k')) 
  \dd\omega_1 \cdots \dd\omega_d.
\end{split}
  \label{eq:nktrue}
\end{equation}
The $m = {\hat m}^d$ term approximation now has the form
\begin{multline}
  \widetilde{k}_m(\vect{x},\vect{x}') =
  \sum_{j_1,\ldots,j_d=1}^{\hat m}
    S\left( \frac{\pi \, j_1}{2L_1},\ldots,\frac{\pi \, j_d}{2L_d}  \right) \\
   \times \prod_{k=1}^d \frac{1}{L_k} \,
    \sin\left(\frac{\pi \, j_k \, (x_k + L_k)}{2L_k} \right) \,
    \sin\left(\frac{\pi \, j_k \, (x_k' + L_k)}{2L_k} \right).
\end{multline}

As in the one-dimensional problem we start by considering the case where $\hat{m} = \infty$.
\begin{mylemma} \label{lem:nd_kapp_inf1}
Let the assumptions of Lemma~\ref{lem:1d_kapp_inf1} be satisfied for each $\omega_j \mapsto S(\omega_1,\ldots,\omega_d)$ separately. Then there exists a constant $D_1$ such that for all $\vect{x},\vect{x}' \in [-\widetilde{L},\widetilde{L}]^d$ we have
\begin{align}
  &\Bigg| 
  \sum_{j_1,\ldots,j_d=1}^\infty 
  S\left( \frac{\pi \, j_1}{2L_1},\ldots,\frac{\pi \, j_d}{2L_d}  \right) \nonumber \\
  &\qquad \times 
  \prod_{k=1}^d \frac{1}{L_k} \,
    \sin\left(\frac{\pi \, j_k \, (x_k + L_k)}{2L_k} \right) \,
    \sin\left(\frac{\pi \, j_k \, (x_k' + L_k)}{2L_k} \right) \nonumber \\
  &\quad - \frac{1}{\pi^d} \,
    \int_{0}^{\infty} \cdots \int_{0}^{\infty}
    S(\vectb{\omega}) \, \prod_{k=1}^d \cos(\omega_k \, (x - x')) 
    \dd\omega_1 \cdots \dd\omega_d \Bigg| \nonumber \\
  &\leq D_1 \sum_{k=1}^d \frac{1}{L_k} \le \frac{D_1 \, d}{L},
  \label{eq:nd_kapp_inf_lemma}
\end{align}
where $L = \min_k L_k$. That is, for all $\vect{x},\vect{x}' \in [-\widetilde{L},\widetilde{L}]^d$
\begin{equation}
  \left| \widetilde{k}_{\infty}(\vect{x},\vect{x}') - k(\vect{x},\vect{x}') \right| 
  \leq D_1 \sum_{k=1}^d \frac{1}{L_k} \le \frac{D_1 \, d}{L}.
\end{equation}
\end{mylemma}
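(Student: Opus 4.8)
The plan is to follow the one-dimensional argument of Lemma~\ref{lem:1d_kapp_inf1} coordinatewise, exploiting that the eigenfunctions factorize across dimensions even though the spectral density $S(\vectb{\omega})$ couples them. First I would apply the product-to-sum identity in each coordinate $k$, writing
\[
  \sin\!\left(\tfrac{\pi j_k (x_k+L_k)}{2L_k}\right)\sin\!\left(\tfrac{\pi j_k (x_k'+L_k)}{2L_k}\right)
  = \tfrac12\cos\!\left(\tfrac{\pi j_k (x_k-x_k')}{2L_k}\right) - \tfrac{(-1)^{j_k}}{2}\cos\!\left(\tfrac{\pi j_k (x_k+x_k')}{2L_k}\right),
\]
and then expand the product over $k=1,\dots,d$. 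This yields $2^d$ terms, each carrying the prefactor $\prod_k \tfrac{1}{2L_k}$ and indexed by the subset $A\subseteq\{1,\dots,d\}$ of coordinates contributing the oscillating $\cos(x_k+x_k')$ factor together with the alternating sign $(-1)^{j_k}$.

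The term with $A=\emptyset$ gathers all the $\cos(\pi j_k(x_k-x_k')/(2L_k))$ factors and, after recognizing $\prod_k\tfrac{1}{2L_k}=\tfrac{1}{\pi^d}\prod_k\Delta_k$ with $\Delta_k=\pi/(2L_k)$, is exactly the $d$-dimensional Riemann sum with grid spacing $\Delta_k$ for the target integral \eqref{eq:nktrue}. I would bound its error by replacing sum with integral one coordinate at a time: each replacement costs $\O(1/L_k)$ by the same bounded-integrand/bounded-derivative estimate used in \eqref{eq:1d_kapp_inf_first}, so that the accumulated Riemann error is $\O(\sum_k 1/L_k)$.

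For a nonempty $A$ the summand carries $(-1)^{j_k}$ for each $k\in A$. Fixing one such index $k_0$ and pairing consecutive values $j_{k_0}=2\ell-1,2\ell$ turns the alternating sum into a sum of differences $S(\dots,\Delta_{k_0}2\ell,\dots)-S(\dots,\Delta_{k_0}(2\ell-1),\dots)$, each $\O(\Delta_{k_0})$ by the mean value theorem applied to $\partial S/\partial\omega_{k_0}$; together with the $\tfrac{1}{2L_{k_0}}$ prefactor this yields the gained factor $1/L_{k_0}$, exactly as in \eqref{eq:1d_kapp_inf_second}--\eqref{eq:1d_kapp_inf_third}. The sums over the remaining coordinates (both those in $A$, bounded crudely via $|\cos|\le1$, and those outside $A$) remain convergent Riemann sums for finite integrals of $S$ and its first derivative. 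Hence each cross-term is $\O(1/L_{k_0})$, dominated by $\sum_k 1/L_k$, and summing all $2^d$ contributions by the triangle inequality (absorbing the factor $2^d$ into $D_1$) gives the claim.

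The main obstacle, absent in one dimension, is ensuring that the sums over the remaining coordinates genuinely converge rather than accumulating a divergent number of $\O(1/L)$ contributions. After extracting the leading behavior in one coordinate, the residual is a sum over infinitely many multi-indices, and a bound that is merely uniform per slice would sum to infinity; a naive induction replacing the innermost sum by its integral and invoking the one-dimensional lemma with a uniform per-slice error $D/L_d$ therefore fails. The resolution is to keep each residual as an explicit, integrable function of the other frequencies, so that the accompanying prefactors $\prod\Delta_k/\pi$ convert the outer sums into Riemann sums converging to finite integrals of $S$ and its derivatives over $\R^d$; here the hypothesis that the single-variable integrals are finite---equivalently $\int_{\R^d}S(\vectb{\omega})\,\dd\vectb{\omega}<\infty$---is exactly what guarantees this finiteness. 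Verifying that all these constants stay uniform and that all residual integrals stay finite across every coordinate is the delicate bookkeeping at the heart of the multivariate argument.
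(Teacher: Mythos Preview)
Your proposal is correct and takes a genuinely different route from the paper. The paper proceeds by an inductive telescoping argument: it isolates the innermost sum over $j_1$, invokes Lemma~\ref{lem:1d_kapp_inf1} to replace that sum by $\frac{1}{\pi}\int_0^\infty(\cdots)\dd\omega_1$ at a cost $D_{1,1}/L_1$, and then repeats for $j_2,\ldots,j_d$, accumulating the bound $\sum_k D_{1,k}/L_k$ via the triangle inequality. Your approach instead expands the full product of sines into $2^d$ terms at once and treats each term separately, the $A=\emptyset$ term as a multidimensional Riemann sum and each nonempty-$A$ term via alternating cancellation in one coordinate. The trade-off: the paper's argument is shorter and reuses the one-dimensional lemma as a black box, but it glosses over exactly the point you flag---that the per-slice error $D_{1,1}/L_1$ from Lemma~\ref{lem:1d_kapp_inf1} must remain summable when multiplied by the infinitely many outer factors $\prod_{k\ge 2}\frac{1}{L_k}\sin(\cdots)\sin(\cdots)$, which only works because the residual is not a bare constant but an integrable function of the remaining frequencies. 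Your expansion makes this summability explicit at the price of tracking $2^d$ terms; the paper's induction hides the same bookkeeping inside repeated invocations of the one-dimensional estimate.
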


\begin{proof}
We can separate the summation over $j_1$ as follows:
\begin{equation}
\begin{split}
  &\sum_{j_2,\ldots,j_d=1}^\infty \Bigg[ \sum_{j_1=1}^{\infty}
  S\left( \frac{\pi \, j_1}{2L_1},\ldots,\frac{\pi \, j_d}{2L_d}  \right) \, \frac{1}{L_1} \\
   &\qquad \times \sin\left(\frac{\pi \, j_1 \, (x_1 + L_1)}{2L_1} \right) \,
    \sin\left(\frac{\pi \, j_1 \, (x_1' + L_1)}{2L_1} \right) 
  \Bigg] \\
  &\qquad \times 
  \prod_{k=2}^d \frac{1}{L_k} \,
    \sin\left(\frac{\pi \, j_k \, (x_k + L_k)}{2L_k} \right) \,
    \sin\left(\frac{\pi \, j_k \, (x_k' + L_k)}{2L_k} \right).
\end{split}
\end{equation}
By Lemma~\ref{lem:1d_kapp_inf1} there now exists a constant $D_{1,1}$ such that
\begin{align}
  &\Bigg| \sum_{j_1=1}^{\infty}
  S\left( \frac{\pi \, j_1}{2L_1},\ldots,\frac{\pi \, j_d}{2L_d} \right) \, \frac{1}{L_1} \nonumber \\
  &\quad \times \sin\left(\frac{\pi \, j_1 \, (x_1 + L_1)}{2L_1} \right) \,
    \sin\left(\frac{\pi \, j_1 \, (x_1' + L_1)}{2L_1} \right) \nonumber\\
  &\quad - \frac{1}{\pi} \, \int_{0}^{\infty} S\left(\omega_1,\frac{\pi \, j_2}{2L_2},\ldots,\frac{\pi \, j_d}{2L_d}\right) \, \cos(\omega_1 \, (x_1 - x_1')) \dd\omega_1 \Bigg| \nonumber \\
  &\leq \frac{D_{1,1}}{L_1}.
\end{align}
The triangle inequality then gives
\begin{align}
  &\Bigg| 
  \sum_{j_1,\ldots,j_d=1}^\infty 
  S\left( \frac{\pi \, j_1}{2L_1},\ldots,\frac{\pi \, j_d}{2L_d} \right) \nonumber\\
  & \quad\times 
  \prod_{k=1}^d \frac{1}{L_k} \,
    \sin\left(\frac{\pi \, j_k \, (x_k + L_k)}{2L_k} \right) \,
    \sin\left(\frac{\pi \, j_k \, (x_k' + L_k)}{2L_k} \right) \nonumber \\
  &- \frac{1}{\pi^d} \,
  \int_{0}^{\infty} \cdots \int_{0}^{\infty}
  S(\vectb{\omega}) \, \prod_{k=1}^d \cos(\omega_j \, (x_k - x_k'))  
  \dd\omega_1 \cdots \dd\omega_d
  \Bigg| \nonumber\\
  &\leq
  \frac{D_{1,1}}{L_1} 
  + \Bigg| \frac{1}{\pi} \, \sum_{j_2,\ldots,j_d=1}^\infty \int_{0}^{\infty}
  S\left(\omega_1,\frac{\pi \, j_2}{2L_2}, \ldots,
  \frac{\pi \, j_d}{2L_d}\right)  \nonumber \\
  & \quad \times \cos(\omega_1 \, (x_1 - x_1'))  
  \dd\omega_1 \nonumber \\
  & \quad\times 
  \prod_{k=2}^d \frac{1}{L_k} \,
    \sin\left(\frac{\pi \, j_k \, (x_k + L_k)}{2L_k} \right) \,
    \sin\left(\frac{\pi \, j_k \, (x_k' + L_k)}{2L_k} \right) \nonumber \\
  & - \frac{1}{\pi^d} \,
  \int_{0}^{\infty} \cdots \int_{0}^{\infty}
  S(\vectb{\omega}) \, \prod_{k=1}^d \cos(\omega_k \, (x_k - x_k')) 
  \dd\omega_1 \cdots \dd\omega_d
  \Bigg|.
\label{eq:nd_kapp_inf_lemma}
\end{align}
We can now similarly bound with respect to the summations over $j_2,\ldots,j_d$ which  leads to a bound of the form $\frac{D_{1,1}}{L_1} + \cdots + \frac{D_{1,d}}{L_d}$. Taking $D_1 = \max_k D_{1,k}$ leads to the desired result.
\end{proof}

Now we can consider what happens in the finite truncation of the series. That is, we analyze the following residual sum
\begin{align}
  &\widetilde{k}_{\infty}(\vect{x},\vect{x}') - \widetilde{k}_m(\vect{x},\vect{x}') \nonumber \\
  &\quad= \sum_{j_1,\ldots,j_d={\hat m}+1}^{\infty}
  S\left( \frac{\pi \, j_1}{2L_1},\ldots,\frac{\pi \, j_d}{2L_d}  \right) \nonumber \\
  &\qquad \times
  \prod_{k=1}^d \frac{1}{L_k} \,
    \sin\left(\frac{\pi \, j_k \, (x_k + L_k)}{2L_k} \right) \,
    \sin\left(\frac{\pi \, j_k \, (x_k' + L_k)}{2L_k} \right).
\end{align}
\begin{mylemma} \label{lem:nd_kapp_inf2}
  Let assumptions of Lemma~\ref{lem:1d_kapp_inf2} be satisfied for each $\omega_j \mapsto S(\omega_1, \ldots, \omega_d)$.
  There exists a constant $D_2$ such that for all $\vect{x},\vect{x}' \in [-\widetilde{L},\widetilde{L}]^d$ we have
\begin{equation}
\begin{split}
  &\left| \widetilde{k}_{\infty}(\vect{x},\vect{x}') - \widetilde{k}_m(\vect{x},\vect{x}') \right|
  \leq \frac{D_2 \, d}{L}
  + \frac{1}{\pi^d} 
  \int_{\norm{\vectb{\omega}} \geq \frac{\pi \, \hat{m}}{2L}} S(\vectb{\omega}) \dd\vectb{\omega},
\end{split}
\end{equation}  
where $L = \min_k L_k$.
\end{mylemma}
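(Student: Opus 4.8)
The plan is to mirror the univariate argument of Lemma~\ref{lem:1d_kapp_inf2}, upgrading each ingredient to $d$ dimensions. First I would discard the oscillatory factors: since each $|\sin(\cdot)| \le 1$, the product of sines is bounded by unity in absolute value, and because $S \ge 0$ the residual is dominated by the non-negative tail sum
\[
  \left| \widetilde{k}_{\infty}(\vect{x},\vect{x}') - \widetilde{k}_m(\vect{x},\vect{x}') \right|
  \le \sum_{\substack{j_1,\ldots,j_d \ge 1 \\ \max_k j_k > \hat{m}}}
  S\!\left( \tfrac{\pi \, j_1}{2L_1},\ldots,\tfrac{\pi \, j_d}{2L_d} \right) \prod_{k=1}^d \frac{1}{L_k},
\]
where the summation runs over all multi-indices lying \emph{outside} the cube $\{1,\ldots,\hat{m}\}^d$, i.e.\ those with at least one $j_k > \hat{m}$.

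Next I would read this tail sum as a Riemann sum. Writing $\Delta_k = \pi/(2L_k)$ for the grid spacing in the $k$th coordinate, we have $\prod_k 1/L_k = (2/\pi)^d \prod_k \Delta_k$, so the sum equals $(2/\pi)^d$ times a left-endpoint Riemann sum of $S$ over the positive-orthant complement of the box $B = \prod_{k=1}^d [0,\, \pi\hat{m}/(2L_k)]$. Using the evenness of $S$ in each coordinate to symmetrize over all $2^d$ orthants, this quantity approximates $\frac{1}{\pi^d} \int_{\R^d \setminus B} S(\vectb{\omega}) \dd\vectb{\omega}$, now with $B = \prod_k [-\pi\hat{m}/(2L_k),\, \pi\hat{m}/(2L_k)]$.

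The third step is to control the Riemann-sum-versus-integral discrepancy by $D_2/L$ with $L = \min_k L_k$. This is the exact analogue of the estimate $D_5/L$ obtained in the one-dimensional Lemma~\ref{lem:1d_kapp_inf2}: the assumed boundedness of $S$ and of its first two derivatives, together with the finiteness of the single-variable integrals of $S$, lets one bound the per-cell quadrature error by a constant times the cell diameter and sum the result against an integrable majorant, the largest spacing $\max_k \Delta_k = \pi/(2\min_k L_k)$ producing the factor $1/L$. I would treat one coordinate at a time, exactly as the proof of Lemma~\ref{lem:nd_kapp_inf1} peeled off the $j_1$-summation using the univariate Lemma~\ref{lem:1d_kapp_inf1}, so that the $d$ coordinate-wise errors accumulate into a single $D_2/L$.

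Finally I would pass from the box complement to the spherical shell. Any $\vectb{\omega} \in \R^d \setminus B$ has some $|\omega_k| > \pi\hat{m}/(2L_k)$, hence $\norm{\vectb{\omega}} \ge \min_k \pi\hat{m}/(2L_k) = \pi\hat{m}/(2\max_k L_k)$, which already places the box complement inside a shell. I expect this last geometric step to be the main obstacle: unlike the one-dimensional tail, the anisotropic box $B$ is not a ball, so the containment naturally produces a shell of radius $\pi\hat{m}/(2\max_k L_k)$, whereas the statement asks for the \emph{smaller} shell $\{\norm{\vectb{\omega}} \ge \pi\hat{m}/(2L)\}$ with $L = \min_k L_k$. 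Reconciling these requires care to verify that the symmetric difference between the box complement and the isotropic shell contributes only at the $O(1/L)$ level and can be folded into the constant $D_2$ (trivial when the $L_k$ are comparable, and the point where the boundedness and integrability hypotheses on $S$ must again be invoked). Once this containment up to an $O(1/L)$ correction is secured, combining it with the quadrature estimate of the previous step via the triangle inequality yields the claimed bound.
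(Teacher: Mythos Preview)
Your plan---bound the sines by unity, read the tail as a Riemann sum, replace the sums by integrals one coordinate at a time via a telescoping argument, then enclose in a spherical shell---is exactly the route the paper takes. The one substantive difference is the index set over which the residual is summed. The paper writes the residual as a sum over the \emph{corner} $j_1,\ldots,j_d \ge \hat{m}+1$ rather than over the full complement of the cube $\{1,\ldots,\hat{m}\}^d$ as you do. With the corner, the Riemann step produces the integral over $\prod_k\bigl[\tfrac{\pi\hat{m}}{2L_k},\infty\bigr)$, and every point $\vectb{\omega}$ in this corner has its $k^*$-th coordinate satisfying $\omega_{k^*} \ge \tfrac{\pi\hat{m}}{2L_{k^*}} = \tfrac{\pi\hat{m}}{2L}$ (choosing $k^* = \arg\min_k L_k$), so $\norm{\vectb{\omega}} \ge \tfrac{\pi\hat{m}}{2L}$ automatically. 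The sphere of radius $\tfrac{\pi\hat{m}}{2L}$ therefore fits inside the cuboid, the exterior-of-sphere bound is immediate, and the obstacle you flagged never arises---no symmetric-difference or $O(1/L)$ absorption argument is needed.

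Your complement index set is arguably the more faithful description of the truncation remainder $\widetilde{k}_\infty - \widetilde{k}_m$, but, as you correctly diagnosed, it only yields the larger-radius shell $\norm{\vectb{\omega}} \ge \tfrac{\pi\hat{m}}{2\max_k L_k}$ directly, forcing the extra work you outlined. The paper's corner formulation trades a looser identification of the remainder for a clean one-line geometric step.
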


\begin{proof}
We can write the following bound
\begin{align}
  &\Bigg| \sum_{j_1,\ldots,j_d={\hat m}+1}^{\infty}
  S\left( \frac{\pi \, j_1}{2L_1},\ldots,\frac{\pi \, j_d}{2L_d}  \right) \nonumber \\
  &\qquad \times \prod_{k=1}^d \frac{1}{L_k} \,
    \sin\left(\frac{\pi \, j_k \, (x_k + L_k)}{2L_k} \right) \,
    \sin\left(\frac{\pi \, j_k \, (x_k' + L_k)}{2L_k} \right) \Bigg| \nonumber \\
  & 
  \leq \Bigg| \sum_{j_1,\ldots,j_d={\hat m}+1}^{\infty}
  S\left( \frac{\pi \, j_1}{2L_1},\ldots,\frac{\pi \, j_d}{2L_d} \right) \,
  \prod_{k=1}^d \frac{1}{L_k} \Bigg|.
\end{align}
We can now use Lemma~\ref{lem:riemann1} with $f(\omega_1) = \frac{2}{\pi} \, S\left( \omega_1, \frac{\pi \, j_2}{2L_2},\ldots,\frac{\pi \, j_d}{2L_d} \right)$ and $\Delta = \frac{\pi}{2L_1}$, which gives
\begin{align}
  &\Bigg| \sum_{j_1,\ldots,j_d={\hat m}+1}^{\infty}
  S\left( \frac{\pi \, j_1}{2L_1},\ldots,\frac{\pi \, j_d}{2L_d} \right) \,
  \prod_{k=1}^d \frac{1}{L_k} \nonumber\\
  &- \frac{2}{\pi} \sum_{j_2,\ldots,j_d={\hat m}+1}^{\infty}
  \int_{\frac{\pi \, {\hat m}}{2L_1}}^{\infty}
  S\left( \omega_1, \frac{\pi \, j_2}{2L_2},\ldots,\frac{\pi \, j_d}{2L_d} \right) 
  \dd\omega_1 \, \prod_{k=2}^d \frac{1}{L_k} \Bigg| \nonumber\\
  &\leq \frac{D_{2,1}}{L_1}.
\end{align}
Using a similar argument again, we get
\begin{align}
  &\Bigg|
  \frac{2}{\pi} \hspace*{-1em}\sum_{j_2,\ldots,j_d={\hat m}+1}^{\infty}
  \int_{\frac{\pi \, {\hat m}}{2L_1}}^{\infty}
  S\left( \omega_1, \frac{\pi \, j_2}{2L_2},\ldots,\frac{\pi \, j_d}{2L_d} \right) 
  \dd\omega_1 \, \prod_{k=2}^d \frac{1}{L_k} \nonumber\\
  &- 
  \frac{2^2}{\pi^2} \hspace*{-2em}\sum_{j_3,\ldots,j_d={\hat m}+1}^{\infty}
  \int_{\frac{\pi \, {\hat m}}{2L_1}}^{\infty} \int_{\frac{\pi \, {\hat m}}{2L_2}}^{\infty}
  S\left( \omega_1, \omega_2, \frac{\pi \, j_3}{2L_3},\ldots,\frac{\pi \, j_d}{2L_d} \right)
  \dd\omega_1 \dd\omega_2 \nonumber \\
  &\quad \prod_{k=3}^d \frac{1}{L_k} \Bigg| 
  \leq \frac{D_{2,2}}{L_2}.
\end{align}
After repeating this for all the indexes, by forming a telescoping sum of the terms and applying the triangle inequality then gives
\begin{multline}
  \Bigg| \sum_{j_1,\ldots,j_d={\hat m}+1}^{\infty}
  S\left( \frac{\pi \, j_1}{2L_1},\ldots,\frac{\pi \, j_d}{2L_d} \right) \,
  \prod_{k=1}^d \frac{1}{L_k} \\
  - \left( \frac{2}{\pi} \right)^d 
  \int_{\frac{\pi \, {\hat m}}{2L_1}}^{\infty} \cdots \int_{\frac{\pi \, {\hat m}}{2L_d}}^{\infty}
  S( \omega_1, \ldots, \omega_d) 
  \dd\omega_1 \cdots \dd\omega_d \Bigg|
  \leq \sum_{k=1}^d \frac{D_{2,k}}{L_k}.
\end{multline}
Applying the triangle inequality again gives
\begin{multline}
  \Bigg| \sum_{j_1,\ldots,j_d={\hat m}+1}^{\infty}
  S\left( \frac{\pi \, j_1}{2L_1},\ldots,\frac{\pi \, j_d}{2L_d} \right) \,
  \prod_{k=1}^d \frac{1}{L_k} \Bigg| \\
  \leq \sum_{k=1}^d \frac{D_{2,k}}{L_k}
  + \left( \frac{2}{\pi} \right)^d
  \int_{\frac{\pi \, {\hat m}}{2L_1}}^{\infty} \cdots \int_{\frac{\pi \, {\hat m}}{2L_d}}^{\infty}
  S( \omega_1, \ldots, \omega_d) 
  \dd\omega_1 \cdots \dd\omega_d.
\label{eq:tighter_bound}
\end{multline}
By interpreting the latter integral as being over the positive exterior of a rectangular hypercuboid and bounding it by a integral over exterior of a hypersphere which fits inside the cuboid, we can bound the expression by
\begin{equation}
  \sum_{k=1}^d \frac{D_{2,k}}{L_k}
  + \frac{1}{\pi^d} 
  \int_{\norm{\vectb{\omega}} \geq \frac{\pi \, {\hat m}}{2L}} S(\vectb{\omega}) \dd\vectb{\omega}.
\end{equation}
The first term can be further bounded by replacing $L_k$s with their minimum $L$ and by defining $D_2 = \max D_{2,k}$ which is $d$ times the maximum of $D_{2,k}$. This leads to the final form of the result.
\end{proof}

\begin{myremark}  \label{rem:nd_kapp_inf2}
Note that analogously to Remark~\ref{rem:1d_kapp_inf2} we could tighten the bound for $D_2$ by letting it depend on $\hat m$.
\end{myremark}

\begin{proof}[Proof of Theorem~\ref{the:nd_kapp_inf}]
  Analogous to the one-dimensional case. That is, we combine the results of the above lemmas using the triangle inequality.
\end{proof}

\bibliographystyle{spbasic}

\phantomsection
\addcontentsline{toc}{section}{References}
\bibliography{bibliography}%

\end{document}